\documentclass[twoside]{article}

\usepackage{amsmath}
\usepackage{amsthm}
\usepackage{amssymb}
\usepackage{mathtools}
\usepackage{mathrsfs}
\usepackage{upgreek}
\usepackage{mleftright}
\usepackage[pdftex,pdfstartview=FitB]{hyperref}
\usepackage[sf,bf]{titlesec}
\usepackage{bm}
\usepackage{bbold}
\usepackage{euscript}
\usepackage{url}            
\usepackage{booktabs}       
\usepackage{amsfonts}       
\usepackage{nicefrac}       
\usepackage{microtype}      
\usepackage{mathrsfs}
\usepackage{color}
\usepackage{tabu}
\usepackage{multirow}
\usepackage{float}
\usepackage{booktabs}
\usepackage{algorithm}
\usepackage{algorithmic}
\usepackage{color}
\usepackage{tabu}
\usepackage{comment}
\usepackage{graphicx}
\usepackage{epstopdf}
\usepackage{subcaption}
\usepackage{framed}
\usepackage{enumitem}
\usepackage{textcomp}
\usepackage{setspace}
\usepackage{cleveref}
\usepackage{latexsym}
\usepackage{tcolorbox}
\usepackage[round,sort]{natbib}
\usepackage{alltt}
\usepackage{stmaryrd}
\usepackage{xspace}
\usepackage{euscript}
\usepackage{xfrac}    
\usepackage{nicefrac} 
\usepackage{tikz}
\usepackage{tikz-qtree}
\usepackage{todonotes}

\usepackage[margin=1in]{geometry}
\usepackage{fancyhdr}
\pagestyle{fancy}

\fancyhead[EL,OL]{}
\fancyhead[ER,OR]{} 
\fancyfoot[C]{\textsf{\thepage}}
\fancypagestyle{plain}{\fancyhf{} 
	
	\fancyfoot[C]{\textsf{\thepage}}
}

\definecolor{mydarkblue}{rgb}{0,0.08,0.45}
\hypersetup{ %
	colorlinks=true,
	linkcolor=mydarkblue,
	citecolor=mydarkblue,
	filecolor=mydarkblue,
	urlcolor=mydarkblue}

\newcommand{\scrC}{\mathscr{C}}

\newcommand{\euB}{\EuScript{B}}
\newcommand{\euC}{\EuScript{C}}

\newcommand{\euF}{\EuScript{F}}
\newcommand{\euG}{\EuScript{G}}
\newcommand{\euH}{\EuScript{H}}

\newcommand{\euO}{\EuScript{O}}

\newcommand{\euU}{\EuScript{U}}

\newcommand{\euX}{\EuScript{X}}
\newcommand{\euY}{\EuScript{Y}}

\newcommand{\Ex}{\mathbb{E}}

\newcommand{\zero}{\bm{0}}
\newcommand{\One}{\bm{1}}

\newcommand{\RR}{\mathbb{R}}

\newcommand{\Rp}{\RR_+}

\newcommand{\bbS}{\mathbb{S}}

\newcommand{\NN}{\mathbb{N}}

\newcommand{\diff}{\mathrm{d}}
\DeclareMathOperator{\prox}{prox}
\DeclareMathOperator{\rprox}{rprox}
\DeclareMathOperator{\proj}{proj}

\newcommand{\Id}{\mathrm{Id}}
\newcommand{\e}{\mathrm{e}}

\newcommand{\bM}{\bm{M}}

\newcommand{\ba}{\bm{a}}
\newcommand{\bb}{\bm{b}}

\newcommand{\bx}{\bm{x}}
\newcommand{\by}{\bm{y}}

\newcommand{\bbeta}{\bm{\beta}}

\DeclareMathOperator*{\argmin}{argmin}

\DeclareMathOperator*{\Diag}{Diag}

\newcommand{\sign}{\mathrm{sign}}
\DeclareMathOperator*{\dom}{dom}
\DeclareMathOperator*{\interior}{int}

\newcommand{\bxi}{\bm{\xi}}

\newcommand{\tx}{\widetilde{x}}

\newcommand{\sumN}{\sum_{n=1}^N}
\newcommand{\sumK}{\sum_{k=1}^K}

\newcommand{\sumd}{\sum_{i=1}^d}

\newcommand{\balpha}{\bm{\alpha}}

\newcommand{\btheta}{\bm{\theta}}
\newcommand{\bsigma}{\bm{\sigma}}

\newcommand{\bvartheta}{\bm{\vartheta}}

\newcommand{\sfA}{\mathsf{A}}

\newcommand{\sfD}{\mathsf{D}}

\newcommand{\sfI}{\mathsf{I}}

\newcommand{\sfN}{\mathsf{N}}

\newcommand{\sfW}{\mathsf{W}}

\newcommand{\oRR}{\overline{\RR}}

\newcommand{\ty}{\widetilde{y}}

\newcommand{\iiddist}{\stackrel{\text{i.i.d.}}{\sim}}
\renewcommand{\mid}{\,|\,}
\newcommand{\midd}{\,|\kern-0.25ex|\,}

\newcommand{\setd}{\llbracket d\rrbracket}

\newcommand{\set}[1]{\llbracket #1\rrbracket}
\newcommand{\KL}{\mathrm{KL}}
\newcommand{\TV}{\mathrm{TV}}
\newcommand{\dotp}[2]{\left\langle #1, #2\right\rangle}

\newcommand{\RPP}{\ensuremath{\left]0,+\infty\right[}}
\newcommand{\RP}{\ensuremath{\left[0,+\infty\right[}}

\newcommand{\env}{\mathrm{env}}
\newcommand{\lenv}{\overleftarrow{\env}}
\newcommand{\renv}{\overrightarrow{\env}}
\newcommand{\lprox}[2]{\overleftarrow{\operatorname{P}}_{\negthinspace\negthinspace #1}^{#2}}
\renewcommand{\rprox}[2]{\overrightarrow{\operatorname{P}}_{\negthinspace\negthinspace #1}^{#2}}
\newcommand{\bprox}[2]{\operatorname{P}_{\negthinspace\negthinspace #1}^{#2}}

\newcommand{\lU}[2]{\overleftarrow{U}_{\negthinspace\negthinspace #1}^{#2}}
\newcommand{\rU}[2]{\overrightarrow{U}_{\negthinspace\negthinspace #1}^{#2}}
\newcommand{\lpi}[2]{\overleftarrow{\pi}_{\negthinspace\negthinspace #1}^{#2}}
\newcommand{\rpi}[2]{\overrightarrow{\pi}_{\negthinspace\negthinspace #1}^{#2}}
\newcommand{\Lip}{\mathrm{Lip}}

\DeclareMathOperator*{\arsinh}{arsinh}

\DeclareMathAlphabet\rsfscr{U}{rsfso}{m}{n}

\theoremstyle{plain}
\newtheorem{theorem}{Theorem}[section]
\newtheorem{proposition}[theorem]{Proposition}

\newtheorem{corollary}[theorem]{Corollary}
\theoremstyle{definition}
\newtheorem{definition}[theorem]{Definition}
\newtheorem{assumption}[theorem]{Assumption}
\theoremstyle{remark}
\newtheorem{remark}[theorem]{Remark}

\crefname{assumption}{Assumption}{Assumptions}
\Crefname{assumption}{Assumption}{Assumptions}

\crefname{problem}{Problem}{Problems}
\Crefname{problem}{Problem}{Problems}


\let\le\leqslant
\let\ge\geqslant

\let\tilde\widetilde
\let\bar\overline

\makeatletter
\DeclareFontFamily{OMX}{MnSymbolE}{}
\DeclareSymbolFont{MnLargeSymbols}{OMX}{MnSymbolE}{m}{n}
\SetSymbolFont{MnLargeSymbols}{bold}{OMX}{MnSymbolE}{b}{n}
\DeclareFontShape{OMX}{MnSymbolE}{m}{n}{
	<-6>  MnSymbolE5
	<6-7>  MnSymbolE6
	<7-8>  MnSymbolE7
	<8-9>  MnSymbolE8
	<9-10> MnSymbolE9
	<10-12> MnSymbolE10
	<12->   MnSymbolE12
}{}
\DeclareFontShape{OMX}{MnSymbolE}{b}{n}{
	<-6>  MnSymbolE-Bold5
	<6-7>  MnSymbolE-Bold6
	<7-8>  MnSymbolE-Bold7
	<8-9>  MnSymbolE-Bold8
	<9-10> MnSymbolE-Bold9
	<10-12> MnSymbolE-Bold10
	<12->   MnSymbolE-Bold12
}{}

\let\llangle\@undefined
\let\rrangle\@undefined
\DeclareMathDelimiter{\llangle}{\mathopen}%
{MnLargeSymbols}{'164}{MnLargeSymbols}{'164}
\DeclareMathDelimiter{\rrangle}{\mathclose}%
{MnLargeSymbols}{'171}{MnLargeSymbols}{'171}
\makeatother

\newcommand{\norm}[1]{\left\lVert#1\right\rVert}
\newcommand{\euclidnorm}[1]{\left\lVert#1\right\rVert_2}
\newcommand{\vecnorm}[2]{\left\| #1 \right\|_{{#2}}}
\newcommand{\matsnorm}[2]{|\kern-0.25ex|\kern-0.25ex| #1 |\kern-0.25ex|\kern-0.25ex|_{{#2}}}

\newcommand{\fronorm}[1]{\matsnorm{#1}{\mathrm{F}}}

\newcommand{\onenorm}[1]{\vecnorm{#1}{1}}

\renewcommand{\left}{\mleft}
\renewcommand{\right}{\mright}

\begin{document}
	\title{\sffamily Bregman Proximal Langevin Monte Carlo \\via Bregman--Moreau Envelopes}
	\author{
		Tim Tsz-Kit Lau
		\thanks{
			Department of Statistics and Data Science, Northwestern University, Evanston, IL 60208, USA; Email: \href{mailto:timlautk@u.northwestern.edu}{\texttt{timlautk@u.northwestern.edu}}. } 	
		\and 
		Han Liu\thanks{Department of Computer Science and Department of Statistics and Data Science, Northwestern University, Evanston, IL 60208, USA; Email: \href{mailto:hanliu@northwestern.edu}{\texttt{hanliu@northwestern.edu}}. } 
	}
	\date{}
	
	\maketitle

	\begin{abstract}
		We propose efficient Langevin Monte Carlo algorithms for sampling distributions with nonsmooth convex composite potentials, which is the sum of a continuously differentiable function and a possibly nonsmooth function. We devise such algorithms leveraging recent advances in convex analysis and optimization methods involving Bregman divergences, namely the Bregman--Moreau envelopes and the Bregman proximity operators, and in the Langevin Monte Carlo algorithms reminiscent of mirror descent. The proposed algorithms extend existing Langevin Monte Carlo algorithms in two aspects---the ability to sample nonsmooth distributions with mirror descent-like algorithms, and the use of the more general Bregman--Moreau envelope in place of the Moreau envelope as a smooth approximation of the nonsmooth part of the potential. A particular case of the proposed scheme is reminiscent of the Bregman proximal gradient algorithm. The efficiency of the proposed methodology is illustrated with various sampling tasks at which existing Langevin Monte Carlo methods are known to perform poorly. 
	\end{abstract}

	\section{Introduction}
	The problem of sampling efficiently from high-dimensional log-Lipschitz-smooth and (strongly) log-concave target distributions via discretized Langevin diffusions has been extensively studied in the machine learning and statistics literature lately. A thorough understanding of the nonasymptotic convergence properties of Langevin Monte Carlo (LMC) has been developed, where the log-Lipschitz-smoothness and (strong) log-concavity of the density play a vital role in characterizing its convergence rates. However, such conditions are not always satisifed in applications and there is recent effort to move beyond such scenarios. 	
	On the other hand, since the efficiency of LMC algorithms in the stanard Euclidean space heavily hinges on the shape of the target distributions, algorithms based on Riemannian Langevin diffusions \citep{girolami2011riemann} are considered in the case of ill-conditioned target distributions to exploit the local geometry of the log-density. However, algorithms derived by discretizing such Riemannian Langevin diffusions are notoriously hard to analyze, depending on the choice of the Riemannian metric. 
	
	In this paper, we propose two Riemannian LMC algorithms based on Bregman divergences to efficiently sample from high-dimensional distributions whose potentials (i.e., negative log-densities) are possibly not strongly convex nor (globally) Lipschitz smooth in the standard Euclidean geometry, but only strongly convex and Lipschitz smooth relative to a Legendre function subsequent to a smooth approximation. To be more precise, potentials can take the form of the sum of a relatively smooth part and a nonsmooth part (which includes the convex indicator function of a closed convex set) in the standard Euclidean geometry. A smooth approximation of the nonsmooth part based on the Bregman divergence is used and we instead sample from the smoothened distribution. By tuning a parameter of the smooth approximation, such a smoothened distribution is sufficiently close to the original target distribution. On the other hand, motivated by the connection between Langevin algorithms and convex optimization, the proposed algorithms can be viewed as the sampling analogue of the Bregman proximal gradient algorithm \citep{van2017forward,bauschke2017descent,bolte2018first,bui2021bregman,chizat2021convergence} (cf.~mirror descent in the smooth case), in which Riemannian structures of the algorithms are induced by the Hessian of some Legendre function. This specific choice of the Riemannian metric also offers us a principled way to analyze the behavior of the proposed algorithms.

	\subsection{Langevin and Mirror-Langevin Monte Carlo Algorithms}
	We consider the problem of sampling from a probability measure $\pi$ on $(\RR^d, \euB(\RR^d))$ which admits a density, with slight abuse of notation, also denoted by $\pi$, with respect to the Lebesgue measure 
	\begin{equation}\label{eqn:Leb}
		(\forall x\in\RR^d) \quad \pi(x) = \left.\e^{-U(x)} \,\middle/\, \int_{\RR^d} \e^{-U(y)}\,\diff y \right. , 
	\end{equation}
	where the \emph{potential} $U \colon \RR^d\to\RR\cup\{+\infty\}$ is measurable and we assume that $0<\int_{\euU} \e^{-U(y)}\,\diff y<+\infty$ for $\euU\coloneqq\dom U$. We also write $\pi\propto \e^{-U}$ for \eqref{eqn:Leb}. Usually, the number of dimensions $d \gg 1$. 
	
	To perform such a sampling task, the LMC algorithm \citep[see e.g.,][]{dalalyan2017theoretical} is arguably the most widely-studied gradient-based MCMC algorithm, which takes the form 
	\begin{equation}\label{eqn:ULA}
		(\forall k\in\NN) \quad x_{k+1} = x_k - \gamma\nabla U(x_k) + \sqrt{2\gamma}\,\xi_k, 
	\end{equation}
	where $\xi_k\iiddist\sfN_d(0, \sfI_d)$ for all $k\in\NN$ and $\gamma>0$ is a step size. Possibly with varying step sizes, the LMC algorithm is also referred to as the \emph{unadjusted} Langevin algorithm \citep[ULA;][]{durmus2017nonasymptotic} in the literature, while applying a Metropolis--Hastings correction step at each iteration of \eqref{eqn:ULA} the algorithm is often referred to as the \emph{Metropolis-adjusted} Langevin algorithm \citep[MALA;][]{roberts1996exponential}. 
	ULA is the discretization of the \emph{overdamped} Langevin diffusion, which is the solution to the stochastic differential equation (SDE) 
	\begin{equation}\label{eqn:langevin}
		(\forall t\in\RP)\quad\diff X_t = -\nabla U(X_t)\,\diff t + \sqrt{2}\,\diff W_t, 
	\end{equation}
	where $\{W_t\}_{t\in\RP}$ is a $d$-dimensional standard Wiener process (a.k.a.~Brownian motion). When $U$ is Lipschitz smooth and strongly convex, it is well known that $\pi$ has the \emph{unique} invariant measure, which is the Gibbs measure $X_\infty \propto \e^{-U}$.       
	Under such (or weaker) conditions of $U$, nonasymptotic error bounds of ULA in terms of various disimilarity measures of probability measures, e.g., total variation and Wasserstein distances, and KL, $\chi^2$- and R\'enyi divergences, are well studied and established \citep[see e.g.,][]{dalalyan2017further,durmus2017nonasymptotic,durmus2019high,durmus2019analysis,vempala2019rapid}.   
	To move beyond the Lipschitz smoothness assumption, we consider the case of a possibly nonsmooth composite potential $U$, which takes the following form 
	\begin{equation}\label{eqn:add_composite}
		(\forall x\in\RR^d) \quad U(x) \coloneqq f(x) + g(x), 
	\end{equation}
	where $f\in\Gamma_0(\RR^d)$ is continuously differentiable but possibly not globally Lipschitz smooth (i.e., do not admit a  globally Lipschitz gradient) and $g\in\Gamma_0(\RR^d)$ is possibly nonsmooth (see \Cref{subsec:notation} for the definition of $\Gamma_0(\RR^d)$). 
	
	To demonstrate the sampling counterpart of mirror descent, we consider the smooth case (i.e., $g=0$) which is well studied in the literature. Introduced in \citet{zhang2020wasserstein}, under certain assumptions on $U$, the mirror-Langevin diffusion (MLD) takes the form: for $ t\in\RP $, 
	\begin{equation}\label{eqn:MLD}
		\begin{cases}
			X_t = \nabla \varphi^*(Y_t), \\
			\diff Y_t = -\nabla U(X_t)\,\diff t + \sqrt{2} \left[ \nabla^2 \varphi(X_t) \right]^{\sfrac{1}{2}}\,\diff W_t, 
		\end{cases}
	\end{equation}
	where $\varphi$ is a Legendre function and $\varphi^*$ is the Fenchel conjugate of $\varphi$ (see \Cref{def:conjugate}). 
	An Euler--Maruyama discretization scheme yields the Hessian Riemannian LMC (HRLMC) algorithm: for $k \in\NN$, 
	\begin{equation}\label{eqn:HRLMC}
		x_{k+1} = \nabla\varphi^*\Big(\nabla\varphi(x_k) - \gamma\nabla U(x_k)
		\left. + \sqrt{2\gamma}\left[ \nabla^2 \varphi(x_k) \right]^{\sfrac{1}{2}}\xi_k \right). 
	\end{equation}
	This is the main discretization scheme considered in \citet{zhang2020wasserstein} and an earlier draft of \citet{hsieh2018mirrored}, and further studied in \citet{li2022mirror}, which is a specific instance of the Riemannian LMC reminiscent of the mirror descent algorithm. \citet{ahn2021efficient} consider an alternative discretization scheme motivated by the mirrorless mirror descent \citep{gunasekar2021mirrorless}, called the mirror-Langevin algorithm (MLA): 
	\begin{equation}\label{eqn:MLA}
		(\forall k\in\NN)\quad
		\begin{aligned}
			x_{k+\sfrac{1}{2}} &= \nabla\varphi^* \left(\nabla\varphi(x_k) -\gamma\nabla U(x_k) \right), \\
			x_{k+1} &= \nabla\varphi^*(Y_{\gamma_k}), 			
		\end{aligned}
	\end{equation}	
	where 
	\begin{equation}
		\begin{cases}
			\diff Y_t = \sqrt{2} \left[ \nabla^2 \varphi^*(Y_t) \right]^{-\sfrac{1}{2}}\,\diff W_t\\
			Y_0 = \nabla\varphi\left(x_{k+\sfrac{1}{2}}\right) = \nabla\varphi(x_k) -\gamma\nabla U(x_k). 
		\end{cases} 
	\end{equation}
	However, the mirror descent-type Langevin algorithms in \citet{hsieh2018mirrored,zhang2020wasserstein,ahn2021efficient} can only handle relatively smooth potentials (to a Legendre function; see \Cref{def:rel_smooth}) but not potentials with relatively smooth plus nonsmooth parts \eqref{eqn:add_composite} where $g\ne0$.

	\subsection{Contributions} 
	We fill this void by extending HRLMC in the following aspects: (i) the target potential $U$ takes the form \eqref{eqn:add_composite}, i.e., $U = f + g$, where $f$ is continuously differentiable but possibly not Lipschitz smooth yet smooth relative to a Legendre function $\varphi$, and $g$ is possibly nonsmooth; (ii) the nonsmooth part $g$ is enveloped by its continuously differentiable approximation, which is the Bregman--Moreau envelope \citep{kan2012moreau,bauschke2018regularizing,laude2020bregman,soueycatt2020regularization,bauschke2006joint,chen2012moreau}, in the same vein as using the Moreau envelope \citep{moreau1962fonctions,moreau1965proximite} in \citet{brosse2017sampling,durmus2018efficient,luu2021sampling}, so that we can adapt recent convergence results for mirror-Langevin algorithms for relatively smooth potentials \citep{zhang2020wasserstein,ahn2021efficient,li2022mirror,jiang2021mirror}.
	
	The proposed sampling algorithm can be viewed as a generalized version of the Moreau--Yosida Unadjusted Langevin Algorithm \citep[MYULA;][]{durmus2018efficient,brosse2017sampling}, and we recover MYULA if both the mirror map and the Legendre function in the smooth approximation are chosen as $\sfrac{\|\cdot\|^2}{2}$. 
	Similar to the resemblance of MYULA to the proximal gradient algorithm with specific choice of step sizes, the proposed discretized algorithms is also reminiscent of the Bregman proximal gradient algorithm or the Bregman forward-backward algorithm  \citep[][see \Cref{subsec:same} for details]{van2017forward,bauschke2017descent,bolte2018first,bui2021bregman}. The proposed schemes, however, are able to change the geometry of the potential through a mirror map. On the theoretical front, our convergence results reveal a biased convergence guarantee with a bias which vanishes with the step size and the smoothing parameter of the Bregman--Moreau envelope. 	
	Numerical experiments also illustrate the efficiency of the proposed algorithms. We perform various nonsmooth (composite) and/or constrained sampling tasks, including sampling from the nonsmooth anisotropic Laplace distributions, at which MYULA is known to underperform ascribed to the anisotropy. To the best of our knowledge, the proposed algorithms are the \emph{first} gradient-based Monte Carlo algorithms based on the \emph{overdamped} Langevin dynamics which are able to sample \emph{nonsmooth} composite distributions while adapting to the \emph{geometry} of such distributions.

	\subsection{Related Work}
	\subsubsection{Mirror Descent-Type Sampling Algorithms}
	In addition to \citet{zhang2020wasserstein,ahn2021efficient}, \citet{hsieh2018mirrored} introduce the mirrored-Langevin algorithm, which is also reminiscent of mirror descent, but only with $I_d$ instead of $\nabla^2\varphi$ in \eqref{eqn:MLD}, which entails a standard Gaussian noise in \eqref{eqn:HRLMC}. Their convergence guarantee is also based on the assumption that $\varphi$ is strongly convex. 
	\citet{chewi2020exponential} analyze the continuous-time MLD \eqref{eqn:MLD} and specialize their results to the case when the mirror map is equal to the potential, known as the Newton--Langevin diffusion due to its resemblance to the Newton's method in optimization. 	
	\citet{li2022mirror} improve upon the analysis of \citet{zhang2020wasserstein}, establishing a vanishing bias with the step size of the mirror Langevin algorithm under more relaxed assumptions.

	\subsubsection{Nonsmooth Sampling}
	Sampling efficiently from nonsmooth distributions remains a crucial problem in machine learning, statistics and imaging sciences. In particular, a significant amount of work borrows tools from convex/variational analysis and proximal optimization, i.e., the Moreau envelope and proximity operator, attributing their use to the connection between sampling and optimization, see e.g., \citet{pereyra2016proximal,brosse2017sampling,bubeck2018sampling,durmus2019analysis,durmus2018efficient,mou2019efficient,wibisono2019proximal,luu2021sampling,lee2021structured,lehec2021langevin,liang2021proximal}. Nonasymptotic convergence guarantees are generally obtained from the (Metropolis-adjusted) Langevin algorithms for smooth potentials. 
	A notable exception which does not use the Moreau envelope as a smooth approximation is \citet{chatterji2020langevin}, which applies Gaussian smoothing instead.

	\subsubsection{Bregman Divergences in Convex Analysis, Optimization and Machine Learning}
	The origin of convex analysis results involving Bregman divergences \citep{bregman1967} and related optimization methods date backs to more than four decades ago \citep[see e.g.,][]{bauschke1997legendre,bauschke2000dykstras,bauschke2001essential,bauschke2001joint,bauschke2003duality,bauschke2003bregman,bauschke2006joint,bauschke2009bregman,nemirovski1979efficient,nemirovskij1983problem}. 	
	The work by \citet{bauschke2017descent} is a major recent breakthrough which revives much interest in developing new optimization algorithms involving Bregman divergences and their convergence results \citep[see e.g.,][]{bui2021bregman,bolte2018first,bauschke2019linear,dragomir2021optimal,dragomir2021quartic,hanzely2021accelerated,teboulle2018simplified,takahashi2021new,chizat2021convergence}. \citet{bauschke2017descent} relax the globally Lipschitz gradient assumption commonly required in gradient descent or proximal gradient for convergence, by introducing the relative smoothness condition (\Cref{def:rel_smooth}). Our proposed sampling algorithms also rely on such an insightful condition.	
	Another long line of work studies the generalization of the notions of the classical Moreau envelope and the proximity operators \citep{moreau1962fonctions,moreau1965proximite,rockafellar1998,bauschke2017} in convex analysis using Bregman divergences, see e.g.,  \citet{bauschke2003bregman,bauschke2006joint,chen2012moreau,kan2012moreau,bauschke2018regularizing,laude2020bregman,soueycatt2020regularization}. This line of work motivates our use of the Bregman--Moreau envelopes as smooth approximations of the nonsmooth part of the potential. 	
	While there is an extensive amount of literature regarding the applications of Bregman divergences in machine learning other than mirror descent \citep{bubeck2015convex}, we refer to \citet{blondel2020learning} which includes useful results for sampling distributions on various convex polytopes such as the probability simplex based on our proposed schemes.

	\subsection{Notation} 
	\label{subsec:notation}
	We denote by $I_d\in\RR^{d\times d}$ the $d\times d$ identity matrix. 
	We also define $\oRR \coloneqq \RR\cup\{+\infty\}$. 
	Let $\bbS^d_{++}$ denote the set of symmetric positive definite matrices of $\RR^{d\times d}$.  
	Let $\euH$ be a real Hilbert space endowed with an inner product $\dotp{\cdot}{\cdot}$ and a norm $\|\cdot\|$. 
	The domain of a function $f\colon\euH\to\oRR$ is $\dom f \coloneqq \{x\in\euH : f(x)<+\infty\}$. 
	The set $\Gamma_0(\RR^d)$ denotes the class of lower-semicontinuous convex functions from $\RR^d$ to $\oRR$ with a nonempty domain (i.e., proper). 
	The convex indicator function $\iota_\euC(x)$ of a closed convex set $\euC\ne\varnothing$ at $x$ equals $0$ if $x\in\euC$ and $+\infty$ otherwise.  
	We denote by $\euB(\RR^d)$ the Borel $\sigma$-field of $\RR^d$. 
	For two probability measures $\mu$ and $\nu$ on $\euB(\RR^d)$, 
	the total variation distance between $\mu$ and $\nu$ is defined by $\|\mu - \nu\|_{\TV} = \sup_{\sfA\in\euB(\RR^d)}|\mu(\sfA) - \nu(\sfA)|$. 
	For $k\in\NN$, we denote by $\scrC^k$ the set of $k$-times continuously differentiable functions $f\colon\RR^d\to\RR$.     
	If $f$ is a Lipschitz function, i.e., there exists $L>0$ such that for all $x,y\in\RR^d$, $|f(x) - f(y)|\le L\|x-y\|$, then we denote $\|f\|_{\Lip} \coloneqq \inf\{|f(x) - f(y)|/\|x-y\|\mid x,y\in\RR^d, x\ne y\}$.

	\section{Preliminaries}
	\label{sec:prelim}
	In this section, we give definitions of important notions from convex analysis \citep{rockafellar1970,rockafellar1998,bauschke2017}, and state some related properties of such notions. 
	In this section, we let $\varphi\in\Gamma_0(\RR^d)$ and $\euX\coloneqq \interior\dom\varphi$. 
	\begin{definition}[Legendre functions]
		A function $\varphi$ is called (i) \emph{essentially smooth}, if it is differentiable on $\euX\ne\varnothing$ and $\|\nabla\varphi(x_n)\|\to+\infty$ whenever $x_n\to x \in\operatorname{bdry}\dom\varphi$ ; (ii) \emph{essentially strictly convex}, if it is strictly convex on $\euX$; (iii) \emph{Legendre}, if it is both essentially smooth and essentially strictly convex. 
	\end{definition}
	
	\begin{definition}[Fenchel conjugate]\label{def:conjugate}
		The \emph{Fenchel conjugate} of a proper function $f$ is defined by 
		$f^*(x) \coloneqq \sup_{y\in\RR^d} \,\left\lbrace \dotp{y}{x} - f(y) \right\rbrace$. 
		For a Legendre function $\varphi$, it is well known that $\nabla\varphi\colon\euX\to\euX^*\coloneqq\interior\dom\varphi^*$ with $(\nabla\varphi)^{-1} = \nabla\varphi^*$. 
	\end{definition}

	\begin{definition}[Bregman divergence]
		The \emph{Bregman divergence} between $x$ and $y$ associated with a Legendre function $\varphi$ is defined through 
		\begin{equation}\label{eqn:bregman_div}
			D_{\varphi}\colon \RR^d\times\RR^d \to[0, +\infty] \colon
			(x, y) \mapsto 
			\begin{cases*}
				\varphi(x) - \varphi(y) - \dotp{\nabla \varphi(y)}{x - y}, & if $y\in\euX$, \\
				+\infty, & otherwise. 
			\end{cases*}
		\end{equation}
	\end{definition}
	We now assume that $\varphi$ is a Legendre function in the remaining part of this section. 
	\begin{definition}[Bregman--Moreau envelopes]
		\label{def:bregman_env}
		For $\lambda >0$, the \emph{left} and \emph{right Bregman--Moreau envelopes} of $g\in\Gamma_0(\RR^d)$ associated with $\varphi$ are respectively defined by
		\begin{equation}\label{eqn:Bregman_Moreau}
			\lenv_{\lambda, g}^\varphi(x) \coloneqq \inf_{y\in\RR^d} \, \left\lbrace g(y) + \frac{1}{\lambda}D_\varphi(y, x) \right\rbrace, 
		\end{equation}
		and 
		\begin{equation}
			\renv_{\lambda, g}^\varphi(x) \coloneqq \inf_{y\in\RR^d} \, \left\lbrace g(y) + \frac{1}{\lambda}D_\varphi(x, y) \right\rbrace. 
		\end{equation}
	\end{definition}
	
	\begin{definition}[Bregman proximity operators]
		\label{def:bregman_prox}
		For $\lambda >0$, the \emph{left} and \emph{right Bregman proximity operators} of $g\in\Gamma_0(\RR^d)$ associated with $\varphi$ are respectively defined by 
		\begin{equation}
			\lprox{\lambda, g}{\varphi}(x) \coloneqq \argmin_{y\in\RR^d}  \, \left\lbrace g(y) + \frac{1}{\lambda}D_\varphi(y, x)\right\rbrace, 
		\end{equation} 
		and
		\begin{equation}
			\rprox{\lambda, g}{\varphi}(x) \coloneqq \argmin_{y\in\RR^d} \, \left\lbrace g(y) + \frac{1}{\lambda}D_\varphi(x, y) 	\right\rbrace. 
		\end{equation}
	\end{definition}
	We omit the arrows and write $\env_{\lambda, g}^\varphi$ and $\bprox{\lambda, g}{\varphi}$ when there is no need to distinguish the left and right Bregman--Moreau envelopes or Bregman proximity operators. 
	When $\varphi = \sfrac{\|\cdot\|^2}{2}$, we recover the classical \emph{Moreau envelope} and the \emph{Moreau proximity operator} \citep{moreau1962fonctions,moreau1965proximite}. 
	Note that $\env_{\lambda, g}^\varphi$ envelops $g$ from below and is decreasing in $\lambda$, in a sense that $\inf g(\RR^d) \le \env_{\lambda, g}^\varphi(x) \le \env_{\kappa, g}^\varphi(x) \le g(x)$ for any $x\in\euX$ and $0 < \kappa < \lambda < +\infty$ \citep[Proposition 2.2]{bauschke2018regularizing}. 
	
	\begin{definition}[Legendre strongly convex]
		A function $f$ is $\alpha$-\emph{Legendre strongly convex} with respect to $\varphi$ if there exists a constant $\alpha\ge0$ such that $ f - \alpha\varphi$ is convex on $\euX$. 	
	\end{definition}
	
	\begin{definition}[Relative smoothness]\label{def:rel_smooth}
		A function $f$ is $\beta$-smooth relative to $\varphi$ if there exists $\beta>0$ such that $\beta\varphi - f$ is convex on $\euX$. 	
	\end{definition}

	\section{Bregman Proximal LMC Algorithms}
	In the case of nonsmooth composite potentials, the mirror-Langevin algorithms \eqref{eqn:HRLMC} and \eqref{eqn:MLA} no longer work since the gradient of the nonsmooth part is not available. 
	Based on the mirror Langevin algorithms for relatively smooth potentials \citep{zhang2020wasserstein,ahn2021efficient}, we devise two possible Bregman proximal LMC algorithms involving the Bregman--Moreau envelopes and the Bregman proximity operators.

	\subsection{Assumptions and Related Properties}
	Instead of directly sampling from $\pi$, we propose to sample from distributions whose potentials being smooth surrogates of $U$, defined by 
	\begin{equation}\label{eqn:env_potentials}
		\lU{\lambda}{\psi} \coloneqq f + \lenv_{\lambda, g}^\psi \quad\text{and}\quad \rU{\lambda}{\psi} \coloneqq f + \renv_{\lambda, g}^\psi,
	\end{equation}
	where $\psi\in\Gamma_0(\RR^d)$ is a Legendre function possibly different from the Legendre function $\varphi$ in MLD \eqref{eqn:MLD} to allow full flexibility, and $\lambda>0$.   
	Then the corresponding surrogate target densities are 
	\begin{equation}\label{eqn:densities}
		\lpi{\lambda}{\psi} \propto \exp\left( -\lU{\lambda}{\psi}\right) \quad \text{and} \quad \rpi{\lambda}{\psi} \propto \exp\left( -\rU{\lambda}{\psi}\right). 
	\end{equation}
	We again omit the arrows and write $U_\lambda^\psi$ and $\pi_\lambda^\psi$ when we do not need to distinguish the left and right Bregman--Moreau envelopes. 
	In this section, after introducing some required assumptions, we show that they are well-defined (i.e., in $\left]0, +\infty\right[$), as close to $\pi$ by adjusting the (sufficiently small) approximation parameter $\lambda>0$, Legendre strongly log-concave, and relatively smooth. We also give some extra assumptions of the specific algorithms (\Cref{assum:functions}). 	
	Then, with all the assumptions of algorithms originally designed for the relatively smooth potentials satisfied by \eqref{eqn:env_potentials}, we enable the capabilities of these algorithms for approximate nonsmooth sampling.

	We write $\euF \coloneqq \dom f \subseteq \RR^d$ and $\euG \coloneqq \dom g \subseteq \RR^d$.     
	Throughout the whole paper, we assume that $\euX \coloneqq \interior\dom\varphi$ and $\euY \coloneqq \interior\dom\psi$ such that $(\interior\euF)\cap\euY\subseteq\overline{\euX}$, $(\interior\euF)\cap\euY\cap\euX\ne\varnothing$ and $\euG\cap\euY\ne\varnothing$. 
	Let us recall that the potential $U$ has the form $f+g$. We make the following assumptions on the functions $f$ and $g$, the Legendre functions $\varphi$ and $\psi$, and their associated Bregman divergences $D_\varphi$ and $D_\psi$. 
	\begin{assumption}\label{assum:smooth}\em
		The function $f\colon\RR^d\to\RR$ is
		(i) in $\Gamma_0(\RR^d)$, lower bounded and 
		differentiable (i.e., of $\scrC^1$) but may not admit a \emph{globally Lipschitz gradient}; 
		(ii) $\beta_f$-\emph{smooth} relative to $\varphi$. 
	\end{assumption}
	
	\begin{assumption}\label{assum:nonsmooth}\em
		The function $g\colon\RR^d\to\oRR$ is 
		(i) in $\Gamma_0(\RR^d)$, lower bounded and possibly \emph{nonsmooth}; either 
		(ii$^\dagger$) such that $\e^{-g}$ is \emph{integrable} with respect to the Lebesgue measure, or 
		(ii$^\ddagger$) \emph{Lipschitz}. 
	\end{assumption}
	
	\begin{assumption}\label{assum:mirror1}\em
		The function $\varphi\in\Gamma_0(\RR^d)$ is 
		(i) \emph{Legendre}; 
		(ii) of $\scrC^3$ on $\euX$; 
		(iii) \emph{supercoercive}, i.e., $\lim_{\|x\|\to+\infty} \varphi(x)/\|x\| = +\infty$; 	
		(iv) $M_\varphi$-modified self-concordant \citep[\textbf{(A1)}]{zhang2020wasserstein,li2022mirror}, i.e., there exists $M_\varphi\in\RP$ such that for any $ (x, \tx)\in\euX\times\euX $, 
		$\left|\kern-0.25ex\left|\kern-0.25ex\left| (\nabla^2\varphi(x))^{\sfrac12} - (\nabla^2\varphi(\tx))^{\sfrac12} \right|\kern-0.25ex\right|\kern-0.25ex\right|_{\mathrm{F}} \le \sqrt{M_\varphi} \norm{\nabla\varphi(x) - \nabla\varphi(\tx)}$, 
		where $\fronorm{\cdot}$ is the Frobenius norm. 
		(v) \emph{very strictly convex}, i.e., $\nabla^2\varphi(x)\in\bbS^d_{++}$ for all $x\in\euX\ne\varnothing$ \citep{bauschke2000dykstras}.  
	\end{assumption}
	
	\begin{assumption}\label{assum:mirror2}\em
		The function $\psi\in\Gamma_0(\RR^d)$ is 
		(i) \emph{Legendre}; 
		(ii) of $\scrC^3$ on $\euY$; 
		(iii) \emph{supercoercive}.
	\end{assumption}
	
	\begin{assumption}\label{assum:Bregman}\em
		The Bregman divergence associated with $\psi$ satisfy the following assumptions: 
		(i) $D_\psi$ is jointly convex, i.e., convex on $\RR^d\times\RR^d$;  
		(ii) $(\forall y\in\euY)\; D_\psi(y, \cdot)$ is strictly convex on $\euY$, continuous on $\euY$, and coercive, i.e., $(\forall y\in\euY)\; D_\psi(y,z)\to+\infty$ as $\|z\|\to+\infty$. 
	\end{assumption}
	\Cref{assum:smooth,assum:nonsmooth,assum:mirror1,assum:mirror2} are required for the convergence of the proposed algorithms. \Cref{assum:nonsmooth}(ii) is required for \eqref{eqn:densities} to be well-defined. 	
	\Cref{assum:Bregman} consists of the standard assumptions required for the well-posedness of the Bregman--Moreau envelopes and the Bregman proximity operators of $\psi$ \citep{bauschke2018regularizing}. 	
	\Cref{prop:approx} below implies that the densities \eqref{eqn:densities} are well-defined and as close to the target density $\pi$ as required when $\lambda$ is sufficiently small (in total variation distance). We also provide a computable error bound when evaluating exactly the expectation with respect to \eqref{eqn:densities} as opposed to the true target distribution $\pi$. 
	\begin{proposition}\label{prop:approx}
		Suppose that \Cref{assum:smooth,assum:nonsmooth,assum:mirror2,assum:Bregman} hold. Then the following statements hold. 		
		\begin{enumerate}[label=(\alph*)]
			\item Let $\lambda>0$. If either (i) \Cref{assum:nonsmooth}(ii$^\dagger$) holds or (ii) \Cref{assum:nonsmooth}(ii$^\ddagger$) holds and $\psi$ is $\rho$-strongly convex, then $\lpi\lambda\psi$ and $\rpi\lambda\psi$ define proper densities of probability measures on $\RR^d$, i.e., $\exp\left( -\lU{\lambda}{\psi}\right)$ and $\exp\left( -\rU{\lambda}{\psi}\right)$ are integrable w.r.t.~the Lebesgue measure. 
			\item $\pi_\lambda^\psi$ converges to $\pi$ as $\lambda\downarrow 0$, i.e., $\| \pi_\lambda^\psi - \pi\|_{\TV} \to 0$ as $\lambda\downarrow 0$. 
			\item If \Cref{assum:nonsmooth}(ii$^\ddagger$) holds and $\psi$ is $\rho$-strongly convex, then for all $\lambda > 0$, 
			$\| \pi_\lambda^\psi - \pi\|_{\TV} \le \lambda\|g\|_{\Lip}^2/\rho$. 
			In addition, for any $\pi$- and $\pi_\lambda^\psi$-integrable function $h\colon\RR^d\to\RR$, 
			\[
			\left|\Ex_{\pi_\lambda^\psi} h - \Ex_\pi h\right| \le \left( \e^{\lambda\|g\|_{\Lip}^2/\rho} - 1\right)\cdot \min\left\{\Ex_{\pi_\lambda^\psi}|h|, \Ex_\pi|h|\right\}.
			\]
		\end{enumerate}
	\end{proposition}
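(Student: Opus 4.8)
The overall strategy is to control everything through a uniform comparison between the Bregman--Moreau envelope $\env_{\lambda,g}^\psi$ and $g$. The upper estimate $\env_{\lambda,g}^\psi\le g$ is the minorization property recalled after \Cref{def:bregman_prox}. For a matching lower estimate under \Cref{assum:nonsmooth}(ii$^\ddagger$) with $\psi$ being $\rho$-strongly convex, I would use that strong convexity forces $D_\psi(y,x)\ge\tfrac{\rho}{2}\|y-x\|^2$ and $D_\psi(x,y)\ge\tfrac{\rho}{2}\|x-y\|^2$, so that both $\lenv_{\lambda,g}^\psi$ and $\renv_{\lambda,g}^\psi$ dominate the \emph{classical} Moreau envelope of $g$ with parameter $\lambda/\rho$. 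The elementary Lipschitz estimate $\inf_{y}\{g(y)+\tfrac{\rho}{2\lambda}\|y-x\|^2\}\ge g(x)-\tfrac{\lambda}{2\rho}\|g\|_{\Lip}^2$ then yields, for all $x\in\euY$,
\[
0\;\le\; g(x)-\env_{\lambda,g}^\psi(x)\;\le\;\frac{\lambda}{2\rho}\|g\|_{\Lip}^2\;\le\;\delta\;:=\;\frac{\lambda}{\rho}\|g\|_{\Lip}^2 .
\]
Part (a)(ii) is then immediate: the sandwich gives $\e^{-U}\le\e^{-U_\lambda^\psi}\le\e^{\delta}\e^{-U}$ pointwise, so integrability and strict positivity of $\int\e^{-U_\lambda^\psi}$ follow from the standing hypothesis $0<\int\e^{-U}<+\infty$, for both the left and right versions.

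The delicate case is (a)(i), where only $\e^{-g}$ integrable is assumed and neither $f$ nor a Lipschitz gap confines the mass; this is the step I expect to be the main obstacle, since the envelope lies \emph{below} $g$ and integrability cannot be transferred directly. My plan is to recover integrability through convexity and coercivity. First, joint convexity of $D_\psi$ (\Cref{assum:Bregman}(i)) makes $(x,y)\mapsto g(y)+\lambda^{-1}D_\psi(y,x)$ (respectively $g(y)+\lambda^{-1}D_\psi(x,y)$) jointly convex, so its partial infimum $\env_{\lambda,g}^\psi$ is a proper lsc convex function bounded below by $\inf g$. It then suffices to prove $\env_{\lambda,g}^\psi$ is coercive, because a convex lsc coercive function $V$ satisfies $0\in\interior\dom V^*$ and hence admits a linear minorant $V\ge c\|\cdot\|-C$ with $c>0$, giving $\int\e^{-f-\env_{\lambda,g}^\psi}\le\e^{-\inf f}\int\e^{-\env_{\lambda,g}^\psi}<+\infty$. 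To establish coercivity I would fix $c$ and let $p$ be the Bregman proximal point attaining $\env_{\lambda,g}^\psi(x)$ on the sublevel set $\{\env_{\lambda,g}^\psi\le c\}$; then $g(p)\le c$ confines $p$ to the bounded set $\{g\le c\}$ (bounded because a convex function with integrable $\e^{-g}$, hence finite-measure sublevel sets, has bounded sublevel sets), while $\lambda^{-1}D_\psi(p,x)\le c-\inf g$ stays bounded. For the left envelope, \Cref{assum:Bregman}(ii) (coercivity of $D_\psi(p,\cdot)$, made uniform over the compact closure of $\{g\le c\}$) then confines $x$; for the right envelope, supercoercivity of $\psi$ (\Cref{assum:mirror2}(iii)) makes $x\mapsto D_\psi(x,p)=\psi(x)-\psi(p)-\langle\nabla\psi(p),x-p\rangle$ coercive, again confining $x$. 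Hence $\{\env_{\lambda,g}^\psi\le c\}$ is bounded, and coercivity follows.

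For part (b) I would combine monotonicity with Scheff\'e's lemma. By the monotone behaviour recalled after \Cref{def:bregman_prox} together with the standard convergence of Bregman--Moreau envelopes \citep{bauschke2018regularizing}, $\env_{\lambda,g}^\psi\uparrow g$ pointwise on the interior of $\euG$, hence Lebesgue-a.e., so $\e^{-U_\lambda^\psi}\downarrow\e^{-U}$ monotonically a.e. The monotone convergence theorem then gives $Z_\lambda:=\int\e^{-U_\lambda^\psi}\downarrow Z:=\int\e^{-U}\in\RPP$, whence the normalized densities converge, $\pi_\lambda^\psi\to\pi$ a.e. Since both are probability densities, Scheff\'e's lemma upgrades this to $\int|\pi_\lambda^\psi-\pi|\to0$, that is, $\|\pi_\lambda^\psi-\pi\|_{\TV}\to0$.

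Finally, for part (c) I would turn the gap bound into a pointwise bound on the density ratio. Integrating the sandwich gives $Z\le Z_\lambda\le\e^{\delta}Z$, and combining with $U_\lambda^\psi-U=\env_{\lambda,g}^\psi-g\in[-\delta,0]$ yields $\pi_\lambda^\psi/\pi\in[\e^{-\delta},\e^{\delta}]$ pointwise, with $\delta=\lambda\|g\|_{\Lip}^2/\rho$. The expectation bound then follows from $\Ex_{\pi_\lambda^\psi}h-\Ex_\pi h=\int h\,\pi\,(\pi_\lambda^\psi/\pi-1)$ together with $|\pi_\lambda^\psi/\pi-1|\le\e^{\delta}-1$, and symmetrically writing the difference against $\pi_\lambda^\psi$ produces the minimum of the two expectations. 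For the total variation bound I would exploit the mean-one identity $\Ex_{\pi_\lambda^\psi}[\pi/\pi_\lambda^\psi]=1$: with $r:=\pi/\pi_\lambda^\psi\ge\e^{-\delta}$ one has $2\|\pi_\lambda^\psi-\pi\|_{\TV}=\Ex_{\pi_\lambda^\psi}|1-r|=2\,\Ex_{\pi_\lambda^\psi}(1-r)_+\le2(1-\e^{-\delta})$, hence $\|\pi_\lambda^\psi-\pi\|_{\TV}\le1-\e^{-\delta}\le\delta=\lambda\|g\|_{\Lip}^2/\rho$, which is the claimed bound.
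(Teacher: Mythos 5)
Parts (a)(ii), (b) and (c) of your argument are correct and land on the paper's estimates. For (a)(ii) and (c) you follow essentially the paper's route: strong convexity gives $D_\psi(y,\ty)\ge\tfrac{\rho}{2}\|y-\ty\|^2$, which reduces the Bregman gap to the classical Lipschitz--Moreau estimate $0\le g-\env_{\lambda,g}^\psi\le\tfrac{\lambda}{2\rho}\|g\|_{\Lip}^2$, and the sandwich on $U_\lambda^\psi$ then yields integrability, the TV bound and the expectation bound. Your density-ratio packaging of (c) is a slightly tidier version of the paper's $h=h^+-h^-$ computation, and your Scheff\'e argument for (b) is a clean alternative to the paper's explicit evaluation of $\int(\pi_\lambda^\psi-\pi)^+$ as $1-\int\e^{-U}/\int\e^{-U_\lambda^\psi}$; both variants buy a somewhat shorter write-up at no loss.

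The genuine issue is (a)(i), which you rightly flag as the hard case and where you diverge from the paper. Your plan --- show $\env_{\lambda,g}^\psi$ is convex (partial minimization of a jointly convex function) and coercive, hence admits a linear minorant --- is a reasonable alternative in outline, but two steps are asserted rather than proved and do not follow from the stated assumptions. First, to confine $x$ in the sublevel set $\{\lenv_{\lambda,g}^\psi\le c\}$ you need the coercivity of $D_\psi(y,\cdot)$ to be \emph{uniform} over $y$ ranging in the closure of $\{g\le c\}$; \Cref{assum:Bregman}(ii) only gives coercivity for each fixed $y$, and upgrading this to uniformity over a compact set is a nontrivial extra argument, not a parenthetical. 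Second, for the right envelope you lower-bound $D_\psi(x,p)=\psi(x)-\psi(p)-\dotp{\nabla\psi(p)}{x-p}$ using boundedness of $\psi(p)$ and $\|\nabla\psi(p)\|$ over $p\in\{g\le c\}$; but that set need not be contained in a compact subset of $\euY=\interior\dom\psi$, and by essential smoothness $\|\nabla\psi(p)\|\to+\infty$ as $p$ approaches $\operatorname{bdry}\dom\psi$, so the uniform bound can fail. (Your ``finite measure implies bounded sublevel sets'' step also tacitly assumes $\{g\le c\}$ has nonempty interior.) The paper proceeds differently and sidesteps both problems: it takes the linear-growth minorant $g(x)-g(x_g)\ge M_1+\rho_g\|x-x_g\|$ (Lemma A.1 of Durmus et al.), pushes this minorant \emph{through} the envelope to get $\env_{\lambda,g}^\psi\ge g(x_g)+M_1+\env_{\lambda,h}^\psi$ with $h=\rho_g\|\cdot-x_g\|$, and then shows $\env_{\lambda,h}^\psi$ itself grows linearly, which gives integrability of $\e^{-\env_{\lambda,g}^\psi}$ directly. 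To complete your route you would need to supply the two uniformity arguments above, or switch to this comparison with the envelope of a norm-like minorant.
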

	All proofs are postponed to \Cref{sec:proofs}. 
	Next we show that the surrogate potentials \eqref{eqn:env_potentials} are indeed continuously differentiable approximations of $U$ under certain conditions. Their gradients and the conditions for them to be Lipschitz are also given. We also assert that the Bregman--Moreau envelopes $\env_{\lambda, g}^\psi(y)$ have desirable asymptotic behavior as $\lambda$ goes to $0$. 
	\begin{proposition}\label{prop:gradient}
		Suppose that \Cref{assum:nonsmooth,assum:mirror2,assum:Bregman} hold and $\lambda > 0$. The following statements hold. 
		\begin{enumerate}[label=(\alph*)]
			\item The left and right Bregman--Moreau envelopes are differentiable on $\euY$ and 
			\begin{equation}\label{eqn:grad_Bregman_Moreau}
				\nabla\lenv_{\lambda, g}^\psi(y) = \frac{1}{\lambda}\nabla^2\psi(y)\left(y - \lprox{\lambda, g}\psi(y) \right), 			
			\end{equation} 
			and 
			\begin{equation}
				\nabla\renv_{\lambda, g}^\psi(y) = \frac{1}{\lambda}\left(\nabla\psi(y) - \nabla\psi\left( \rprox{\lambda, g}\psi(y)\right)  \right), 
			\end{equation}
			for any $y\in\euY$, respectively. 
			\item If $D_\psi(y, \cdot)$ is continuous and convex on $\euY$ for all $y\in\euY$, and $\nabla\psi$ is Lipschitz on $\euY$, then $\nabla\lenv_{\lambda, g}^\psi$ and $\nabla\renv_{\lambda, g}^\psi$ are Lipschitz on $\euY$. 
			\item As $\lambda\downarrow0$, we have $\lenv_{\lambda, g}^\psi(y) \uparrow g(y)$ and $\renv_{\lambda, g}^\psi(y) \uparrow g(y)$ for all $y\in\euY$. 
		\end{enumerate}		
	\end{proposition}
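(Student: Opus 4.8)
The plan is to reduce all three parts to the variational definitions of the Bregman--Moreau envelopes together with the convexity/relative-smoothness structure carried by $D_\psi$, and to treat the left and right versions in parallel wherever possible.

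For part (a), I would first record that under \Cref{assum:mirror2,assum:Bregman} each proximal subproblem has a unique minimizer: existence comes from lower semicontinuity, lower-boundedness of $g$ (\Cref{assum:nonsmooth}), and coercivity (supercoercivity of $\psi$ handles the left problem, where $D_\psi(\cdot,x)=\psi(\cdot)-\text{affine}$, and \Cref{assum:Bregman}(ii) handles the right problem); uniqueness comes from strict convexity (Legendre for the left problem, \Cref{assum:Bregman}(ii) for the right). To get differentiability and the stated formulas I would avoid an abstract envelope theorem and use a direct ``feasible-point'' sandwich. For the right envelope, writing $p=\rprox{\lambda,g}{\psi}(y)$, the point $p$ is a generally suboptimal candidate at a nearby $y'$, so $\renv_{\lambda,g}^\psi(y')\le g(p)+\frac1\lambda D_\psi(y',p)$ with equality at $y'=y$; this bounds the increment from above by the first-order expansion of $D_\psi(\cdot,p)$, whose gradient is $\nabla\psi(y)-\nabla\psi(p)$. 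The symmetric choice $p'=\rprox{\lambda,g}{\psi}(y')$ gives a matching lower bound, and continuity of the proximity map (uniqueness plus a maximum-theorem argument) forces both one-sided bounds to share the linear part $\frac1\lambda(\nabla\psi(y)-\nabla\psi(p))$, yielding differentiability and the formula. Since $\psi\in\scrC^3$ (\Cref{assum:mirror2}), $D_\psi$ is smooth enough for the Taylor expansions, and the same argument applied to $\nabla_yD_\psi(p,y)=\nabla^2\psi(y)(y-p)$ delivers the left-envelope formula.

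For part (b) the right envelope is the easy case. Writing $h=\renv_{\lambda,g}^\psi$, I would observe that $h$ is convex, since $(y,z)\mapsto g(z)+\frac1\lambda D_\psi(y,z)$ is jointly convex by \Cref{assum:Bregman}(i) and partial minimization preserves convexity, and that $h$ is $\frac1\lambda$-smooth relative to $\psi$, because expanding $D_\psi(y,z)$ shows $\frac1\lambda\psi(y)-h(y)$ to be a supremum over $z$ of functions affine in $y$, hence convex. Relative smoothness yields the descent inequality $h(y')\le h(y)+\dotp{\nabla h(y)}{y'-y}+\frac1\lambda D_\psi(y',y)$, and the Lipschitz constant $L$ of $\nabla\psi$ gives $D_\psi(y',y)\le\frac{L}{2}\norm{y'-y}^2$; the standard fact that a convex function obeying such a quadratic upper bound has a $\frac{L}{\lambda}$-Lipschitz gradient finishes this case. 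The left envelope is the crux, since its gradient carries the Hessian factor $\nabla^2\psi(y)$ and is \emph{not} relatively smooth with respect to $\psi$, so the descent-lemma shortcut is unavailable. Here I would use the Bregman duality $D_\psi(y,x)=D_{\psi^*}(\nabla\psi(x),\nabla\psi(y))$ to write $\lenv_{\lambda,g}^\psi=\renv_{\lambda,\,g\circ\nabla\psi^*}^{\psi^*}\circ\nabla\psi$, reducing to the right-envelope case for $\psi^*$, and then control the Lipschitz modulus through the chain rule using $\opnorm{\nabla^2\psi}\le L$, the local Lipschitzness of $\nabla^2\psi$ afforded by $\psi\in\scrC^3$, and the regularity of the proximity map. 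This left-envelope estimate is exactly where I expect the main difficulty to lie: one must genuinely control the variation of the Hessian and of the proximity map, rather than invoke the clean relative-smoothness argument that settles the right envelope.

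For part (c) I would combine monotonicity in $\lambda$ with a squeeze. Taking $z=y$ in the defining infimum gives $\env_{\lambda,g}^\psi(y)\le g(y)$ (as $D_\psi(y,y)=0$), and the monotonicity in $\lambda$ recorded before the statement via \citep[Proposition 2.2]{bauschke2018regularizing} shows $\env_{\lambda,g}^\psi(y)$ increases as $\lambda\downarrow0$. To identify the limit, let $z_\lambda$ denote the minimizer; then $g(z_\lambda)+\frac1\lambda D_\psi(z_\lambda,y)\le g(y)$ together with lower-boundedness of $g$ forces $D_\psi(z_\lambda,y)\to0$ (respectively $D_\psi(y,z_\lambda)\to0$ for the right envelope), whence $z_\lambda\to y$ because strict convexity and coercivity of $D_\psi$ make the diagonal its unique zero in each argument. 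Lower semicontinuity of $g$ then gives $\liminf_{\lambda\downarrow0}\env_{\lambda,g}^\psi(y)\ge\liminf_{\lambda\downarrow0}g(z_\lambda)\ge g(y)$, and with the upper bound and monotonicity this yields $\env_{\lambda,g}^\psi(y)\uparrow g(y)$ for both envelopes.
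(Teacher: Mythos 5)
Your route is genuinely different from the paper's: the paper proves all three parts by citation --- (a) via Proposition~2.19 of \citet{bauschke2018regularizing} after noting coercivity of the subproblems, (b) via Theorem~3.5 of \citet{soueycatt2020regularization} for the left envelope and a one-line composition-of-Lipschitz-maps remark for the right one, and (c) via Theorem~3.3 of \citet{bauschke2018regularizing} --- whereas you argue directly. Your part (a) (Danskin-type sandwich plus continuity of the proximity map), your part (c) (monotone squeeze; note only that the case $g(y)=+\infty$ needs a small separate contradiction argument, since $g(y)-\inf g$ is then not finite and your bound $D_\psi(z_\lambda,y)\le\lambda(g(y)-\inf g)$ degenerates), and your treatment of the \emph{right} envelope in (b) (convexity by partial minimization of a jointly convex function, $\lambda^{-1}$-relative smoothness, then the descent-lemma/co-coercivity equivalence) are all sound; the last of these is arguably cleaner than the paper's remark, which tacitly assumes Lipschitzness of the proximity map itself.

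The genuine gap is the left envelope in part (b). The duality identity $\lenv_{\lambda,g}^\psi=\renv_{\lambda,\,g\circ\nabla\psi^*}^{\psi^*}\circ\nabla\psi$ is correct, but running your right-envelope argument on $\renv^{\psi^*}_{\lambda,\,g\circ\nabla\psi^*}$ requires (i) $\nabla\psi^*$ Lipschitz, i.e.\ $\psi$ strongly convex, which is not among the hypotheses of (b), and (ii) joint convexity of $D_{\psi^*}$, which is not implied by joint convexity of $D_\psi$ because $D_{\psi^*}(u,v)=D_\psi(\nabla\psi^*(v),\nabla\psi^*(u))$ composes with a nonlinear map. The fallback you sketch --- chain rule, $\opnorm{\nabla^2\psi}\le L$, local Lipschitzness of $\nabla^2\psi$, ``regularity of the proximity map'' --- does not close the argument: the gradient $\tfrac1\lambda\nabla^2\psi(y)\left(y-\lprox{\lambda,g}{\psi}(y)\right)$ is a product of a varying matrix with an a priori unbounded vector field, so local Lipschitzness of $\nabla^2\psi$ yields no global estimate, and no Lipschitz bound on $y\mapsto\lprox{\lambda,g}{\psi}(y)$ has been established (Bregman proximity operators are not nonexpansive in general). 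This is precisely the piece the paper outsources to Theorem~3.5 of \citet{soueycatt2020regularization}; as written, your proof of it is a plan rather than an argument, and you would need either to import that result or to supply the missing quantitative control on the prox map and on the variation of $\nabla^2\psi$ against the unbounded factor.
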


	Finally, we make the following extra assumptions on $U_\lambda^\psi$, $\varphi$ and $\psi$. 
	\begin{assumption}\label{assum:functions}\em
		We assume the following: 
		For $\lambda>0$, 
		(i) $\lU\lambda\psi$ and $\rU\lambda\psi$ are $\alpha$-\emph{Legendre strongly convex} with respect to $\varphi$; 
		(ii) $\lenv_{\lambda, g}^\psi$ and $\renv_{\lambda, g}^\psi$ are $\beta_g$-\emph{smooth} relative to $\varphi$. 
	\end{assumption}
	\Cref{assum:functions} is a set of rather generic assumptions but gives us guidance to choose $\psi$ and $\varphi$. 
	Note that if $f$ is $\alpha$-\emph{Legendre strongly convex} with respect to $\varphi$, then \Cref{assum:functions}(i) is automatically satisfied.
	Also note that the constants $\alpha$ and $\beta_g$ can be different for the left and right versions of their corresponding quantities. 

	\begin{remark}\label{remark:1}
		Let us define $\beta\coloneqq\beta_f+\beta_g$. 
		\Cref{assum:smooth}(iv) and \Cref{assum:functions}(ii) implies $U_\lambda^\psi$ is $\beta$-smooth relative to $\varphi$. 
		Then, $U_\lambda^\psi$ satisfies (A2) and (A3) of \citet{li2022mirror}, which are required for the convergence of HRLMC. 
	\end{remark}

	We propose two mirror-Langevin algorithms which use different discretizations of the mirror-Langevin diffusion. 
	We give the details of the algorithm based on HRLMC in the main text. 
	The algorithm based on MLA, which is essentially \eqref{eqn:MLA} with $U$ replaced by $U_\lambda^\psi$, coined the Bregman--Moreau mirrorless mirror-Langevin algorithm (BMMMLA), is given in \Cref{sec:BMMMLA}.

	\subsection{The Bregman--Moreau Unadjusted Mirror-Langevin Algorithm}
	Let $x_0 \in \euY$. A discretization scheme for the case of composite potentials similar to HRLMC \eqref{eqn:HRLMC}, called the Bregman--Moreau unadjusted mirror-Langevin algorithm (BMUMLA), iterates, for $k\in\NN$, 
	\begin{equation}\label{eqn:BMUMLA}
		x_{k+1} = \nabla\varphi^*\Big(\nabla\varphi(x_k) - \gamma\nabla U_\lambda^\psi(x_k)
		\left. + \sqrt{2\gamma}\left[ \nabla^2 \varphi(x_k) \right]^{\sfrac{1}{2}}\xi_k \right). 
	\end{equation}
	More specifically, when $\varphi = \psi = \sfrac{\norm{\cdot}^2}2$, then we have $\nabla\varphi = \Id$, $\nabla\varphi^* = (\nabla\varphi)^{-1} = \Id$, $\nabla^2\varphi = I_d$, and $\left(\nabla \varphi + \lambda\partial g\right)^{-1} = (\Id + 	\lambda\partial g)^{-1} = \prox_{\lambda g}$, so that BMUMLA \eqref{eqn:BMUMLA} reduces to MYULA \citep{durmus2018efficient}. 
	Furthermore, letting $y_k = \nabla\varphi(x_k)$ for all $k\in\NN$, then the BMUMLA in the dual space $\nabla\varphi(\euX)$ takes the form
	\begin{equation}\label{eqn:BMUMLA_dual}
		y_{k+1} = y_k - \gamma\nabla U_\lambda^\psi\circ\nabla\varphi^*(y_k) 
		+ \sqrt{2\gamma}\left[ \nabla^2 \varphi^*(y_k) \right]^{\sfrac{1}{2}}\xi_k . 
	\end{equation}
	
	Recent results by \citet{li2022mirror} show that HRLMC indeed has a vanishing bias with the step size $\gamma$, as opposed to what was conjectured in \citet{zhang2020wasserstein}. 
	An advantage of applying HRLMC over MLA is that an exact simulator of the Brownian motion of varying covariance is not needed, which is usually approximated by inner loops of Euler--Maruyama discretization in practice \citep{ahn2021efficient}. It is however worth noting that the use of the Bregman--Moreau envelope still incurs bias in our proposed algorithms, but can be controlled via the smoothing parameter $\lambda$ (see \Cref{sec:conv}).

	\subsection{Reminiscence of Bregman Proximal Gradient Algorithm via Right BMUMLA}
	\label{subsec:same}
	The proposed BMUMLA can be simplified by specifying $\psi = \varphi$, regarding the iterates and the assumptions. 
	In particular, the right BMUMLA reduces to 
	\begin{equation}\label{eqn:RBMUMLA_special}
		\nabla\varphi\left( x_{k+1} \right)
		= \left(1 - \frac{\gamma}{\lambda}\right) \nabla\varphi(x_k) - \gamma\nabla f(x_k) 
		+ \frac{\gamma}{\lambda}\nabla\varphi\left( \rprox{\lambda, g}{\varphi}(x_k) \right)+ \sqrt{2\gamma}\left[ \nabla^2 \varphi(x_k) \right]^{\sfrac{1}{2}}\xi_k,
	\end{equation}
	which can be viewed as the generalization of MYULA \citep{durmus2018efficient} with the right Bregman--Moreau envelope, but with a  diffusion term of varying covariance. Furthermore, if we let $\gamma = \lambda$, then \eqref{eqn:RBMUMLA_special} becomes
	\begin{equation}\label{eqn:RBMUMLA_special2}
		x_{k+1} = \nabla\varphi^*\left( \nabla\varphi\left( \rprox{\lambda, g}{\varphi}(x_k) \right)  - \lambda\nabla f(x_k)
		+ \sqrt{2\gamma}\left[ \nabla^2 \varphi(x_k) \right]^{\sfrac{1}{2}}\xi_k \right), 
	\end{equation}
	which roughly resembles the iterates of the Bregman proximal gradient algorithm \citep{van2017forward,bauschke2017descent,bolte2018first,bui2021bregman,chizat2021convergence}, which takes the form 
	\begin{equation}
		x_{k+1} = \lprox{\lambda, g}{\varphi}\left( \nabla\varphi^*\left( \nabla\varphi(x_k) - \lambda\nabla f(x_k) \right)  \right), 
	\end{equation}
	or
	\begin{equation}\label{eqn:BPG}
		z_{k+1} = \nabla\varphi^*\left(\nabla\varphi\left( \lprox{\lambda, g}{\varphi}(z_k) \right)  - \lambda\nabla f\left( \lprox{\lambda, g}{\varphi}(z_k) \right) \right), 
	\end{equation}
	if we write $z_k = \nabla\varphi^*\left( \nabla\varphi(x_k) - \lambda\nabla f(x_k) \right)$. 
	The differences between \eqref{eqn:BPG} and \eqref{eqn:RBMUMLA_special2}, other than the diffusion term, are the use of different Bregman--Moreau envelopes and the argument of the gradient of the smooth part. 
	
	Another advantage of using the same mirror map is that \Cref{assum:functions} can be made more precise. In particular, regarding \Cref{assum:functions}(ii), since $\renv_{\lambda, g}^\varphi$ is $\lambda^{-1}$-smooth relative to $\varphi$ \citep[Proposition 3.8(ii)]{laude2020bregman}, for the right BMUMLA with $\psi=\varphi$, i.e., \eqref{eqn:RBMUMLA_special}, \Cref{assum:functions}(ii) is made precise with a relative smoothness constant $\beta_g = \lambda^{-1}$ for $\renv_{\lambda, g}^\varphi$.

	\section{Convergence Analysis}
	\label{sec:conv}
	We now state the main convergence results derived from \citet{li2022mirror}. To quantify the convergence, we introduce a modified Wasserstein distance previously introduced by \citet{zhang2020wasserstein} and further applied in the analysis of \citet{li2022mirror}. 
	\begin{definition}
		\label{def:wasserstein}
		For two probability measures $\mu$ and $\nu$ on $\euB(\euX)$, the (squared) \emph{modified Wasserstein distance} under the mirror map $\nabla\varphi$ from $\mu$ to $\nu$ is defined by 
		\[\sfW_{2, \varphi}^2(\mu, \nu) \coloneqq \inf_{u\sim\mu, v\sim\nu}\Ex \left[\|\nabla\varphi(u) - \nabla\varphi(v) \|^2\right]. \]
		Note that if $\tilde{\mu} \coloneqq (\nabla\varphi)_\sharp\mu$ and $\tilde{\nu} \coloneqq (\nabla\varphi)_\sharp\nu$ are the pushforward measures of $\mu$ and $\nu$ by $\nabla\varphi$ respectively, then 
		$\sfW_{2, \varphi}^2(\mu, \nu) = \sfW_2^2(\tilde{\mu}, \tilde{\nu}) \coloneqq \inf_{\tilde{u}\sim\tilde{\mu}, \tilde{v}\sim\tilde{\nu}}\Ex \left[\|\tilde{u} - \tilde{v} \|^2\right]$. 
	\end{definition}
	The main convergence result is given as follows. 
	\begin{theorem}\label{thm:conv}
		Let \Cref{assum:smooth,assum:nonsmooth,assum:mirror1,assum:mirror2,assum:Bregman,assum:functions} hold and $M_\varphi < \alpha/2$. 
		Let $x_k \sim \mu_k$ be the iterates of \eqref{eqn:BMUMLA} with step size $\gamma \in\left]0, \gamma_{\max}\right]$, where $\gamma_{\max} = \euO\left((\alpha-2M_\varphi)^2/\left( \beta^2(1+8M_\varphi)^2\right)\right)$. Then, from any $x_0\sim\mu_0$, we have 
		\begin{equation}\label{eqn:bound_W2_surrogate}
			\sfW_{2, \varphi}(\mu_k, \pi_\lambda^\psi) \le \sqrt{2}\e^{-(\alpha-2M_\varphi)\gamma k}\sfW_{2, \varphi}(\mu_0, \pi_\lambda^\psi) + C\sqrt{2\gamma},
		\end{equation}
		where $C = \euO\left(\beta(1+8M_\varphi)\sqrt{d}/(\alpha-2M_\varphi)\right)$ is a constant. 
		Furthermore, if the stronger \Cref{assum:nonsmooth}(ii$^\ddagger$) rather than (ii$^\dagger$) holds and $\psi$ is $\rho$-strongly convex, then 
		\begin{equation}\label{eqn:bound_W2}
			\sfW_{2, \varphi}(\mu_k, \pi) \le \sqrt{2}\e^{-(\alpha-2M_\varphi)\gamma k}\sfW_{2, \varphi}(\mu_0, \pi)
			+ C\sqrt{2\gamma} + \left(1 + \sqrt{2}\e^{-(\alpha-2M_\varphi)\gamma k}\right)\frac{\eta\lambda}{\rho}\|g\|_{\Lip}^2, 
		\end{equation}
		where $\eta\coloneqq\sup_{(u,v)\in\euX\times\euX}\|\nabla\varphi(u) - \nabla\varphi(v)\|^2$. 
	\end{theorem}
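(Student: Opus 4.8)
The plan is to prove the two bounds in sequence, using the convergence guarantee of HRLMC from \citet{li2022mirror} as a black box for the surrogate target, and then controlling the approximation error via \Cref{prop:approx}. The key observation is that \Cref{remark:1} has already verified that $U_\lambda^\psi$ satisfies the relative smoothness and Legendre strong convexity requirements (assumptions (A1)--(A3) of \citet{li2022mirror}) with constants $\alpha$ and $\beta=\beta_f+\beta_g$, while \Cref{assum:mirror1}(iv) provides the modified self-concordance constant $M_\varphi$. So the first bound \eqref{eqn:bound_W2_surrogate}, which is entirely about convergence \emph{to the surrogate} $\pi_\lambda^\psi$, should follow by directly invoking the main nonasymptotic contraction theorem of \citet{li2022mirror} with these identifications, reading off $\gamma_{\max}$ and $C$ from their statement. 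The contraction rate $\e^{-(\alpha-2M_\varphi)\gamma k}$ and the $\sqrt{\gamma}$-order bias term are exactly the form of their result, and the condition $M_\varphi<\alpha/2$ ensures the rate exponent is positive.

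First I would set up the identification carefully: confirm that the surrogate potential $U_\lambda^\psi = f + \env_{\lambda,g}^\psi$ is well-defined and $\scrC^1$ (via \Cref{prop:gradient}(a)--(b)), that it is $\beta$-smooth relative to $\varphi$ and $\alpha$-Legendre strongly convex with respect to $\varphi$ (via \Cref{remark:1} and \Cref{assum:functions}(i)), and that $\pi_\lambda^\psi$ is a proper probability density (via \Cref{prop:approx}(a)). Then I would quote the relevant theorem from \citet{li2022mirror} verbatim in its contraction form, substituting $U\leftarrow U_\lambda^\psi$, and this yields \eqref{eqn:bound_W2_surrogate} directly. The factor $\sqrt{2}$ and the explicit $\euO$-expressions for $\gamma_{\max}$ and $C$ are simply what their bound produces under our constants.

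For the second bound \eqref{eqn:bound_W2}, the plan is to bridge from $\pi_\lambda^\psi$ to the true target $\pi$ using the triangle inequality for $\sfW_{2,\varphi}$, giving $\sfW_{2,\varphi}(\mu_k,\pi)\le\sfW_{2,\varphi}(\mu_k,\pi_\lambda^\psi)+\sfW_{2,\varphi}(\pi_\lambda^\psi,\pi)$. The first term is handled by \eqref{eqn:bound_W2_surrogate}. For the second term I would bound the modified Wasserstein distance between the surrogate and the true target by a total-variation quantity: since $\sfW_{2,\varphi}^2(\pi_\lambda^\psi,\pi)=\sfW_2^2(\tilde\pi_\lambda^\psi,\tilde\pi)$ with pushforwards under $\nabla\varphi$, and since $\nabla\varphi$-images lie in the bounded-diameter regime captured by $\eta=\sup_{(u,v)\in\euX\times\euX}\|\nabla\varphi(u)-\nabla\varphi(v)\|^2$, one can couple the two measures so that the squared transport cost is at most $\eta$ times a total-variation coupling, i.e.\ $\sfW_{2,\varphi}^2(\pi_\lambda^\psi,\pi)\le\eta\,\|\pi_\lambda^\psi-\pi\|_{\TV}$. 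Invoking \Cref{prop:approx}(c), which gives $\|\pi_\lambda^\psi-\pi\|_{\TV}\le\lambda\|g\|_{\Lip}^2/\rho$ under \Cref{assum:nonsmooth}(ii$^\ddagger$) and $\rho$-strong convexity of $\psi$, then yields $\sfW_{2,\varphi}(\pi_\lambda^\psi,\pi)\le\sqrt{\eta\lambda\|g\|_{\Lip}^2/\rho}$.

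The hard part will be reconciling the exact algebraic form of the final term in \eqref{eqn:bound_W2}---namely $\bigl(1+\sqrt{2}\e^{-(\alpha-2M_\varphi)\gamma k}\bigr)\eta\lambda\|g\|_{\Lip}^2/\rho$---with the naive triangle-inequality estimate, since the latter produces a $\sqrt{\eta\lambda\|g\|_{\Lip}^2/\rho}$ (square-root) dependence whereas the stated bound is \emph{linear} in $\eta\lambda\|g\|_{\Lip}^2/\rho$ and carries the contraction-dependent prefactor. This strongly suggests that the correct argument does \emph{not} go through a plain triangle inequality on $\sfW_{2,\varphi}$, but rather couples the two Langevin dynamics (or their discretizations) \emph{simultaneously} and propagates the TV-gap between $\pi$ and $\pi_\lambda^\psi$ through the same contraction, so that the approximation error inherits the factor $1+\sqrt{2}\e^{-(\alpha-2M_\varphi)\gamma k}$ from combining the initial-distance and stationary-distance contributions. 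Concretely, I expect the right approach is to run \eqref{eqn:bound_W2_surrogate} once from $\mu_0$ toward $\pi_\lambda^\psi$ and once interpreting $\pi$ itself as an initialization whose $\sfW_{2,\varphi}$-distance to $\pi_\lambda^\psi$ is controlled linearly by the TV-bound through $\eta$, then recombine; pinning down exactly how the linear (rather than square-root) TV-to-Wasserstein conversion arises---likely by exploiting that $\pi$ and $\pi_\lambda^\psi$ differ only through the envelope of $g$ and admit a shared coupling whose cost is governed directly by $\eta\|\pi_\lambda^\psi-\pi\|_{\TV}$ without taking a square root---will be the delicate step, and I would treat \Cref{prop:approx}(c) as the quantitative input there.
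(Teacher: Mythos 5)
Your overall route is the same as the paper's: the first bound \eqref{eqn:bound_W2_surrogate} is obtained exactly as you describe, by verifying that $U_\lambda^\psi$ satisfies the hypotheses of Theorem 3.1 of \citet{li2022mirror} (via \Cref{remark:1} and \Cref{assum:functions}) and invoking that theorem as a black box with $U\leftarrow U_\lambda^\psi$. For the second bound, the paper also does nothing more sophisticated than the plain triangle inequality: it writes $\sfW_{2,\varphi}(\mu_k,\pi)\le\sfW_{2,\varphi}(\mu_k,\pi_\lambda^\psi)+\sfW_{2,\varphi}(\pi_\lambda^\psi,\pi)$, bounds the second term by a TV quantity via \Cref{prop:approx}(c), and then applies the triangle inequality a \emph{second} time to $\sfW_{2,\varphi}(\mu_0,\pi_\lambda^\psi)\le\sfW_{2,\varphi}(\mu_0,\pi)+\sfW_{2,\varphi}(\pi_\lambda^\psi,\pi)$ before substituting back into \eqref{eqn:bound_W2_surrogate}; that second application is precisely what produces the prefactor $1+\sqrt{2}\e^{-(\alpha-2M_\varphi)\gamma k}$. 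You correctly anticipated this mechanism, so there is no hidden simultaneous coupling of the two dynamics.

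Where you and the paper diverge is the step you flagged as delicate, and your hesitation is pointing at a real issue rather than a missing idea on your side. The paper cites Theorem 4 of \citet{gibbs2002choosing} with the cost $d(x,y)=\|\nabla\varphi(x)-\nabla\varphi(y)\|^2$ and asserts the \emph{linear} inequality $\sfW_{2,\varphi}(\mu,\nu)\le\eta\|\mu-\nu\|_{\TV}$. But that coupling argument controls the expected cost, i.e.\ the \emph{squared} modified Wasserstein distance, so what it rigorously yields is $\sfW_{2,\varphi}^2(\mu,\nu)\le\eta\|\mu-\nu\|_{\TV}$, hence $\sfW_{2,\varphi}(\pi_\lambda^\psi,\pi)\le\sqrt{\eta\lambda\|g\|_{\Lip}^2/\rho}$ --- exactly the square-root bound you derived. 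In the relevant regime $\eta\lambda\|g\|_{\Lip}^2/\rho<1$ the linear form is strictly stronger and does not follow from the cited result, so the paper's proof as written loses a square root at \eqref{eqn:W_TV}; carried through honestly, the smoothing bias in \eqref{eqn:bound_W2} should scale as $\euO(\sqrt{\lambda})$ rather than $\euO(\lambda)$ (which would also propagate to the choice of $\lambda$ in \Cref{cor:mixing}). So your proposal is faithful to the intended argument and is, at this one step, more careful than the paper; what it cannot do --- and what the paper does not actually supply --- is a legitimate derivation of the linear-in-$\lambda$ form of the last term.
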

	From \Cref{thm:conv}, we can derive a mixing time bound for \eqref{eqn:BMUMLA} similar to Corollary 3.2 of \citet{li2022mirror}. 
	\begin{corollary}\label{cor:mixing}
		Suppose that \Cref{assum:smooth,assum:mirror1,assum:mirror2,assum:Bregman,assum:functions} and \Cref{assum:nonsmooth}(ii$^\ddagger$) rather than (ii$^\dagger$) hold, $\psi$ is $\rho$-strongly convex, and $M_\varphi < \alpha/2$. 
		Then, for any target accuracy $\varepsilon>0$, in order to achieve $\sfW_2 (\tilde{\mu}_k, \tilde{\pi}) \le \varepsilon$, it suffices to run BMUMLA in the dual space \eqref{eqn:BMUMLA_dual} with step size $\gamma = \varepsilon^2/(18C^2)$ and smoothing parameter $\lambda=\rho\varepsilon/(3\tilde{\eta}\|g\|_{\Lip}^2)$ for $k$ iterations, where 
		\begin{equation}\label{eqn:mixing_time}
			k\ge \frac{1}{(\alpha-2M_\varphi)\gamma}\log\left(\frac{3\sqrt{2}[\sfW_2 (\tilde{\mu}_0, \tilde{\pi}_\lambda^\psi) + \tilde{\eta}\lambda\|g\|_{\Lip}^2/\rho]}{\varepsilon}\right) 
			=\tilde{\euO}\left(\frac{\beta^2(1+8M_\varphi)^2d}{(\alpha-2M_\varphi)^3\varepsilon^2}\right), 
		\end{equation}
		where $\tilde{\eta}\coloneqq\sup_{(u,v)\in\euX\times\euX}\|u - v\|^2$.
\end{corollary}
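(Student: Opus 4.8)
The plan is to derive \Cref{cor:mixing} directly from \Cref{thm:conv} by the standard device of balancing the three error sources in \eqref{eqn:bound_W2} and then reading off the iteration complexity. First I would pass to the dual space: by \Cref{def:wasserstein} the pushforwards satisfy $\sfW_{2,\varphi}(\mu_k,\pi)=\sfW_2(\tilde\mu_k,\tilde\pi)$, and the dual iterates $y_k=\nabla\varphi(x_k)$ produced by \eqref{eqn:BMUMLA_dual} have law $\tilde\mu_k=(\nabla\varphi)_\sharp\mu_k$; hence the bound \eqref{eqn:bound_W2}, stated for the modified Wasserstein distance, is verbatim a bound on the ordinary $\sfW_2$ between $\tilde\mu_k$ and $\tilde\pi$. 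The hypotheses of the corollary are exactly what make \eqref{eqn:bound_W2} applicable with a \emph{linear}-in-$\lambda$ smoothing bias: $M_\varphi<\alpha/2$ guarantees $\alpha-2M_\varphi>0$ so the prefactor $\e^{-(\alpha-2M_\varphi)\gamma k}$ genuinely contracts, while \Cref{assum:nonsmooth}(ii$^\ddagger$) together with $\rho$-strong convexity of $\psi$ supplies the bias term $\tilde\eta\lambda\|g\|_{\Lip}^2/\rho$.

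Next I would split the target accuracy into three equal parts. Imposing $C\sqrt{2\gamma}\le\varepsilon/3$ on the discretization bias forces $\gamma=\varepsilon^2/(18C^2)$, which is the step size stated in the corollary. Imposing that the smoothing bias $\tilde\eta\lambda\|g\|_{\Lip}^2/\rho$ be at most $\varepsilon/3$ (after bounding the prefactor $1+\sqrt2\,\e^{-(\alpha-2M_\varphi)\gamma k}$ by an absolute constant) forces $\lambda=\rho\varepsilon/(3\tilde\eta\|g\|_{\Lip}^2)$, the stated smoothing parameter. For the remaining initialization term I would bound the factor $\sfW_2(\tilde\mu_0,\tilde\pi)$ appearing in \eqref{eqn:bound_W2} by the triangle inequality, $\sfW_2(\tilde\mu_0,\tilde\pi)\le\sfW_2(\tilde\mu_0,\tilde\pi_\lambda^\psi)+\tilde\eta\lambda\|g\|_{\Lip}^2/\rho$, which is exactly the bracketed quantity that appears inside the logarithm.

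Requiring the contracting term $\sqrt2\,\e^{-(\alpha-2M_\varphi)\gamma k}\bigl[\sfW_2(\tilde\mu_0,\tilde\pi_\lambda^\psi)+\tilde\eta\lambda\|g\|_{\Lip}^2/\rho\bigr]\le\varepsilon/3$ and solving for $k$ then yields the logarithmic lower bound on $k$ in \eqref{eqn:mixing_time}. To obtain the $\tilde{\euO}$ rate I would substitute $\gamma=\varepsilon^2/(18C^2)$ into the prefactor $1/((\alpha-2M_\varphi)\gamma)=18C^2/((\alpha-2M_\varphi)\varepsilon^2)$ and insert $C=\euO(\beta(1+8M_\varphi)\sqrt d/(\alpha-2M_\varphi))$, so that $C^2=\euO(\beta^2(1+8M_\varphi)^2 d/(\alpha-2M_\varphi)^2)$ and the prefactor becomes $\euO(\beta^2(1+8M_\varphi)^2 d/((\alpha-2M_\varphi)^3\varepsilon^2))$; the logarithmic factor is then absorbed into $\tilde{\euO}$.

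I do not expect a genuine obstacle here, since everything is bookkeeping on top of \Cref{thm:conv}, closely paralleling Corollary 3.2 of \citet{li2022mirror}. The only points demanding care are the passage between $\sfW_{2,\varphi}$ and the dual $\sfW_2$ (with $\tilde\eta$ understood as the dual-space diameter constant matching the $\eta$ of the theorem), and the treatment of the prefactor $1+\sqrt2\,\e^{-(\alpha-2M_\varphi)\gamma k}$ multiplying the bias, which one must bound by an absolute constant so that the three allocations really sum to $\varepsilon$ up to constants; verifying that the surviving $\log(1/\varepsilon)$ factor is subsumed by $\tilde{\euO}$ completes the argument.
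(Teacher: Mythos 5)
Your proposal is correct and follows essentially the same route as the paper, whose own proof of \Cref{cor:mixing} is just the remark that \eqref{eqn:mixing_time} ``follows from \eqref{eqn:bound_W2} with certain algebraic manipulations''---your three-way $\varepsilon/3$ allocation, the resulting choices of $\gamma$ and $\lambda$, and the substitution of $C$ into $1/((\alpha-2M_\varphi)\gamma)$ are exactly those manipulations, paralleling Corollary~3.2 of \citet{li2022mirror} as the paper indicates. Your explicit handling of the $\sfW_{2,\varphi}$-versus-dual-$\sfW_2$ identification and of the prefactor $1+\sqrt{2}\,\e^{-(\alpha-2M_\varphi)\gamma k}$ is more careful than what the paper records, and is sound.
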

Assuming all other constants including $\alpha$, $\beta$, $M_\varphi$, $\eta$, $\rho$ and $\|g\|_{\Lip}$ are independent of $d$, then, similar to \citet{li2022mirror} for the relatively smooth case, \eqref{eqn:BMUMLA} has a biased convergence guarantee with a bias incurred by the algorithm which scales as $\euO(\sqrt{d\gamma})$. Since essentially we are sampling from the surrogate distribution $\pi_\lambda^\psi$ which is different from $\pi$, this incurs an additional bias. From \eqref{eqn:bound_W2}, this bias attributed to smoothing with the Bregman--Moreau envelope scales as $\euO(\lambda)$ for large enough $k$ and $\eta<+\infty$. We then obtain the same mixing time bound of $\tilde{\euO}(d/\varepsilon^2)$ for BMUMLA as the one for MLA in \citet{li2022mirror}. Note that the appearance of $\eta$ limits the choice of mirror maps in \eqref{eqn:BMUMLA} as some choices of $\varphi$ might not give a bounded $\eta$ \citep[see][for related discussion]{jiang2021mirror}.

\begin{figure*}[t]
\centering
\hspace*{2mm}
\begin{subfigure}[b]{0.18\textwidth}
	\centering
	\includegraphics[width=\textwidth]{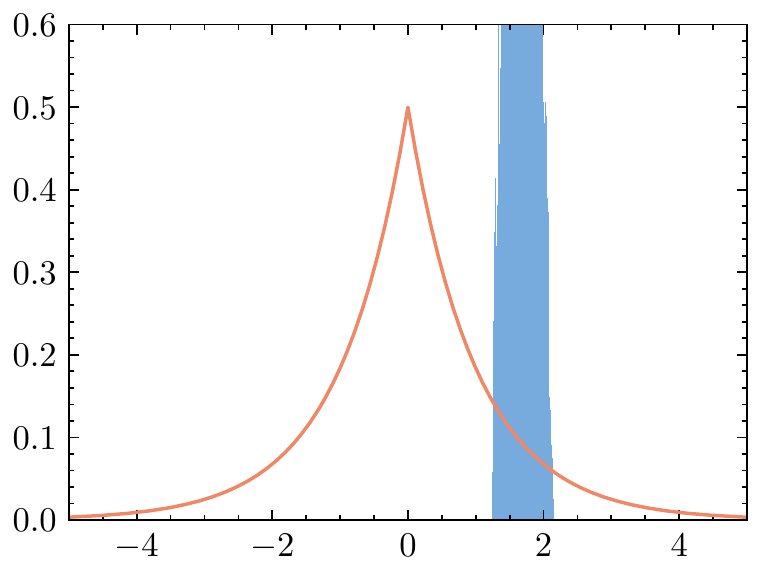}
	\label{fig:myula_1}
\end{subfigure}
\begin{subfigure}[b]{0.18\textwidth}
	\centering
	\includegraphics[width=\textwidth]{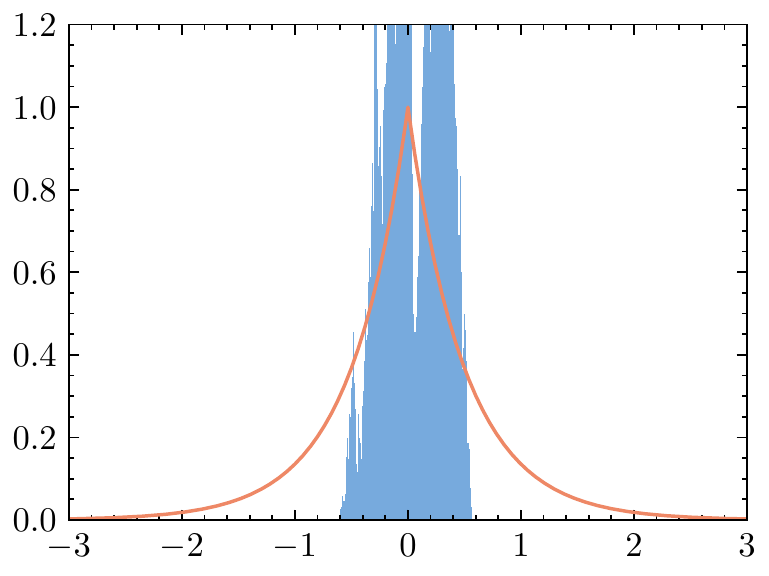}
	\label{fig:myula_2}
\end{subfigure}
\begin{subfigure}[b]{0.18\textwidth}
	\centering
	\includegraphics[width=\textwidth]{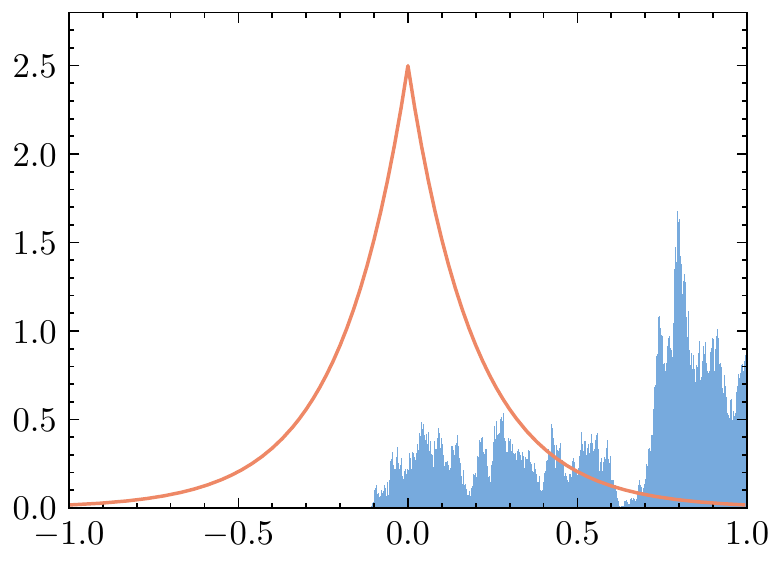}
	\label{fig:myula_5}
\end{subfigure}
\begin{subfigure}[b]{0.18\textwidth}
	\centering
	\includegraphics[width=\textwidth]{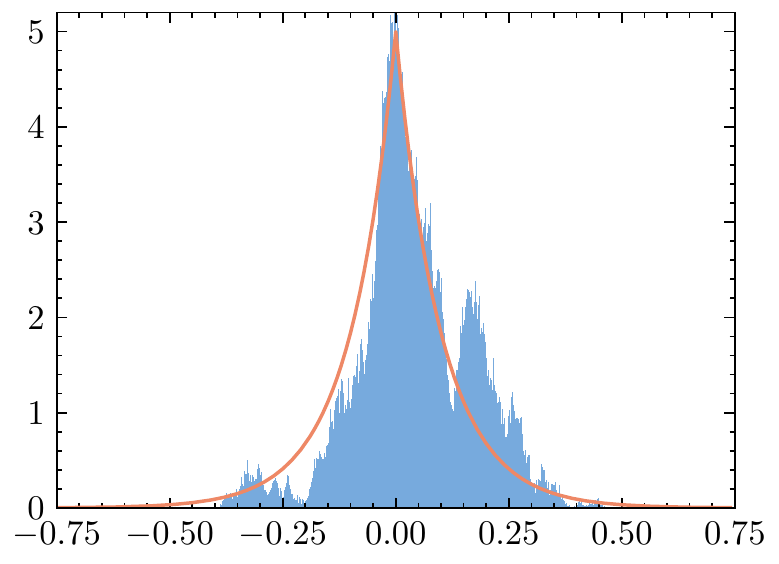}
	\label{fig:myula_10}
\end{subfigure}
\begin{subfigure}[b]{0.18\textwidth}
	\centering
	\includegraphics[width=\textwidth]{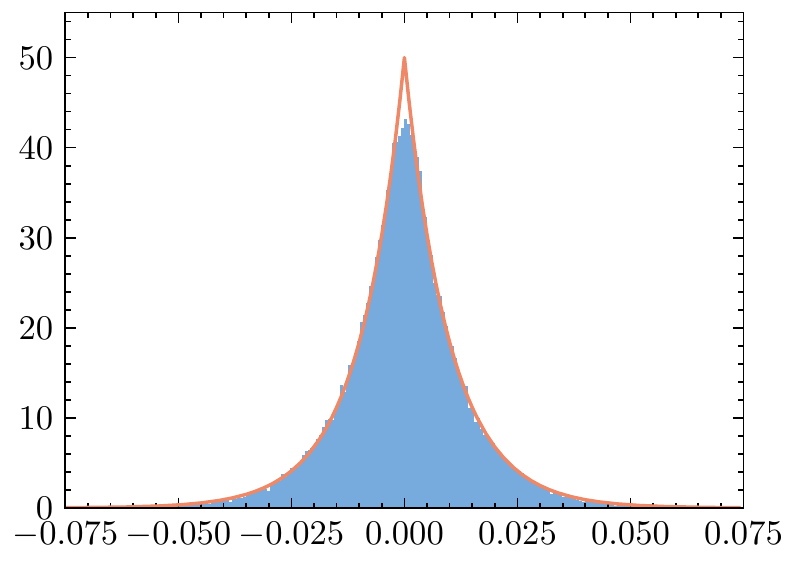}
	\label{fig:myula_100}
\end{subfigure}
\newline
\centering
\begin{subfigure}[b]{0.18\textwidth}
	\centering
	\includegraphics[width=\textwidth]{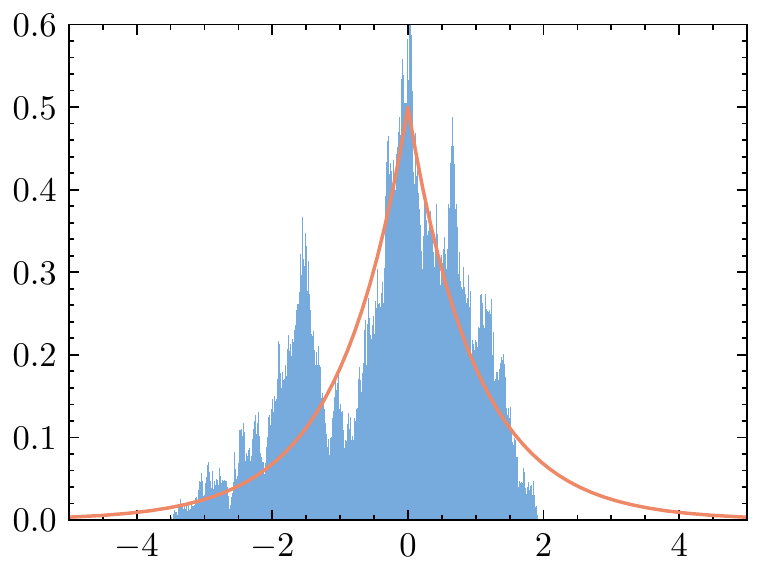}
	\caption{1\textsuperscript{st} dimension}
	\label{fig:bmmmla_1}
\end{subfigure}
\begin{subfigure}[b]{0.18\textwidth}
	\centering
	\includegraphics[width=\textwidth]{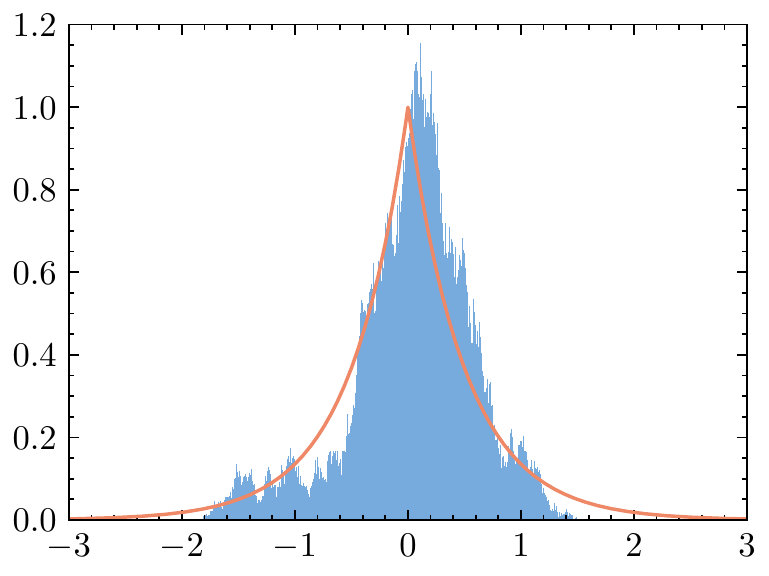}
	\caption{2\textsuperscript{nd} dimension}
	\label{fig:bmmmla_2}
\end{subfigure}
\begin{subfigure}[b]{0.18\textwidth}
	\centering
	\includegraphics[width=\textwidth]{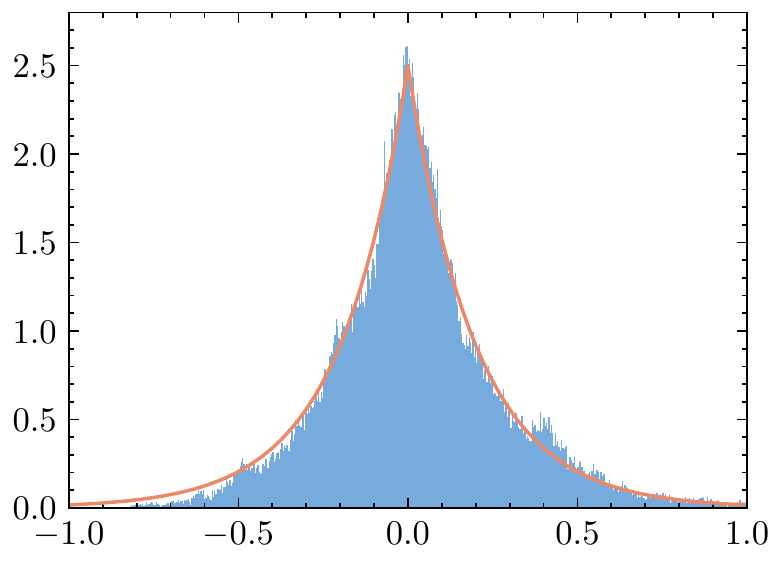}
	\caption{5\textsuperscript{th} dimension}
	\label{fig:bmmmla_5}
\end{subfigure}
\begin{subfigure}[b]{0.18\textwidth}
	\centering
	\includegraphics[width=\textwidth]{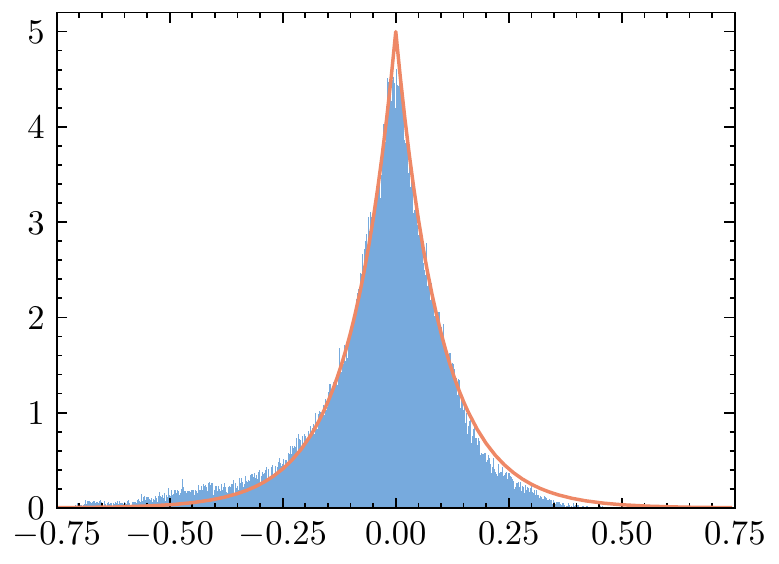}
	\caption{10\textsuperscript{th} dimension}
	\label{fig:bmmmla_10}
\end{subfigure}
\begin{subfigure}[b]{0.18\textwidth}
	\centering
	\includegraphics[width=\textwidth]{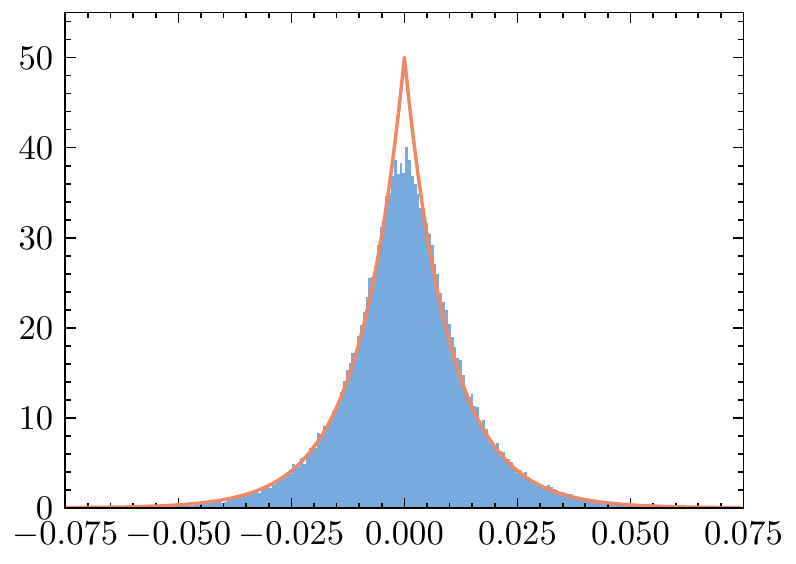}
	\caption{100\textsuperscript{th} dimension}
	\label{fig:bmumla_100}
\end{subfigure}
\caption{Histograms of samples (blue) from MYULA (1\textsuperscript{st} row), BMUMLA (2\textsuperscript{nd} row) and the true densities (orange). 
}
\label{fig:an_laplace}
\end{figure*}

\section{Numerical Experiments}	
\label{sec:expt}
We perform numerical experiments of sampling anisotropic Laplace distributions which have nonsmooth potentials. Other additional numerical experiments are given in \Cref{sec:add_expt}. In this section, we use bold lower case letters $\btheta = (\theta_i)_{1\le i\le d}^\top\in\RR^d$ to denote vectors. All numerical implementations can be found at \url{https://github.com/timlautk/bregman_prox_langevin_mc}.

For such a nonsmooth sampling task, inspired by \citet{vorstrup2021gradient,bouchard2018bouncy}, we consider the case where $f=0$ and $g(\btheta) = \onenorm{\balpha\odot\btheta} = \sumd \alpha_i|\theta_i|$ with $\balpha = (1,2,\ldots, d)^\top$. This is an example in which MYULA is known to perform poorly due to the \emph{anisotropy} \citep[\S4.1]{vorstrup2021gradient}: with a relatively small step size, MYULA mixes fast for the \emph{narrow} marginals, whereas it mixes slowly in the \emph{wide} ones. To alleviate this issue, the mirror map in our proposed scheme allows to adapt to the geometry of the potential, while the square root of the Hessian of $\varphi$ serves as a preconditioner of the diffusion term. 
We choose $\varphi$ to be the $\bbeta$-hyperbolic entropy  \citep[hypentropy;][]{ghai2020exponentiated}, defined by
\[
\varphi_{\bbeta}(\btheta) \coloneqq \sumd \left[\theta_i\arsinh\left(\theta_i/\beta_i\right) - \sqrt{\theta_i^2 + \beta_i^2}\right], 
\]
where $\btheta\in\RR^d$ and $\bbeta\in\RP^d$. We allow $\beta_i$'s to vary across different dimensions to enhance  flexibility. The hypentropy interpolates between the squared Euclidean distance and the Boltzmann--Shannon entropy as $\bbeta$ varies. We choose the associated Legendre function of the Bregman--Moreau envelope to be $\psi(\btheta) = \frac12\|\btheta\|_{\bM}^2 = \frac12\dotp{\btheta}{\bM\btheta}$, where $\bM = \Diag(\balpha/2)$, so that $\psi$ is strongly convex. 

We apply the proposed algorithms BMUMLA and BMMMLA, and compare their performance with that of MYULA. We consider 
$d=100$, draw $K=10^5$ samples, with a tight Bregman--Moreau envelope using a small smoothing parameter $\lambda= 10^{-5}$ and a small step size $\gamma = \lambda/2$. The parameter of the hyperbolic entropy is $\bbeta = (2\sqrt{d-i+1})_{1\le i\le d}^\top$. 
Further implementation details and verification of assumptions are given in \Cref{sec:details_expt}. 
The marginal empirical densities are given in \Cref{fig:an_laplace} (\Cref{fig:an_laplace_2} for BMMMLA in \Cref{sec:details_expt}).

In this example, MYULA does not mix fast for the wide marginals (the lower dimensions, even at the 10\textsuperscript{th} dimension), whereas BMUMLA and BMMMLA are able to mix equally fast across different dimensions. Although our proposed methods require knowledge of the target distribution, we expect even better mixing when $\bbeta$ is better tuned or adaptively learned using certain auxiliary procedures. 	
Moreover, a quick comparison with methods in \citet[Figure 2]{vorstrup2021gradient} indicates that, despite being asymptotically biased (since $\gamma$ and $\lambda$ are chosen as constants), our proposed algorithms also appear to be comparable to or even outperform some of the asymptotically exact algorithms such as pMALA \citep{pereyra2016proximal} and the bouncy particle sampler \citep{bouchard2018bouncy}. We however leave a comprehensive comparison with other classes of MCMC algorithms as future work.

\section{Discussion}
In this paper, we propose two efficient Bregman proximal Langevin algorithms for efficient sampling from nonsmooth convex composite potentials. Our proposed schemes enhance the flexibility of existing (overdamped) LMC algorithms in two aspects: the use of Bregman divergences in (i) altering the \emph{geometry} of the problem and hence the algorithm; (ii) imposing the smooth approximation. Theoretically, our proposed schemes have a vanishing bias with the step size and the smoothing parameter of the Bregman--Moreau envelope, while numerically they outperform MYULA in sampling nonsmooth anisotropic distributions. 

There are several interesting directions to extend the current work. Full gradients can be replaced by stochastic or mini-batch gradients in Langevin algorithms \citep[see e.g.,][]{welling2011bayesian,durmus2019analysis,salim2019stochastic,salim2020primal,nemeth2021stochastic} to avoid costly computation of the full gradient in high dimensions. 
Our proposed algorithm also has potential implications for nonconvex potentials or nonconvex optimization algorithms based on Langevin dynamics \citep{mangoubi2019nonconvex,cheng2018sharp,raginsky2017nonconvex,vempala2019rapid}, as the Bregman proximial gradient algorithm is able to solve nonconvex optimization algorithms \citep{bolte2018first}. We also refer to recent results of the Bregman--Moreau envelopes of nonconvex functions \citep{laude2020bregman} and the use of Moreau envelope in nonsmooth sampling algorithms 
for computing the Exponential Weighted Aggregation (EWA) estimators 
\citep{luu2021sampling}. Recently, \citet{jiang2021mirror} leverages the assumption of an isoperimetric inequality called the \emph{mirror log-Sobolev inequality} for the target density in mirror Langevin algorithms, which is weaker than assuming a Legendre strongly convex potential. 
It is however unclear to see how Bregman--Moreau envelopes in the potential would satisfy this assumption and other weaker notions of relative smoothness of the potential introduced in this paper. 
A natural extension is to consider sampling schemes based on the underdamped Langevin dynamics \citep{cheng2018underdamped} or Hamiltonian dynamics \citep{neal1993bayesian} with the Bregman--Moreau enveloped potentials, and to include the Metropolis--Hastings adjustment step to accelerate mixing. 
Other than the Bregman--Moreau envelope, the Bregman forward-backward envelope \citep{ahookhosh2021bregman} can also be used to envelop the whole composite potential; see the recent work by \citet{eftekhari2022forward} in a similar spirit using the forward-backward envelope with the overdamped Langevin algorithm. 	
More sophisticated discretization scheme such as the explicit stabilized SK-ROCK scheme \citep{abdulle2018optimal} in \citet{pereyra2020accelerating} could also constitute new sampling schemes based on MLD. 
It is also interesting to compare our proposed schemes with gradient-based MCMC algorithms based on piecewise-deterministic Markov processes for nonsmooth sampling as in \citet{vorstrup2021gradient}, e.g., the zig-zag sampler \citep{bierkens2019zig} and the bouncy particle sampler \citep{bouchard2018bouncy}.

\section*{Acknowledgements}
This work is partially supported by NIH R01LM01372201, NSF TRIPOD 1740735, NSF DMS1454377.
Part of this work was done when Tim Tsz-Kit Lau was participating in the workshop ``Sampling Algorithms and Geometries on Probability Distributions'' at the Simons Institute for the Theory of Computing.

\bibliographystyle{plainnat}
\bibliography{ref}

\newpage
\onecolumn
\appendix
\numberwithin{equation}{section}
\numberwithin{figure}{section}

\begin{center}
	{\LARGE \textsc{Appendix}}
\end{center}

\section{Proofs of Main Text}
\label{sec:proofs}

\subsection{Proof of \Cref{prop:approx}}
\Cref{prop:approx} includes statements similar to those in \citet[Proposition 3.1]{durmus2018efficient} and \citet[Theorem 3]{vorstrup2021gradient}. We provide the proofs here for self-containedness. In particular, we further need $\psi$ to be $\rho$-strongly convex in (c). 
\begin{proposition}
	\label{prop:strong_convex}
	Let $\psi$ be a Legendre function and $\rho$-strongly convex ($\rho>0$), then
	\begin{equation}\label{eqn:strong_convex}
		(\forall (y, \ty)\in\euY\times\euY)\quad \frac\rho2\|y-\ty\|^2\le D_\psi(y,\ty). 
	\end{equation}
	\begin{proof}[Proof of \Cref{prop:strong_convex}]
		By definition, $\psi$ is $\rho$-strongly convex if and only if 
		\begin{multline*}
			(\forall (y, \ty)\in\euY\times\euY)\quad \psi(y) \ge \psi(\ty) + \dotp{\nabla\psi(\ty)}{y-\ty}+\frac\rho2\|y-\ty\|^2\\
			\Leftrightarrow (\forall (y, \ty)\in\euY\times\euY)\quad \psi(y) - \psi(\ty) - \dotp{\nabla\psi(\ty)}{y-\ty}\ge \frac\rho2\|y-\ty\|^2.
		\end{multline*}
		Then the result follows from the definition of the Bregman divergence \eqref{eqn:bregman_div}. 

	\end{proof}
\end{proposition}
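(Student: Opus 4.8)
The plan is to derive the bound directly by matching two definitions, since the claimed inequality is essentially a restatement of the first-order characterization of strong convexity. First I would invoke the fact that, for a differentiable function $\psi$, $\rho$-strong convexity is equivalent to the gradient (subgradient) inequality
\[
(\forall (y,\ty)\in\euY\times\euY)\quad \psi(y) \ge \psi(\ty) + \dotp{\nabla\psi(\ty)}{y-\ty} + \frac{\rho}{2}\|y-\ty\|^2 .
\]
The only preliminary point to check is that this characterization is legitimately available here: because $\psi$ is Legendre it is essentially smooth, hence differentiable on $\euY = \interior\dom\psi$, so $\nabla\psi(\ty)$ exists and is well-defined for each $\ty\in\euY$, and the gradient inequality is the correct first-order condition on this set.

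Next I would simply rearrange, moving the affine and linear terms to the left-hand side, which yields
\[
\psi(y) - \psi(\ty) - \dotp{\nabla\psi(\ty)}{y-\ty} \ge \frac{\rho}{2}\|y-\ty\|^2 .
\]
Since $\ty\in\euY$, the left-hand side is precisely the finite branch of the Bregman divergence $D_\psi(y,\ty)$ as defined in \eqref{eqn:bregman_div}; substituting gives the claim. Restricting both arguments to $\euY$ is what guarantees both that $\nabla\psi(\ty)$ makes sense and that $D_\psi$ is evaluated on its finite (rather than $+\infty$) branch, so no case distinction is needed.

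Honestly, there is no genuine obstacle here: the result is a one-line consequence of aligning the definition of $\rho$-strong convexity with the definition of the Bregman divergence. The closest thing to a subtlety is the bookkeeping of ensuring differentiability and the correct branch of $D_\psi$ on $\euY$, both of which are immediate from $\psi$ being Legendre. I would therefore present the proof as a short two-line chain, stating the strong-convexity inequality and then reading off the Bregman divergence from its left-hand side.
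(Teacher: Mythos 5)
Your proof is correct and follows essentially the same route as the paper's: state the first-order characterization of $\rho$-strong convexity, rearrange, and recognize the left-hand side as $D_\psi(y,\ty)$ via \eqref{eqn:bregman_div}. The only difference is that you additionally spell out why differentiability on $\euY$ and the finite branch of $D_\psi$ are available (from $\psi$ being Legendre and $\ty\in\euY$), which the paper leaves implicit.
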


\begin{proof}[Proof of \Cref{prop:approx}]\hfill
	\begin{enumerate}[label=(\alph*)]
		\item 
		\begin{enumerate}[label=(\roman*)]		
			\item We first suppose that \Cref{assum:nonsmooth}(ii$^\dagger$) holds. By \citet[Proposition 2.2]{bauschke2018regularizing}, $U \ge U_\lambda^\psi$, which implies 
			\[0 < \int_{\RR^d} \e^{-U(x)}\,\diff x < \int_{\RR^d} \e^{-U_\lambda^\psi(x)}\,\diff x. \]
			It suffices to prove that $\e^{-\env_{\lambda, g}^\psi}$ is integrable (with respect to the Lebesgue measure) which in turn implies $\e^{-U_\lambda^\psi}$ is integrable since $f$ is lower bounded. 	
			By \Cref{assum:nonsmooth}(i) and \citet[Lemma A.1]{durmus2018efficient}, there exist $\rho_g >0$, $x_g\in\RR^d$ and $M_1\in\RR$ such that for all $x\in\RR^d$, 
			\[g(x) - g(x_g) \ge M_1 + \rho_g\|x-x_g\|. \]
			Then, by \Cref{def:bregman_env,def:bregman_prox}, for any $x\in\RR^d$, we have
			\begin{align}\label{eqn:bound_lenv}
				\lenv_{\lambda, g}^\psi(x) - g(x_g) &= g\left(\lprox{\lambda, g}{\psi}(x)\right) - g(x_g) + \frac{1}{\lambda}D_\psi\left(\lprox{\lambda, g}{\psi}(x), x\right) \nonumber\\
				&\ge M_1 + \rho_g\left\| \lprox{\lambda, g}{\psi}(x) - x_g\right\| + \frac{1}{\lambda}D_\psi\left(\lprox{\lambda, g}{\psi}(x), x\right) \nonumber\\
				&\ge M_1 + \inf_{y\in\RR^d}\left\{ \rho_g\|y - x_g\| + \frac{1}{\lambda}D_\psi\left(y, x\right)\right\} \nonumber\\
				&\ge M_1 + \lenv_{\lambda, h}^\psi(x), 
			\end{align}
			where $h\colon\RR^d\to\RR\colon x\mapsto \rho_g\|x-x_g\|$. Likewise, using the right Bregman--Moreau envelope, we have 		
			\begin{align}\label{eqn:bound_renv}
				\renv_{\lambda, g}^\psi(x) - g(x_g) &= g\left(\rprox{\lambda, g}{\psi}(x)\right) - g(x_g) + \frac{1}{\lambda}D_\psi\left(x,  \rprox{\lambda, g}{\psi}(x)\right) \nonumber\\
				&\ge M_1 + \rho_g\left\| \rprox{\lambda, g}{\psi}(x) - x_g\right\| + \frac{1}{\lambda}D_\psi\left(x, \rprox{\lambda, g}{\psi}(x)\right) \nonumber\\
				&\ge M_1 + \inf_{y\in\RR^d}\left\{ \rho_g\|y - x_g\| + \frac{1}{\lambda}D_\psi\left(x, y\right)\right\} \nonumber\\
				&\ge M_1 + \renv_{\lambda, h}^\psi(x).  
			\end{align}
			Next, using \Cref{def:bregman_env} again, for all $x\in\RR^d$, 
			\begin{align*}
				\lenv_{\lambda, h}^\psi(x) &= h\left(\lprox{\lambda, h}{\psi}(x)\right) + \frac{1}{\lambda}D_\psi\left(\lprox{\lambda, h}{\psi}(x), x\right) \ge h\left(\lprox{\lambda, h}{\psi}(x)\right) = \rho_g\left\| \lprox{\lambda, h}{\psi}(x) - x_g\right\|,\\
				\renv_{\lambda, h}^\psi(x) &= h\left(\rprox{\lambda, h}{\psi}(x)\right) + \frac{1}{\lambda}D_\psi\left(x, \rprox{\lambda, h}{\psi}(x)\right) \ge h\left(\rprox{\lambda, h}{\psi}(x)\right) = \rho_g\left\| \rprox{\lambda, h}{\psi}(x) - x_g\right\|. 
			\end{align*}
			It follows that there exists $M_2\in\RR$ such that for all $x\in\RR^d$, 
			\[\min\left\{\lenv_{\lambda, h}^\psi(x), \renv_{\lambda, h}^\psi(x)\right\} \ge \rho_g\|x-x_g\| + M_2. \]
			Combining this with \eqref{eqn:bound_lenv} and \eqref{eqn:bound_renv} yields the desired result.

			\item Now we suppose that \Cref{assum:nonsmooth}(ii$^\ddagger$) holds and $\psi$ is $\rho$-strongly convex, then for any $\lambda>0$, 
			\begin{equation}\label{eqn:envelope_bound}
				\sup_{x\in\RR^d}\left\{ g(x) - \env_{\lambda, g}^\psi(x)\right\} \le \frac{\lambda}{2\rho}\|g\|_{\Lip}^2. 
			\end{equation}
			If \eqref{eqn:envelope_bound} holds, then
			\[(\forall x\in\RR^d) \quad U_\lambda^\psi(x) \coloneqq f(x) + \env_{\lambda, g}^\psi(x) \ge f(x) + g(x) - \frac{\lambda}{2\rho}\|g\|_{\Lip}^2, \]
			which implies
			\[\int_{\RR^d} \e^{-U_\psi^\lambda(x)}\,\diff x \le \e^{\lambda\|g\|_{\Lip}^2/(2\rho)} \int_{\RR^d} \e^{-U(x)}\,\diff x < +\infty. \]

			Since \Cref{assum:nonsmooth}(ii$^\ddagger$) holds, we have 
			\begin{align}
				(\forall x\in\RR^d) \;\; g(x) - \lenv_{\lambda, g}^\psi(x) &= g(x) - \inf_{y\in\RR^d}\left\{g(y) + \frac{1}{\lambda} D_\psi(y, x)\right\} \nonumber\\
				&= \sup_{y\in\RR^d} \left\{g(x) - g(y) - \frac{1}{\lambda}D_\psi(y, x)\right\} \nonumber\\
				&\le \sup_{y\in\RR^d} \left\{\|g\|_{\Lip}\cdot\|x-y\| - \frac{1}{\lambda}D_\psi(y, x)\right\} \nonumber\\
				&\le \sup_{y\in\RR^d} \left\{\|g\|_{\Lip}\cdot\|x-y\| - \frac{\rho}{2\lambda}\|y- x\|^2\right\} & \text{by \eqref{eqn:strong_convex}} \nonumber\\
				&\le \frac{\lambda}{2\rho}\|g\|_{\Lip}^2, \label{eqn:bound_env_diff}
			\end{align}
			since the maximum of $u\mapsto au-bu^2$ for $a\in\RP$ and $b\in\RPP$ is $a^2/(4b)$. 
			Likewise, we also have the same bound for the right Bregman--Moreau envelope
			\[(\forall x\in\RR^d) \quad g(x) - \renv_{\lambda, g}^\psi(x) \le \frac{\lambda}{2\rho}\|g\|_{\Lip}^2.\]
		\end{enumerate}
		
		\item Recall that $\pi$ has a density with respect to the Lebesgue measure and $U_\lambda^\psi(x) \le U(x)$ for all $x\in\RR^d$. Then we have 
		\begin{equation}\label{eqn:bound_integral}
			\int_{\RR^d}\e^{-U(x)}\,\diff x \le \int_{\RR^d} \e^{-U_\lambda^\psi(x)}\,\diff x. 
		\end{equation} 
		This implies that, for all $ x\in\RR^d $, 
		\begin{multline}\label{eqn:bound_density}
			\pi(x) \le \frac{\pi(x)\int_{\RR^d}\e^{-U(y)}\,\diff y}{\int_{\RR^d} \e^{-U_\lambda^\psi(y)}\,\diff y} =  \frac{\e^{-U(x)}}{\int_{\RR^d} \e^{-U_\lambda^\psi(y)}\,\diff y} \\
			= \frac{\e^{-U_\lambda^\psi(x)}}{\int_{\RR^d}  \e^{-U_\lambda^\psi(y)}\,\diff y} \cdot\e^{-U(x) + U_\lambda^\psi(x)} 
			= \pi_\lambda^\psi\cdot\e^{\env_{\lambda, g}^\psi(x) - g(x)} \le \pi_\lambda^\psi(x),
		\end{multline}
		since $\env_{\lambda, g}^\psi(x) \le g(x)$ for all $ x\in\RR^d $. 
		Then for any $\lambda>0$, we have
		\begin{align}
			\|\pi_\lambda^\psi - \pi \|_{\TV} 
			&= \sup_{\sfA\in\euB(\RR^d)} \left|\int_{\sfA} \pi_\lambda^\psi(x) - \pi(x)\,\diff x \right| \le \sup_{\sfA\in\euB(\RR^d)}\int_{\sfA}  \left|\pi_\lambda^\psi(x) - \pi(x)\right|\,\diff x  \nonumber\\
			&\le \int_{\RR^d} \left| \pi_\lambda^\psi(x) - \pi(x) \right|\,\diff x  \nonumber\\
			&= \int_{\RR^d} \left( \pi_\lambda^\psi(x) - \pi(x)\right)^+ + \left( \pi_\lambda^\psi(x) - \pi(x)\right)^- \,\diff x \nonumber\\
			&= 2 \int_{\RR^d} \left( \pi_\lambda^\psi(x) - \pi(x)\right)^+ \,\diff x \nonumber\\
			&= 2 \int_{\RR^d} \pi_\lambda^\psi(x) - \pi(x) \,\diff x & \text{by \eqref{eqn:bound_density}} \nonumber\\
			&\le 2\int_{\RR^d} \pi_\lambda^\psi(x) - \pi(x)\frac{\int_{\RR^d} \e^{-U(y)}\,\diff y}{\int_{\RR^d}\e^{-U_\lambda^\psi(y)}\,\diff y}\,\diff x & \text{by \eqref{eqn:bound_integral}} \nonumber\\
			&= 2\left[\frac{1}{\int_{\RR^d}\e^{-U_\lambda^\psi(x)}\,\diff x}\int_{\RR^d} \e^{-U_\lambda^\psi(x)} - \e^{-U(x)}\,\diff x\right]  \nonumber\\
			&= 2\int_{\RR^d} \pi_\lambda^\psi(x)\left(1 - \e^{\env_{\lambda, g}^\psi(x) - g(x)}\right)\,\diff x \label{eqn:bound_tv}\\
			&= 2 \left(1 - \frac{\int_{\RR^d} \e^{-U(x)}\,\diff x}{\int_{\RR^d} \e^{-U_\lambda^\psi(x)}\,\diff x}\right) \nonumber\\
			&\to 0,  \nonumber
		\end{align}
		as $\lambda\downarrow0$, since, using \Cref{prop:gradient}(c) and the monotone convergence theorem, we have
		\[\lim\limits_{\lambda\to0} U_\lambda^\psi(x) = U(x) \quad\Rightarrow\quad \lim\limits_{\lambda\to0} \int_{\RR^d} \e^{-U_\lambda^\psi(x)}\,\diff x =  \int_{\RR^d} \e^{-U(x)}\,\diff x. \]

		\item Since $\env_{\lambda, g}^\psi(x) \le g(x)$ for all $ x\in\RR^d $ and $1-\e^{-u} \le u$ for all $u\in\RP$, by \eqref{eqn:bound_tv}, if \Cref{assum:nonsmooth}(ii$^\ddagger$) holds, then 
		\[ \|\pi_\lambda^\psi - \pi \|_{\TV} \le 2\int_{\RR^d} \pi_\lambda^\psi(x)\left(g(x) - \env_{\lambda, g}^\psi(x) \right) \,\diff x \le \frac{\lambda}{\rho}\|g\|_{\Lip}^2,\]
		where the last inequality follows from \eqref{eqn:bound_env_diff}. 
		
		Now we let $C_U \coloneqq \int_{\RR^d} \e^{-U(x)}\,\diff x$. For the second part, we will make use of the inequalities \Cref{eqn:bound_integral} and  
		\begin{equation}\label{eqn:bound_env}
			\text{\eqref{eqn:bound_env_diff}} \quad \Rightarrow \quad (\forall x\in\RR^d)\quad -U_\lambda^\psi(x) \le -U(x) + \frac{\lambda}{2\rho}\|g\|_{\Lip}^2,   
		\end{equation}
		which implies
		\begin{equation}\label{eqn:bound_env_integral}
			\int_{\RR^d} \e^{-U_\lambda^\psi(x)}\,\diff x \le \int_{\RR^d} \e^{-U(x)}\cdot\e^{\lambda\|g\|_{\Lip}^2/(2\rho)}\,\diff x = C_U\e^{\lambda\|g\|_{\Lip}^2/(2\rho)}. 
		\end{equation}
		Suppose that $h\ge 0$. Then \eqref{eqn:bound_env_integral} and $U_\lambda^\psi(x) \le U(x)$ for all $x\in\RR^d$ imply 
		\begin{align}
			\Ex_{\pi_\lambda^\psi} h &= \int_{\RR^d} h(x) \frac{\e^{-U_\lambda^\psi(x)}}{\int_{\RR^d} \e^{-U_\lambda^\psi(y)}\,\diff y}\,\diff x \nonumber\\
			&\ge C_U^{-1}\e^{-\lambda\|g\|_{\Lip}^2/(2\rho)}\int_{\RR^d} h(x) \e^{-U_\lambda^\psi(x)}\,\diff x \nonumber\\
			&\ge C_U^{-1}\e^{-\lambda\|g\|_{\Lip}^2/(2\rho)}\int_{\RR^d} h(x) \e^{-U(x)}\,\diff x \nonumber\\
			&= \e^{-\lambda\|g\|_{\Lip}^2/(2\rho)}\int_{\RR^d} h(x) \pi(x)\,\diff x \nonumber\\
			&= \e^{-\lambda\|g\|_{\Lip}^2/(2\rho)}\Ex_{\pi} h. \label{eqn:expectation_lower}
		\end{align}
		On the other hand, \eqref{eqn:bound_env_diff} and \eqref{eqn:bound_env} imply 
		\begin{multline}\label{eqn:expectation_upper}
			\Ex_{\pi_\lambda^\psi} h \le C_U^{-1}\int_{\RR^d} h(x) \e^{-U_\lambda^\psi(x)}\,\diff x \le C_U^{-1} \e^{\lambda\|g\|_{\Lip}^2/(2\rho)}\int_{\RR^d} h(x) \e^{-U(x)}\,\diff x\\
			= \e^{\lambda\|g\|_{\Lip}^2/(2\rho)}\int_{\RR^d} h(x) \pi(x)\,\diff x = \e^{\lambda\|g\|_{\Lip}^2/(2\rho)}\Ex_{\pi} h. 
		\end{multline}
		Combining \eqref{eqn:expectation_lower} and \eqref{eqn:expectation_upper} yields 
		\begin{equation}\label{eqn:expectation}
			\e^{-\lambda\|g\|_{\Lip}^2/(2\rho)}\Ex_\pi h \le \Ex_{\pi_\lambda^\psi} h \le \e^{\lambda\|g\|_{\Lip}^2/(2\rho)}\Ex_\pi h. 
		\end{equation}
		Then, applying \eqref{eqn:expectation} gives 
		\begin{align*}
			-\left(\e^{\lambda\|g\|_{\Lip}^2/(2\rho)} - 1\right)\Ex_{\pi} h &= -\max\left\{\e^{\lambda\|g\|_{\Lip}^2/(2\rho)} - 1, 1-\e^{-\lambda\|g\|_{\Lip}^2/(2\rho)} \right\}\Ex_{\pi} h \\
			&= \min\left\{1-\e^{\lambda\|g\|_{\Lip}^2/(2\rho)}, \e^{-\lambda\|g\|_{\Lip}^2/(2\rho)}  - 1\right\}\Ex_{\pi} h \\
			&\le \left(\e^{-\lambda\|g\|_{\Lip}^2/(2\rho)} - 1 \right) \Ex_{\pi} h \\
			&\le \Ex_{\pi_\lambda^\psi}h - \Ex_\pi h\\
			&\le \left(\e^{\lambda\|g\|_{\Lip}^2/(2\rho)} - 1 \right) \Ex_{\pi} h \\
			&\le \max\left\{\e^{\lambda\|g\|_{\Lip}^2/(2\rho)} - 1, 1-\e^{-\lambda\|g\|_{\Lip}^2/(2\rho)} \right\}\Ex_{\pi} h \\
			&= \left(\e^{\lambda\|g\|_{\Lip}^2/(2\rho)} - 1 \right) \Ex_{\pi} h,
		\end{align*}
		which implies that, for any $h\ge0$, 
		\begin{equation}\label{eqn:expectation_bound}
			\left|\Ex_{\pi_\lambda^\psi}h - \Ex_\pi h \right| \le \left(\e^{\lambda\|g\|_{\Lip}^2/\rho} - 1 \right) \Ex_{\pi} h. 
		\end{equation}
		Now, for any general integrable function $h$, we can write $h = h^+ - h^-$, where $h^+\ge 0$ and $h^-\ge 0$. We also have $|h| = h^+ + h^-$. Consequently, we have 
		\begin{align}
			\left|\Ex_{\pi_\lambda^\psi}h - \Ex_\pi h \right| &= \left|\left( \Ex_{\pi_\lambda^\psi}h^+ - \Ex_\pi h^+\right)  - \left(\Ex_{\pi_\lambda^\psi}h^- - \Ex_\pi h^- \right) \right| \nonumber\\
			&\le \left|\Ex_{\pi_\lambda^\psi}h^+ - \Ex_\pi h^+ \right| + \left|\Ex_{\pi_\lambda^\psi}h^- - \Ex_\pi h^- \right| \nonumber\\
			&= \left(\e^{\lambda\|g\|_{\Lip}^2/\rho} - 1 \right) \Ex_\pi h^+ + \left(\e^{\lambda\|g\|_{\Lip}^2/\rho} - 1 \right) \Ex_\pi h^- & \text{by \eqref{eqn:expectation_bound}} \nonumber\\
			&= \left(\e^{\lambda\|g\|_{\Lip}^2/\rho} - 1 \right) \Ex_\pi |h|. \label{eqn:expectation_bound_one}
		\end{align}
		If we switch the role of $\pi_\lambda^\psi$ and $\pi$ in \eqref{eqn:expectation}, i.e., 
		\[\e^{-\lambda\|g\|_{\Lip}^2/(2\rho)}\Ex_{\pi_\lambda^\psi} h \le \Ex_\pi h \le \e^{\lambda\|g\|_{\Lip}^2/(2\rho)}\Ex_{\pi_\lambda^\psi} h, \]
		then we get the following inequality similar to \eqref{eqn:expectation_bound_one}: 
		\begin{equation}\label{eqn:expectation_bound_two}
			\left|\Ex_{\pi_\lambda^\psi}h - \Ex_\pi h \right| \le \left(\e^{\lambda\|g\|_{\Lip}^2/\rho} - 1 \right) \Ex_{\pi_\lambda^\psi} |h|. 
		\end{equation}
		Combining \eqref{eqn:expectation_bound_one} and \eqref{eqn:expectation_bound_two} yields the desired result. 
	\end{enumerate}
\end{proof}

\subsection{Proof of \Cref{prop:gradient}}
\begin{enumerate}[label=(\alph*)]
	\item Recall that $g$ is lower bounded. Then by \citet[Fact 2.6]{bauschke2018regularizing}, $g(\cdot) + \frac{1}{\lambda}D_\psi(\cdot, y)$ and $g(\cdot) + \frac{1}{\lambda}D_\psi(y, \cdot)$ are both coercive for all $y\in\euY$. Then the gradient formulas of the Bregman--Moreau envelopes follow from \citet[Proposition 2.19]{bauschke2018regularizing}, which in turn follows from \citet[Remark 2.14]{bauschke2018regularizing} and \citet[Proposition 3.12]{bauschke2006joint}. 
	
	\item The Lipschitz continuity of the gradient of the left Bregman--Moreau envelope follows from \citet[Theorem 3.5]{soueycatt2020regularization}, whereas the Lipschitz continuity of the gradient of the right Bregman--Moreau envelope holds because, assuming that $\nabla\psi$ is Lipschitz, $y\mapsto\nabla_y D_\psi(y, x) = \nabla\psi(y) - \nabla\psi(x)$ is Lipschitz and we use the fact that the composition of Lipschitz  maps is also Lipschitz. We also remark that if we further assume that $\psi$ is very strictly convex, then $\nabla\psi$ is Lipschitz (\citealp[Proposition 2.10]{bauschke2000dykstras}; \citealp[Lemma 2.3(iii)]{laude2020bregman}). 
	
	\item The asymptotic behavior of the Bregman--Moreau envelopes follow from \citet[Theorem 3.3]{bauschke2018regularizing}. 
	
\end{enumerate}
\hfill$\Box$

\subsection{Proof of \Cref{thm:conv}}
Note that \eqref{eqn:bound_W2_surrogate} follows from Theorem 3.1 of \citet{li2022mirror}, i.e.,
\begin{equation*}
	\sfW_{2, \varphi}(\mu_k, \pi_\lambda^\psi) \le \sqrt{2}\e^{-(\alpha-2M_\varphi)\gamma k}\sfW_{2, \varphi}(\mu_0, \pi_\lambda^\psi) + C\sqrt{2\gamma},
\end{equation*}
where $\gamma\in\left] 0, \gamma_{\max}\right]$, with $\gamma_{\max} = \euO\left(\frac{(\alpha-2M_\varphi)^2}{\left( \beta^2(1+8M_\varphi)^2\right)}\right)$ and $C = \euO\left(\frac{\beta(1+8M_\varphi)\sqrt{d}}{(\alpha-2M_\varphi)}\right)$.

By \citet[][Theorem 4]{gibbs2002choosing} with $d(x, y) = \|\nabla \varphi(x) - \nabla\varphi(y)\|^2$, we have the following inequality between the Wasserstein distance and the total variation distance: 
\begin{equation}\label{eqn:W_TV}
	\sfW_{2,\varphi}(\mu,\nu) \le \eta\|\mu-\nu\|_{\TV}, 
\end{equation}
where $\eta \coloneqq \sup_{u\sim\mu, v\sim\nu} \|\nabla \varphi(u) - \nabla\varphi(v)\|^2$. 

Invoking the triangle inequality, \eqref{eqn:bound_W2_surrogate} and \eqref{eqn:W_TV}, we have 
\begin{align}
	\sfW_{2, \varphi}(\mu_k, \pi) &\le \sfW_{2, \varphi}(\mu_k, \pi_\lambda^\psi) + \sfW_{2, \varphi}(\pi_\lambda^\psi, \pi) \nonumber\\
	&\le \sqrt{2}\e^{-(\alpha-2M_\varphi)\gamma k}\sfW_{2, \varphi}(\mu_0, \pi_\lambda^\psi) + C\sqrt{2\gamma} + \eta\|\pi_\lambda^\psi - \pi \|_{\TV}. \label{eqn:thm1_1}
\end{align}
Recall from \Cref{prop:approx}(c) that if \Cref{assum:nonsmooth}(ii$^\ddagger$) holds and $\psi$ is $\rho$-strongly convex, then \[\|\pi_\lambda^\psi - \pi \|_{\TV} \le \frac{\lambda}{\rho}\|g\|_{\Lip}^2. \]
Hence, we obtain 
\begin{equation}\label{eqn:thm1_2}
	\sfW_{2, \varphi}(\pi_\lambda^\psi, \pi) \le \eta\|\pi_\lambda^\psi - \pi \|_{\TV} \le \frac{\eta\lambda}{\rho}\|g\|_{\Lip}^2.
\end{equation}
Then \eqref{eqn:thm1_1} becomes 
\begin{equation}\label{eqn:thm1_3}
	\sfW_{2, \varphi}(\mu_k, \pi) \le \sqrt{2}\e^{-(\alpha-2M_\varphi)\gamma k}\sfW_{2, \varphi}(\mu_0, \pi_\lambda^\psi) + C\sqrt{2\gamma} + \frac{\eta\lambda}{\rho}\|g\|_{\Lip}^2.
\end{equation}
On the other hand, applying the triangle inequality again and \eqref{eqn:thm1_2}, we have 
\[\sfW_{2, \varphi}(\mu_0, \pi_\lambda^\psi) \le \sfW_{2, \varphi}(\mu_0, \pi) + \sfW_{2, \varphi}(\pi_\lambda^\psi, \pi) \le \sfW_{2, \varphi}(\mu_0, \pi) + \frac{\eta\lambda}{\rho}\|g\|_{\Lip}^2. \]
Plugging into \eqref{eqn:thm1_3} yields the desired result \eqref{eqn:bound_W2}
\[\sfW_{2, \varphi}(\mu_k, \pi) \le \sqrt{2}\e^{-(\alpha-2M_\varphi)\gamma k}\sfW_{2, \varphi}(\mu_0, \pi)
+ C\sqrt{2\gamma} + \left(1 + \sqrt{2}\e^{-(\alpha-2M_\varphi)\gamma k}\right)\frac{\eta\lambda}{\rho}\|g\|_{\Lip}^2.\]

\subsection{Proof of \Cref{cor:mixing}}
\eqref{eqn:mixing_time} simply follows from \eqref{eqn:bound_W2} with certain algebraic manipulations.

\section{Details of Numerical Experiments}
\label{sec:details_expt}

\paragraph{More notation.} 
For any $\bx=(x_1, \ldots, x_d)^\top\in\RR^d$, $\Diag(\bx)\in\RR^{d\times d}$ is the diagonal matrix whose diagonal entries are $x_1, \ldots, x_d$. We also write $\setd \coloneqq\{1, \ldots, d\}$.

With the choice of 
\[\varphi_{\bbeta}(\btheta) = \sumd \left[\theta_i\operatorname{arsinh}\left(\theta_i/\beta_i\right) - \sqrt{\theta_i^2 + \beta_i^2}\right], \]
and $\psi(\btheta) = \frac12\|\btheta\|_{\bM}^2$ with $\bM\in\bbS^d_{++}$, simple calculation yields 
\begin{align*}
	\nabla\varphi_{\bbeta}(\btheta) &= \left( \operatorname{arsinh}(\theta_i/\beta_i)\right)_{1\le i\le d}, \\
	\nabla^2\varphi_{\bbeta}(\btheta) &= \Diag\left(\left(  (\theta_i^2 + \beta_i^2)^{-\sfrac{1}{2}}\right)_{1\le i\le d}\right), \\
	\varphi_{\bbeta}^*(\btheta) &= \sumd \beta_i\operatorname{cosh}(\theta_i), \\
	\nabla\varphi_{\bbeta}^*(\btheta) &= (\beta_i\operatorname{sinh}(\theta_i))_{1\le i\le d}, \\
	\nabla^2\varphi_{\bbeta}^*(\btheta) &= \Diag\left( (\beta_i\operatorname{cosh}(\theta_i))_{1\le i\le d}\right),
\end{align*}
and
\begin{align*}	
	\nabla\psi(\btheta) &= \bM\btheta, \\
	\nabla^2\psi(\btheta) &= \bM. 
\end{align*}
Note that for $f = 0$ and $g(\btheta) = \sumd \alpha_i|\theta_i|$, $\euF = \euG = \euX = \euY = \RR^d$. It is straightforward to see that \Cref{assum:smooth,assum:nonsmooth}(i), (ii$^\dagger$), (ii$^\ddagger$) are satisfied. 
For \Cref{assum:mirror1}, we check the modified self-concordance condition since the other assumptions are obvious. Since $\varphi_{\bbeta}(\btheta)$ is separable in a sense that it is in the form $\sumd \phi_{\beta_i}(\theta_i)$, where $\phi_\beta(\theta) \coloneqq \theta\operatorname{arsinh}\left(\theta/\beta\right) - \sqrt{\theta^2 + \beta^2}$ with $\beta>0$, it suffices to show that  $\phi_\beta$ is a modified self-concordant function. As noted in \citet{zhang2020wasserstein}, it suffices to check that $[(\phi_\beta^*)'']^{-\sfrac12}$ is Lipschitz. 

Since $[(\phi_\beta^*(\theta))'']^{-\sfrac12} = \beta^{-\sfrac12}\sqrt{\operatorname{sech}(\theta)}$, we have 
\[\left[\frac{1}{\sqrt{(\phi_\beta^*(\theta))''}}\right]' = -\frac{1}{2\sqrt{\beta}}\sinh(\theta)\operatorname{sech}^{\sfrac32}(\theta) \quad\Rightarrow\quad \left| \left[\frac{1}{\sqrt{(\phi_\beta^*(\theta))''}}\right]'\right| \le \frac{1}{\sqrt{2}\cdot 3^{\sfrac34}}. \]
Hence, $[(\phi_\beta^*)'']^{-\sfrac12}$ is Lipschitz.

It is also obvious to see that $\psi$ satisfies \Cref{assum:mirror2}. 

For $\bM\in\bbS^d_{++}$, the Bregman divergence associated to $\frac12\|\cdot\|_{\bM}^2$ is given by $D_\psi(\btheta, \bvartheta) = \frac12\|\btheta - \bvartheta\|_{\bM}^2$, which is indeed a distance since it is symmetric in its arguments.

The choice of $\psi$ implies its associated Bregman divergence $D_\psi$ satisfies all of \Cref{assum:Bregman}. 
According to \citet[Corollary 7.2 and Example 7.3]{bauschke2001joint}, since $[\nabla^2\psi(\btheta)]^{-1} = \bM^{-1}$ is constant for all $\btheta\in\RR^d$, and thus trivially matrix-concave. Hence $D_\psi$ is jointly convex. In addition, the gradient and Hessian of $D_\psi$ in the second argument are 
\begin{align*}
	\nabla_{\bvartheta}D_\psi(\btheta, \bvartheta) &= \bM(\bvartheta - \btheta), \\
	\nabla_{\bvartheta}^2 D_\psi(\btheta, \bvartheta) &= \bM.  
\end{align*}
Since $\bM\in\bbS^d_{++}$, for any $\btheta\in\RR^d$, $D_\psi(\btheta, \cdot)$ is strictly convex on $\RR^d$. Obviously, for any $\btheta\in\RR^d$, $D_\psi(\btheta, \cdot)$ is also continuous on $\RR^d$ and coercive. 

To check \Cref{assum:functions}, we first compute the expressions of $\bprox{\lambda, \alpha|\cdot|}{\frac12 m(\cdot)^2}$. Then the expressions of $\lprox{\lambda, g}{\psi}(\btheta)$ and $\rprox{\lambda, g}{\psi}(\btheta)$ are given by 
\[\lprox{\lambda, g}{\psi}(\btheta) = \left( \lprox{\lambda, \alpha_i|\cdot|}{\frac12 m(\cdot)^2}(\theta_i)\right)_{1\le i \le d} \quad \text{and} \quad \rprox{\lambda, g}{\psi}(\btheta) = \left( \rprox{\lambda, \alpha_i|\cdot|}{\frac12 m(\cdot)^2}(\theta_i)\right)_{1\le i \le d}, \]
attributed to the separable structures of $g$ and $D_\psi$. Note that $\lprox{\lambda, g}{\psi} = \rprox{\lambda, g}{\psi}$ since $D_\psi$ is symmetric in its arguments. 

Simple manipulation yields
\begin{align*}
	\bprox{\lambda, \alpha|\cdot|}{\frac12 m(\cdot)^2}(\theta) &= \argmin_{\vartheta\in\RR} \ \left\{ \alpha|\vartheta| + \frac{m}{2\lambda}(\theta - \vartheta)^2\right\} = \argmin_{\vartheta\in\RR} \ \left\{ \frac{\lambda\alpha}{m}|\vartheta| + \frac{1}{2}(\theta - \vartheta)^2\right\}\\
	&= \prox_{\lambda\alpha|\cdot|/m}(\theta),
\end{align*}
where $\prox_{\mu|\cdot|}(\theta) = \sign(\theta)\max\{|\theta|-\mu, 0\}$ is the soft-thresholding operator, for $\theta\in\RR$ and $\mu>0$. Consequently, we have 
\[\bprox{\lambda, g}{\psi}(\btheta) = \left( \sign(\theta_i)\max\{|\theta_i| - \lambda\alpha_i/m_i, 0\}\right)_{1\le i \le d}. \]

It remains to check \Cref{assum:functions}. It appears that $U_\lambda^\psi = \env_{\lambda, g}^\psi$ in this case is Legendre strongly convex with $\alpha=0$ (i.e., convex but not strongly convex), which does not satisfy the required assumption that $M_\varphi < \alpha/2$. However, for practical purpose, this choice of $\psi$ works well. We will give

We then check that $\env_{\lambda, g}^\psi$ is $\beta_g$-smooth relative to $\varphi$. We check this via the equivalent second-order characterization: $\beta_g\nabla^2\varphi_{\bbeta}(\btheta) - \nabla^2\env_{\lambda, g}^\psi(\btheta) \succeq 0 $ for all $\btheta\in\RR^d$. 

Let $i\in\setd$. Then we have  
\begin{equation*}
	\left[\nabla^2\env_{\lambda, g}^\psi(\btheta)\right]_{i,i} = \begin{cases*}
		m_i/\lambda & if $\theta_i \in [ -\lambda\alpha_i/m_i, \lambda\alpha_i/m_i]$,\\
		0 & otherwise. 
	\end{cases*}
\end{equation*}

Since 
\[\nabla^2\varphi_{\bbeta}(\btheta) = \Diag\left(\left( \frac{1}{\sqrt{\theta_i^2 + \beta_i^2}}\right)\right)_{1\le i\le d} , \]
we can choose 
\[\beta_g = \sup_{i\in\setd} \sup_{\theta_i\in[ -\lambda\alpha_i/m_i, \lambda\alpha_i/m_i]} \,\left\lbrace \frac{m_i\sqrt{\theta_i^2 + \beta_i^2}}{\lambda}\right\rbrace. \]

Given the choice $m_i = \alpha_i/2$ for all $i\in\setd$, we then have 
\[\beta_g = \sup_{i\in\setd} \sup_{\theta_i\in[ -2\lambda, 2\lambda]} \,\left\lbrace \frac{\alpha_i\sqrt{\theta_i^2 + \beta_i^2}}{2\lambda}\right\rbrace = \sup_{i\in\setd} \sup_{\theta_i\in[0, 2\lambda]} \,\left\lbrace \frac{\alpha_i\sqrt{\theta_i^2 + \beta_i^2}}{2\lambda}\right\rbrace = \sup_{i\in\setd} \frac{\alpha_i\sqrt{4\lambda^2 + \beta_i^2}}{2\lambda} < +\infty, \]

which implies that $\lenv_{\lambda, g}^\psi$ is $\beta_g$-smooth relative to $\varphi$.

\subsection{Different Bregman--Moreau Envelopes}
\label{subsec:different}
To find Bregman--Moreau envelopes which also satisfy \Cref{assum:functions}, we let $\psi = \psi_{\bsigma}$ be also the hyperbolic entropy parameterized by $\bsigma$. By slight abuse of notation, we also write $\psi_{\bsigma}(\btheta) = (\psi_{\sigma_i}(\theta_i))_{1\le i \le d}$. 

We have the following expression of the associatedl left Bregman proximity operator. 
\begin{proposition}\label{prop:bregman_prox_hyp}
	The left Bregman proximity operator of $\alpha|\cdot|$ associated to the Legendre function $\psi_\sigma$ for $\alpha>0$ is 
	\[
	\lprox{\lambda, \alpha|\cdot|}{\psi_\sigma}(\theta) = 
	\begin{cases*}
		\sigma\sinh(\arsinh(\theta/\sigma) - \alpha\lambda) & if $\theta > \sigma\sinh(\alpha\lambda)$,\\
		\sigma\sinh(\arsinh(\theta/\sigma) + \alpha\lambda) & if $\theta < \sigma\sinh(-\alpha\lambda)$,\\
		\sqrt{\theta^2 + \beta^2} & otherwise.
	\end{cases*}
	\]
	\begin{proof}[Proof of \Cref{prop:bregman_prox_hyp}]
		According to \Cref{def:bregman_prox}, 
		\[\lprox{\lambda, \alpha|\cdot|}{\exp}(\theta) = \argmin_{\vartheta\in\RR} \,\left\{\lambda\alpha|\vartheta| + \vartheta\left(\arsinh(\vartheta/\sigma) - \arsinh(\theta/\sigma)\right) - \sqrt{\vartheta^2 + \sigma^2} + \sqrt{\theta^2 + \sigma^2}\right\}. \]
		First-order conditions give
		\[\begin{cases*}
			\alpha\lambda + \arsinh(\vartheta/\sigma) - \arsinh(\theta/\sigma) = 0 & if $\vartheta > 0$, \\
			-\alpha\lambda + \arsinh(\vartheta/\sigma) - \arsinh(\theta/\sigma) = 0 & if $\vartheta < 0$, 
		\end{cases*}\]
		which implies 
		\begin{equation}\label{eqn:left_bregman_prox_hyp_1}
			\vartheta^\star = 
			\begin{cases*}
				\sigma\sinh(\arsinh(\theta/\sigma) - \alpha\lambda) & if $\vartheta^\star > 0$, \\
				\sigma\sinh(\arsinh(\theta/\sigma) + \alpha\lambda) & if $\vartheta^\star < 0$ 
			\end{cases*} = 
			\begin{cases*}
				\sigma\sinh(\arsinh(\theta/\sigma) - \alpha\lambda) & if $\theta > \sigma\sinh(\alpha\lambda)$, \\
				\sigma\sinh(\arsinh(\theta/\sigma) + \alpha\lambda) & if $\theta < \sigma\sinh(-\alpha\lambda)$. 
			\end{cases*}
		\end{equation}
		On the other hand, if $\vartheta = 0$, then 
		\begin{multline}\label{eqn:left_bregman_prox_hyp_2}
			\argmin_{\vartheta\in\RR} \,\left\{\lambda\alpha|\vartheta| + \vartheta\left(\arsinh(\vartheta/\sigma) - \arsinh(\theta/\sigma)\right) - \sqrt{\vartheta^2 + \sigma^2} + \sqrt{\theta^2 + \sigma^2}\right\} \\
			= \argmin_{\vartheta\in\RR} \,\left\{- \sqrt{\sigma^2} + \sqrt{\theta^2 + \sigma^2}\right\} = \sqrt{\theta^2 + \sigma^2} - \sigma,  
		\end{multline}
		which corresponds to the range $[\sigma\sinh(-\alpha\lambda), \sigma\sinh(\alpha\lambda)]$ for $\theta$. 
		Combining \eqref{eqn:left_bregman_prox_hyp_1} and \eqref{eqn:left_bregman_prox_hyp_2} yields the desired result. 			
	\end{proof}
\end{proposition}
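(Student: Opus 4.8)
The plan is to reduce the computation of $\lprox{\lambda, \alpha|\cdot|}{\psi_\sigma}(\theta)$ to a one-dimensional strictly convex minimization and then apply Fermat's rule. First I would write the objective explicitly: since $\nabla\psi_\sigma(\vartheta) = \arsinh(\vartheta/\sigma)$, the Bregman divergence is
\[
D_{\psi_\sigma}(\vartheta, \theta) = \vartheta\bigl(\arsinh(\vartheta/\sigma) - \arsinh(\theta/\sigma)\bigr) - \sqrt{\vartheta^2 + \sigma^2} + \sqrt{\theta^2 + \sigma^2},
\]
so that $\lprox{\lambda, \alpha|\cdot|}{\psi_\sigma}(\theta)$ is the minimizer of $F(\vartheta) \coloneqq \lambda\alpha|\vartheta| + D_{\psi_\sigma}(\vartheta, \theta)$. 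Because $\psi_\sigma$ is Legendre and strictly convex, $D_{\psi_\sigma}(\cdot, \theta)$ is strictly convex, and supercoercivity of $\psi_\sigma$ makes $F$ coercive; hence $F$ admits a \emph{unique} minimizer $\vartheta^\star$, and it suffices to exhibit a point satisfying the optimality condition.

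Next I would record the derivative of the smooth part, $\frac{d}{d\vartheta} D_{\psi_\sigma}(\vartheta, \theta) = \arsinh(\vartheta/\sigma) - \arsinh(\theta/\sigma)$ (the $\sqrt{\vartheta^2+\sigma^2}$ term cancels against part of the derivative of $\vartheta\arsinh(\vartheta/\sigma)$), and split according to the sign of $\vartheta^\star$. On $\{\vartheta > 0\}$ the stationarity equation $\lambda\alpha + \arsinh(\vartheta/\sigma) - \arsinh(\theta/\sigma) = 0$ yields $\vartheta^\star = \sigma\sinh(\arsinh(\theta/\sigma) - \alpha\lambda)$, and the consistency requirement $\vartheta^\star > 0$ is equivalent, since $\sinh$ is increasing, to $\arsinh(\theta/\sigma) > \alpha\lambda$, i.e.\ $\theta > \sigma\sinh(\alpha\lambda)$. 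Symmetrically, on $\{\vartheta < 0\}$ I obtain $\vartheta^\star = \sigma\sinh(\arsinh(\theta/\sigma) + \alpha\lambda)$, valid precisely when $\theta < \sigma\sinh(-\alpha\lambda)$.

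For the remaining regime I would invoke the subdifferential of $|\cdot|$ at the origin: $\vartheta^\star = 0$ is optimal iff $0 \in \lambda\alpha[-1, 1] + \bigl(\arsinh(0) - \arsinh(\theta/\sigma)\bigr)$, which, using $\arsinh(0) = 0$, reduces to $\arsinh(\theta/\sigma) \in [-\alpha\lambda, \alpha\lambda]$, i.e.\ $\theta \in [\sigma\sinh(-\alpha\lambda), \sigma\sinh(\alpha\lambda)]$. Since $\sinh$ is strictly increasing, these three $\theta$-ranges partition $\RR$, and uniqueness of $\vartheta^\star$ then pins down the piecewise formula. The only genuinely delicate point is the bookkeeping at the kink $\vartheta = 0$: I would verify that the sign-consistency conditions imposed in the two smooth branches match exactly the complement of the subdifferential interval, so that the three cases are mutually exclusive and exhaustive. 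I would also double-check the third branch of the stated formula, since the claimed value $\sqrt{\theta^2 + \beta^2}$ appears to be a typo---for an argmin the correct entry is $\vartheta^\star = 0$, whereas $\sqrt{\theta^2 + \sigma^2} - \sigma$ is the corresponding optimal \emph{value}.
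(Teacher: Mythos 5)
Your proposal is correct and follows essentially the same route as the paper's proof: first-order conditions on the two smooth branches of $\vartheta\mapsto\lambda\alpha|\vartheta|+D_{\psi_\sigma}(\vartheta,\theta)$, plus a separate treatment of the kink at $\vartheta=0$. Your handling of the middle regime is in fact more careful than the paper's: you derive the range $[\sigma\sinh(-\alpha\lambda),\sigma\sinh(\alpha\lambda)]$ from the subdifferential condition $0\in\lambda\alpha[-1,1]-\arsinh(\theta/\sigma)$, whereas the paper's display \eqref{eqn:left_bregman_prox_hyp_2} conflates the argmin with the optimal value (an ``argmin'' of a $\vartheta$-free expression cannot equal $\sqrt{\theta^2+\sigma^2}-\sigma$). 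You are also right that the third branch of the stated formula is a typo: the minimizer in that regime is $\vartheta^\star=0$, not $\sqrt{\theta^2+\beta^2}$, and the quantity $\sqrt{\theta^2+\sigma^2}-\sigma$ appearing in the paper's proof is the corresponding minimal value of the objective rather than its argmin.
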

The closed-form expression of the right Bregman proximity operator is much more complicated and is not given.

We show that $\lenv_{\lambda, g}^{\psi_{\bsigma}} - \alpha \varphi_{\bbeta}$ is convex for some $\lambda\in\RPP$, $\bbeta\in\Rp^d$, $\bsigma\in\Rp^d$ and $\alpha>2M_{\varphi_{\bbeta}}$. Also, recall that 
\[\left| \left[\frac{1}{\sqrt{(\phi_\beta^*(\theta))''}}\right]'\right| \le \frac{1}{\sqrt{2}\cdot 3^{\sfrac34}} \quad\Rightarrow\quad M_{\varphi_{\bbeta}} = \left(2\cdot 3^{\sfrac32}\cdot\min_{i\in\setd} \beta_i\right)^{-1}. \]
In particular, if we choose $\lambda=10^{-5}$, $\bbeta=(2\sqrt{d-i+1})_{1\le i\le d}^\top$, $\bsigma=(d,d-1,\ldots, 1)^\top$ and $\alpha=2M_{\varphi_{\bbeta}}+10^{-1}$, then we plot the following: 
\begin{figure}[h!]
	\centering
	\includegraphics[scale=0.8]{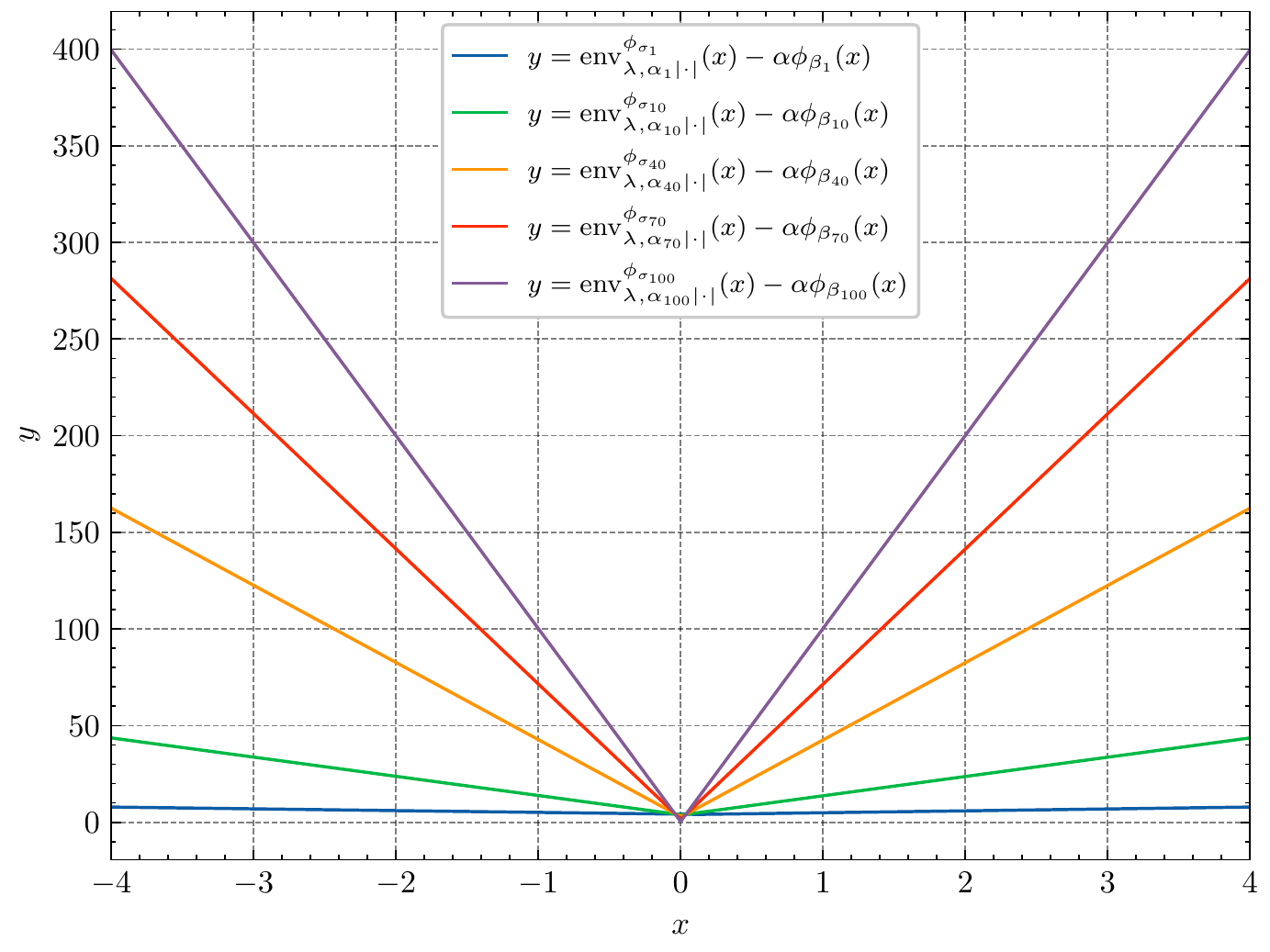}		
	\caption{Plots of $y = \env_{\lambda, \alpha_i|\cdot|}^{\psi_{\sigma_i}}(x) - \alpha\phi_{\beta_i}(x)$, for $i\in\{1, 10, 40, 70, 100\}$. }
	\label{fig:env_1}
\end{figure}

\Cref{fig:env_1} shows that the above choices give a convex $\lenv_{\lambda, g}^{\psi_{\bsigma}} - \alpha \varphi_{\bbeta}$. 

Similarly, we also show graphically that $\beta_g\varphi_{\bbeta} - \lenv_{\lambda, g}^{\psi_{\bsigma}}$ is convex for some $\beta_g>0$, e.g., $\beta_g= 2500$ (not tight). 
\begin{figure}[h!]
	\centering
	\includegraphics[scale=0.8]{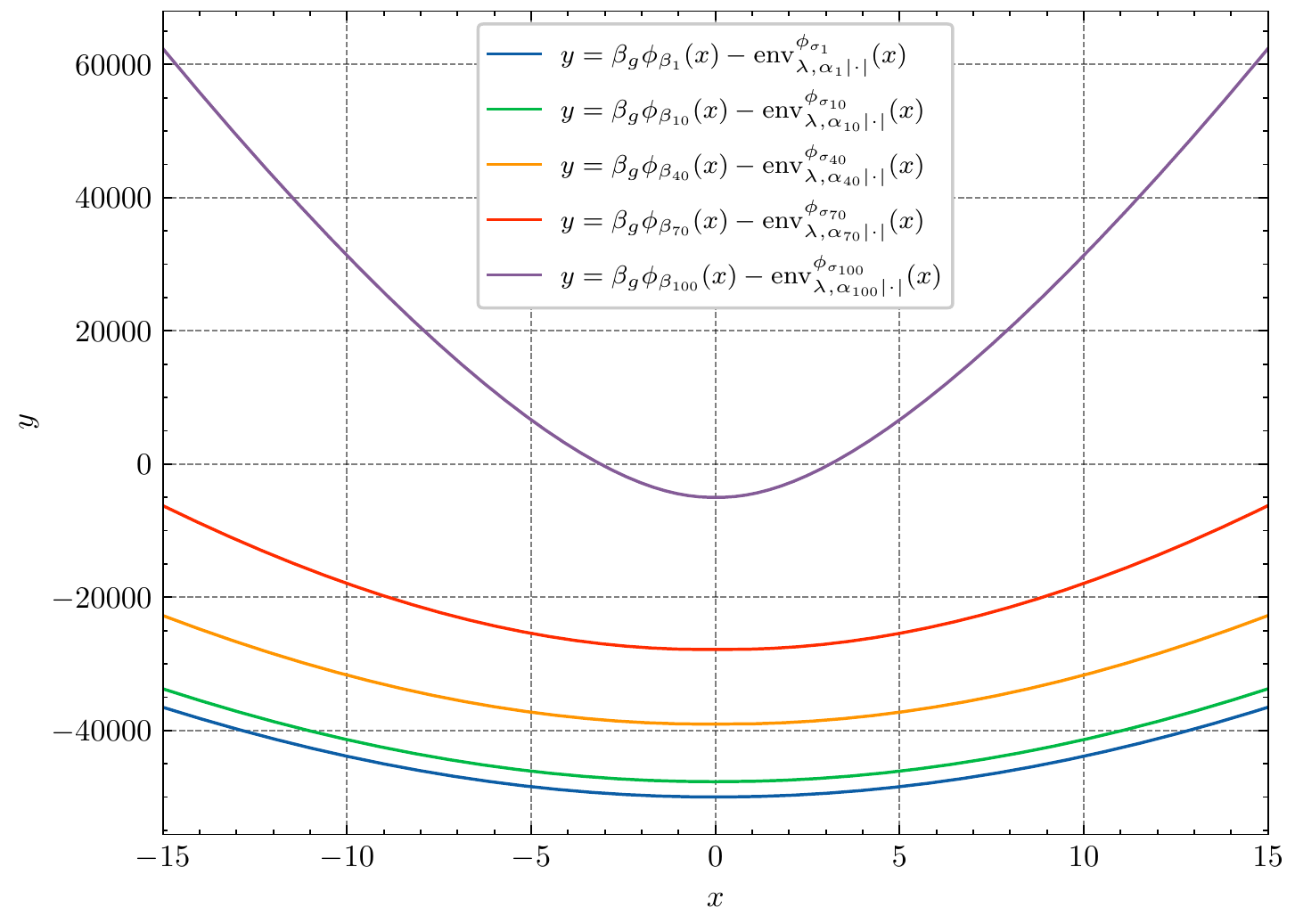}		
	\caption{Plots of $y = \beta_g\phi_{\beta_i}(x) - \env_{\lambda, \alpha_i|\cdot|}^{\psi_{\sigma_i}}(x)$, for $i\in\{1, 10, 40, 70, 100\}$. }
	\label{fig:env_2}
\end{figure}

Since all \Cref{assum:smooth,assum:nonsmooth,assum:mirror1,assum:mirror2,assum:Bregman,assum:functions} are satisfied, $g$ is Lipschitz and $\psi$ is strongly convex, the convergence results in the main text, i.e., \Cref{thm:conv} and \Cref{cor:mixing}, hold.

\section{The Bregman--Moreau Mirrorless Mirror-Langevin Algorithm}
\label{sec:BMMMLA}
In this section, we give the details of the Bregman--Moreau mirrorless mirror-Langevin algorithm (BMMMLA), whose results are mostly taken from \citet{ahn2021efficient}. 

\subsection{Assumptions}
\label{subsec:assumptions}
We first state the assumptions required in \citet{ahn2021efficient}. Instead of the modified self-concordance condition, the Legenedre function $\varphi$ has to be 
$M_\varphi$-\emph{self-concordant} \citep[\S5.1.3]{nesterov2018lectures}, i.e., for any $x\in\euX$, there exists $M_\varphi\ge0$ such that $\left| \nabla^3\varphi(x)[u,u,u]\right|  \le 2M_\varphi\|u\|_{\nabla^2\varphi(x)}^3$ for all $u\in\RR^d$. Furthermore, in addition to the $\alpha$-relative convexity (to $\varphi$) and $\beta$-relative smoothness (to $\varphi$) assumption, $U_\lambda^\psi$ also has to be $L$-Lipschitz relative to $\varphi$, which is defined as follows. 	
\begin{definition}[Relative Lipschitz continuity
	]\label{def:rel_Lip}
	A function $f\in\scrC^1$ is $L$-Lipschitz relative to a very strictly convex (see \Cref{assum:mirror1}(iii)) Legendre function $\varphi$ if there exists $L>0$ such that $\|\nabla f(x)\|_{[\nabla^2\varphi(x)]^{-1}} \le L$ for all $x\in\interior\dom f$. 
\end{definition}

It is worth noting that it is difficult to verify that Bregman--Moreau envelopes would satisfy such a relative Lipschitzness condition in general.

Now we let $x_0 \in \euY$. Similar to MLA \eqref{eqn:MLA} in \citet{ahn2021efficient}, the Bregman--Moreau mirrorless Mirror-Langevin algorithm (BMMMLA) iterates 	
\begin{equation*}
	\begin{aligned}
		x_{k+\sfrac{1}{2}} &= \nabla\varphi^* \left(\nabla\varphi(x_k) -\gamma\nabla U_\lambda^\psi(x_k) \right), \\
		x_{k+1} &= \nabla\varphi^*(Y_{\gamma}), 
	\end{aligned}
\end{equation*}
where
\begin{equation}\label{eqn:BMMMLA}
	\begin{cases}
		\diff Y_t = \sqrt{2} \left[ \nabla^2 \varphi^*(Y_t) \right]^{-\sfrac{1}{2}}\,\diff W_t\\
		Y_0 = \nabla\varphi\left(x_{k+\sfrac{1}{2}}\right) = \nabla\varphi(x_k) -\gamma\nabla U_\lambda^\psi(x_k).  
	\end{cases}
\end{equation}

\subsection{Convergence Results}
We suppose that the assumptions in \Cref{subsec:assumptions} hold.     
We define the mixture distribution $\bar{\mu}_K \coloneqq \frac1K\sumK \mu_k$, and let $ \beta' \coloneqq \beta+2M_\varphi L$. Then we have the following convergence results. 
\begin{theorem}[Convex]\label{thm:conv_non_bmmmla}
	Assume $\alpha = 0$ and $\beta' > 0$. Let $X_k \sim \mu_k$ be generated by \eqref{eqn:BMMMLA} with step size $\gamma = \min\left\{\varepsilon/(2\beta' d), 1/\beta'\right\}$. Then for all $\varepsilon>0$, there exists $\lambda>0$ such that $D_{\KL} (\bar{\mu}_K\midd \pi_\lambda^\psi) \le \varepsilon$ for 
	\begin{equation*}
		K\ge \frac{4d\beta' \sfD_\varphi(\pi, \mu_0)}{\varepsilon^2}\max\left\{1, \frac{\varepsilon}{2d}\right\}. 
	\end{equation*}
	
\end{theorem}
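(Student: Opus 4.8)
The plan is to recognize that BMMMLA is nothing but the mirror–Langevin algorithm MLA of \citet{ahn2021efficient} (our \eqref{eqn:MLA}) run on the smoothed potential $U_\lambda^\psi = f + \env_{\lambda,g}^\psi$ in place of $U$, so that the target of the recursion is the surrogate $\pi_\lambda^\psi \propto \e^{-U_\lambda^\psi}$ and the stationary inner SDE in \eqref{eqn:BMMMLA} is assumed simulated exactly, exactly as in \citet{ahn2021efficient}. The statement should then follow by invoking the convex-case ($\alpha=0$) convergence guarantee of \citet{ahn2021efficient} for the averaged iterate, once its hypotheses are checked for the pair $(U_\lambda^\psi, \varphi)$. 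That guarantee is, up to absolute constants, of the form $D_{\KL}(\bar\mu_K \midd \pi_\lambda^\psi) \le \sfD_\varphi(\pi_\lambda^\psi, \mu_0)/(K\gamma) + \tfrac12\beta' d\gamma$, so the displayed choices of $\gamma$ and $K$ should force both terms below $\varepsilon$; the bulk of the work is the verification of assumptions and the passage from $\pi_\lambda^\psi$ to $\pi$ in the initialization term.

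First I would collect the regularity of the surrogate. By \Cref{prop:gradient}(a), $\env_{\lambda,g}^\psi$ is differentiable on $\euY$ with the stated gradient, so $U_\lambda^\psi\in\scrC^1$; \Cref{assum:functions}(i) with $\alpha=0$ gives relative convexity of $U_\lambda^\psi$ with respect to $\varphi$, and \Cref{remark:1} (combining \Cref{assum:smooth}(ii) and \Cref{assum:functions}(ii)) gives $\beta$-relative smoothness with $\beta=\beta_f+\beta_g$. I would then invoke the remaining hypotheses specific to \citet{ahn2021efficient} recorded in \Cref{subsec:assumptions}: that $\varphi$ is $M_\varphi$-self-concordant and that $U_\lambda^\psi$ is $L$-Lipschitz relative to $\varphi$ (\Cref{def:rel_Lip}). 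With these, the effective constant $\beta'=\beta+2M_\varphi L$ appearing in the discretization error is well defined, the $2M_\varphi L$ correction being precisely the contribution of the Euler–Maruyama error of the Brownian part, where the third-order deviation of $\nabla\varphi$ is controlled by self-concordance and the gradient magnitude by relative Lipschitzness.

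With the hypotheses verified, I would quote the convex-case bound with target $\pi_\lambda^\psi$, substitute $\gamma=\min\{\varepsilon/(2\beta'd), 1/\beta'\}$ and the stated $K$, and separate the regimes $\varepsilon\le 2d$ and $\varepsilon>2d$ to see that in both cases $K\gamma \ge 2\sfD_\varphi(\pi_\lambda^\psi,\mu_0)/\varepsilon$ and $\beta' d\gamma \le \varepsilon/2$, which yields $D_{\KL}(\bar\mu_K\midd\pi_\lambda^\psi)\le\varepsilon$ once the initialization divergence in the bound is read as $\sfD_\varphi(\pi_\lambda^\psi,\mu_0)$. To convert this into the stated threshold, which is written through $\sfD_\varphi(\pi,\mu_0)$, I would use the approximation results \Cref{prop:approx}(b) and \Cref{prop:gradient}(c): since $\pi_\lambda^\psi\to\pi$ and $U_\lambda^\psi\uparrow U$ as $\lambda\downarrow0$, the quantity $\sfD_\varphi(\pi_\lambda^\psi,\mu_0)$ can be made arbitrarily close to $\sfD_\varphi(\pi,\mu_0)$; choosing such a sufficiently small $\lambda$ (which also makes $\pi_\lambda^\psi$ a proper density via \Cref{prop:approx}(a)) supplies the ``there exists $\lambda$'' quantifier and makes the displayed $K$-threshold sufficient.

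The main obstacle is twofold. The more serious issue is the relative Lipschitz hypothesis: as noted in \Cref{subsec:assumptions}, it is difficult to certify that a Bregman–Moreau envelope is $L$-Lipschitz relative to $\varphi$, so the cleanest route is to carry $L$ as an assumed constant and expose its dependence on $\lambda$ where possible through the gradient formula of \Cref{prop:gradient}(a), which writes $\nabla\env_{\lambda,g}^\psi$ in terms of the proximity operator. The second, more delicate point is the interplay between $\lambda$ and the constants: $\beta_g$, and possibly $L$, typically degrade as $\lambda\downarrow0$ (e.g. $\beta_g=\lambda^{-1}$ in the special case $\psi=\varphi$), so $\beta'$ in the $K$-bound grows precisely as the approximation $\sfD_\varphi(\pi_\lambda^\psi,\mu_0)\to\sfD_\varphi(\pi,\mu_0)$ improves; one must check these are compatible, i.e. that a range of $\lambda$ exists for which the surrogate is simultaneously close to $\pi$ and has finite $\beta'$, so that the displayed bound is genuinely attainable.
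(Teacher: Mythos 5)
Your proposal follows exactly the paper's route: the paper's entire proof is a citation of Theorems~1 and~2(b) of \citet{ahn2021efficient} applied to the surrogate potential $U_\lambda^\psi$, with the required hypotheses (self-concordance of $\varphi$, relative convexity/smoothness, and relative Lipschitzness giving $\beta'=\beta+2M_\varphi L$) simply recorded beforehand in the appendix. Your write-up is in fact more careful than the paper's, since you explicitly handle the passage from $\sfD_\varphi(\pi_\lambda^\psi,\mu_0)$ to $\sfD_\varphi(\pi,\mu_0)$ via the choice of $\lambda$ and flag the tension between small $\lambda$ and the growth of $\beta_g$ and $L$ --- a point the paper acknowledges only obliquely when it notes that relative Lipschitzness of Bregman--Moreau envelopes is hard to verify.
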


\begin{theorem}[Legendre strongly convex]\label{thm:conv_bmmmla}
	Assume $\alpha > 0$ and $\beta' > 0$.  
		Suppose that $X_0\sim\mu_0$ satisfies $\sfD_\varphi(\pi, \mu_0) \le \varepsilon$. Let $X_k \sim \mu_k$ be generated by \eqref{eqn:BMMMLA} with step size $\gamma = \min\left\{\varepsilon/(2\beta' d), 1/\beta'\right\}$. Then for all $\varepsilon>0$, there exists $\lambda>0$ such that $D_{\KL} (\bar{\mu}_K\midd \pi_\lambda^\psi) \le \varepsilon$ for 
		\begin{equation*}
			K\ge \frac{4\beta' d}{\varepsilon}\max\left\{1, \frac{\varepsilon}{2d}\right\}. 
		\end{equation*}
\end{theorem}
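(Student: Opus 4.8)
The plan is to recognize that BMMMLA \eqref{eqn:BMMMLA} is precisely the mirror–Langevin algorithm (MLA) of \citet{ahn2021efficient} run on the smooth surrogate potential $U_\lambda^\psi = f + \env_{\lambda,g}^\psi$, whose invariant measure is $\pi_\lambda^\psi \propto \e^{-U_\lambda^\psi}$. Crucially, the conclusion $D_{\KL}(\bar\mu_K \midd \pi_\lambda^\psi) \le \varepsilon$ is stated against the surrogate target itself, so no smoothing-bias term appears and the estimate should follow verbatim from the relatively-strongly-convex convergence theorem of \citet{ahn2021efficient}, once the surrogate potential is shown to meet the hypotheses imposed there. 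The whole argument therefore reduces to (i) verifying those hypotheses for $U_\lambda^\psi$ at a suitable $\lambda>0$, and (ii) transcribing the cited bound with $\pi \mapsto \pi_\lambda^\psi$ and $\beta \mapsto \beta' = \beta + 2M_\varphi L$.

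First I would confirm the dynamics is well posed: \Cref{prop:approx}(a) guarantees $\pi_\lambda^\psi$ is a proper density, and \Cref{prop:gradient}(a) gives $U_\lambda^\psi \in \scrC^1$ with an explicit gradient, so \eqref{eqn:BMMMLA} is the genuine mirror–Langevin diffusion for that potential. Next I would verify the structural assumptions of \Cref{subsec:assumptions} for $U_\lambda^\psi$: that $\varphi$ is $M_\varphi$-self-concordant (an assumption on the mirror map alone, independent of $g$); that $U_\lambda^\psi$ is $\alpha$-Legendre strongly convex relative to $\varphi$ with $\alpha>0$ (\Cref{assum:functions}(i)); that it is $\beta$-smooth relative to $\varphi$ with $\beta = \beta_f + \beta_g$ (\Cref{remark:1}, combining \Cref{assum:smooth}(ii) and \Cref{assum:functions}(ii)); and that it is $L$-Lipschitz relative to $\varphi$ in the sense of \Cref{def:rel_Lip}. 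With these in hand, $\beta' = \beta + 2M_\varphi L > 0$ and the step size $\gamma = \min\{\varepsilon/(2\beta' d), 1/\beta'\}$ are exactly the inputs of the cited theorem.

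The quantifier ``there exists $\lambda>0$'' is where the work concentrates, and it serves two purposes. On one hand, the relative-smoothness constant $\beta_g$ of the Bregman–Moreau envelope is $\lambda$-dependent (e.g.\ $\beta_g = \lambda^{-1}$ when $\psi = \varphi$, by \citet[Proposition 3.8(ii)]{laude2020bregman}), so $\lambda$ must sit in the regime where $\beta'$ and the relative-Lipschitz constant $L$ are finite and \Cref{assum:functions} genuinely holds. On the other hand, if the warm-start hypothesis $\sfD_\varphi(\pi,\mu_0) \le \varepsilon$ is read against the \emph{true} target $\pi$, I would invoke the closeness $\pi_\lambda^\psi \to \pi$ as $\lambda \downarrow 0$ from \Cref{prop:approx}(b) together with continuity of $\sfD_\varphi(\cdot,\mu_0)$ in its first argument to transfer it to $\sfD_\varphi(\pi_\lambda^\psi,\mu_0) \le \varepsilon$ for all sufficiently small $\lambda$, so that the initialization condition required by \citet{ahn2021efficient} holds for the surrogate. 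Choosing $\lambda$ to satisfy both demands yields the asserted $\lambda$, after which the relatively-strongly-convex MLA bound delivers $D_{\KL}(\bar\mu_K \midd \pi_\lambda^\psi) \le \varepsilon$ for $K \ge (4\beta' d/\varepsilon)\max\{1, \varepsilon/(2d)\}$.

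The main obstacle is squarely the verification of the relative-Lipschitz condition for the enveloped potential. As the text remarks immediately after \Cref{def:rel_Lip}, there is no general-purpose argument that $\env_{\lambda,g}^\psi$ is relatively Lipschitz, so establishing it—and identifying the compatible range of $\lambda$—must proceed from the explicit gradient formula of \Cref{prop:gradient}(a) bounded in the local norm $\|\cdot\|_{[\nabla^2\varphi]^{-1}}$ and exploiting self-concordance, and in general can be carried out only for specific triples $(\varphi,\psi,g)$. This is precisely why the statement asserts mere existence of $\lambda$ rather than an explicit value.
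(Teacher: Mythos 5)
Your proposal is correct and follows essentially the same route as the paper, whose entire proof is the single line that the result ``follows from Theorems 1 and 2(b) of \citet{ahn2021efficient}'' applied to the surrogate potential $U_\lambda^\psi$, together with the assumptions of \Cref{subsec:assumptions}. In fact you are more careful than the paper: your explicit discussion of where the quantifier ``there exists $\lambda>0$'' does work, and your flagging of the relative-Lipschitz condition as the unverified obstacle, are both points the paper only gestures at (it concedes after \Cref{def:rel_Lip} that this condition is hard to check for Bregman--Moreau envelopes but never resolves it).
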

The results follow from \citet[][Theorems 1 and 2(b)]{ahn2021efficient}. Similar bounds on the total variation distance follows from \emph{Pinsker's inequality}: $\| P-Q \|_{\TV}^2 \le \frac12 D_{\KL}(P\midd Q)$, and also bounds on the total variation distance between $\bar{\mu}_K$ and the target distribution $\pi$ instead of the surrogate distribution $\pi_\lambda^\psi$. 
\hfill$\Box$

Note also that convergence in the Bregman transport cost also holds \citep[Theorem 2(a)]{ahn2021efficient}, where the Bregman transport cost is defined as follows. 
\begin{definition}[Bregman transport cost]
	For two probability measures $\mu$ and $\nu$ on $\euB(\RR^d)$, the \emph{Bregman transport cost} \citep{cordero2017transport} from $\mu$ to $\nu$ with respect to the Bregman divergence associated with a Legendre function $\varphi$ is defined by         
	\[\sfD_\varphi(\mu, \nu) \coloneqq \inf_{\pi\sim\Pi(\mu, \nu)}\int_{\RR^d\times\RR^d} D_\varphi(x, y) \,\diff\pi(x,y). \]
\end{definition}

We also refer to \citet[Theorem 2.1]{ahn2021efficient} for convergence results in terms of the Bregman transport cost \citep{cordero2017transport}. By Pinsker's inequality, we can also obtain similar results in terms of the total variation distance.

Finally, we give the Bregman--Moreau mirrorless mirror-Langevin algorithm \eqref{eqn:BMMMLA} with an Euler--Maruyama discretization for the second step. 

\begin{algorithm}[H]
	\caption{The Bregman--Moreau Mirrorless Mirror-Langevin Algorithm (BMMMLA)}
	\label{alg:BMMMLA}
	\begin{algorithmic}
		\STATE {\bfseries Initialize:} Legendre functions $\varphi$ and $\psi$, $\btheta_0 \in \RR^d$, step size $\gamma \in\RPP$, number of samples to be drawn $K\in\NN^*$, number of inner steps of Euler--Maruyama discretization $N\in\NN^*$. 
		\FOR{$k=0,1,2,\ldots, K-1$}				
		\STATE $\btheta_{k+\sfrac12} = \nabla\varphi^*\left(\nabla\varphi(\btheta_k) - \gamma\nabla U_\lambda^\psi(\btheta_k)\right)$
		\STATE $\by_0 = \nabla\varphi\left(\btheta_{k+\sfrac{1}{2}}\right) = \nabla\varphi(\btheta_k) -\gamma\nabla U_\lambda^\psi(\btheta_k)$
		\FOR{$n=0,1,2,\ldots, N$}
		\STATE $\bxi_n\sim\sfN_d(\zero_d, I_d)$					
		\STATE $\by_{n+1} = \by_n + \sqrt{2\gamma/N} \left[ \nabla^2 \varphi^*(\by_n) \right]^{-\sfrac{1}{2}}\bxi_n$
		\ENDFOR
		\STATE $\btheta_{k+1} = \nabla\varphi^*(\by_{N+1})$
		\ENDFOR
	\end{algorithmic}
\end{algorithm}

We also give the experimental results of the Bregman--Moreau mirrorless mirror-Langevin algorithm in \Cref{sec:add_expt}.

\section{Additional Numerical Experiments}
\label{sec:add_expt}

\subsection{Anisotropic Laplace Distribution}
We first give more plots of different dimensions for the experiment in \Cref{sec:expt}. 

\begin{figure}[h!]
	\centering
	\begin{subfigure}[b]{0.24\textwidth}
		\centering
		\includegraphics[width=\textwidth]{fig/an_laplace_myula_1.pdf}
		\caption{1\textsuperscript{st} dimension}
	\end{subfigure}
	\begin{subfigure}[b]{0.24\textwidth}
		\centering
		\includegraphics[width=\textwidth]{fig/an_laplace_myula_2.pdf}
		\caption{2\textsuperscript{nd} dimension}
	\end{subfigure}
	\begin{subfigure}[b]{0.24\textwidth}
		\centering
		\includegraphics[width=\textwidth]{fig/an_laplace_myula_5.pdf}
		\caption{5\textsuperscript{th} dimension}
	\end{subfigure}
	\begin{subfigure}[b]{0.24\textwidth}
		\centering
		\includegraphics[width=\textwidth]{fig/an_laplace_myula_10.pdf}
		\caption{10\textsuperscript{th} dimension}
	\end{subfigure}
	\newline
	\centering
	\begin{subfigure}[b]{0.24\textwidth}
		\centering
		\includegraphics[width=\textwidth]{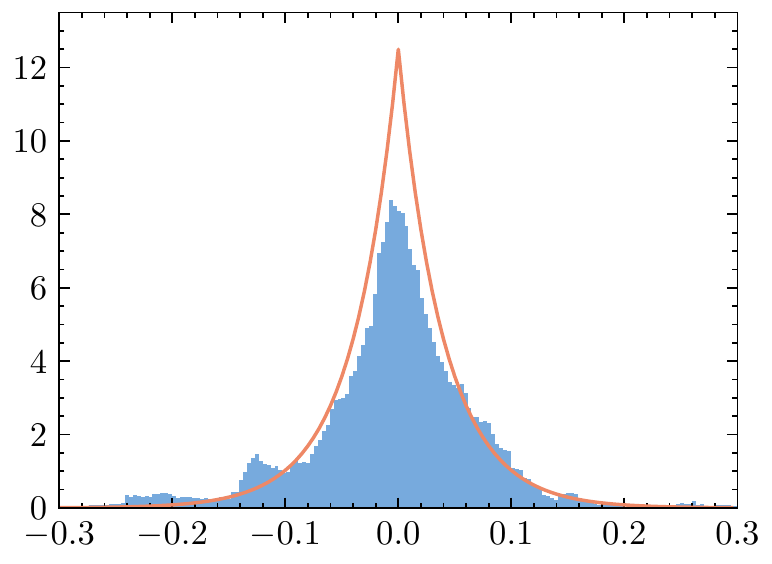}
		\caption{25\textsuperscript{th} dimension}
	\end{subfigure}
	\begin{subfigure}[b]{0.24\textwidth}
		\centering
		\includegraphics[width=\textwidth]{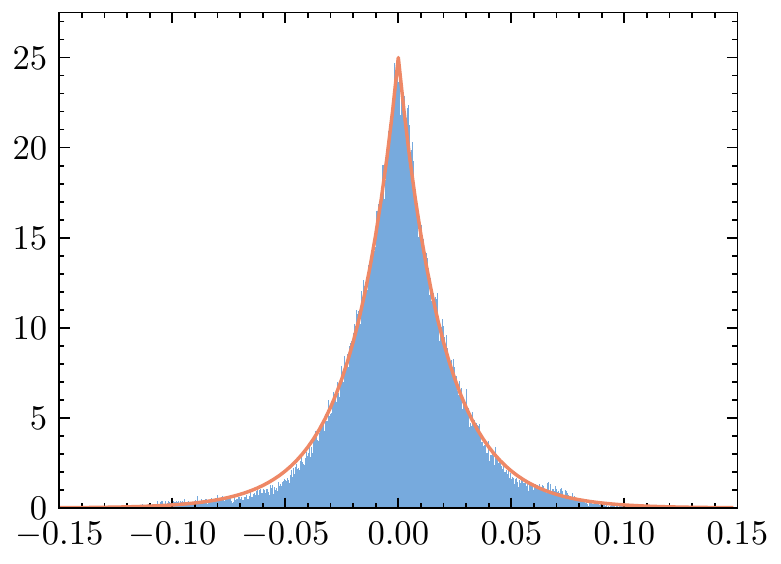}
		\caption{50\textsuperscript{th} dimension}
	\end{subfigure}
	\begin{subfigure}[b]{0.24\textwidth}
		\centering
		\includegraphics[width=\textwidth]{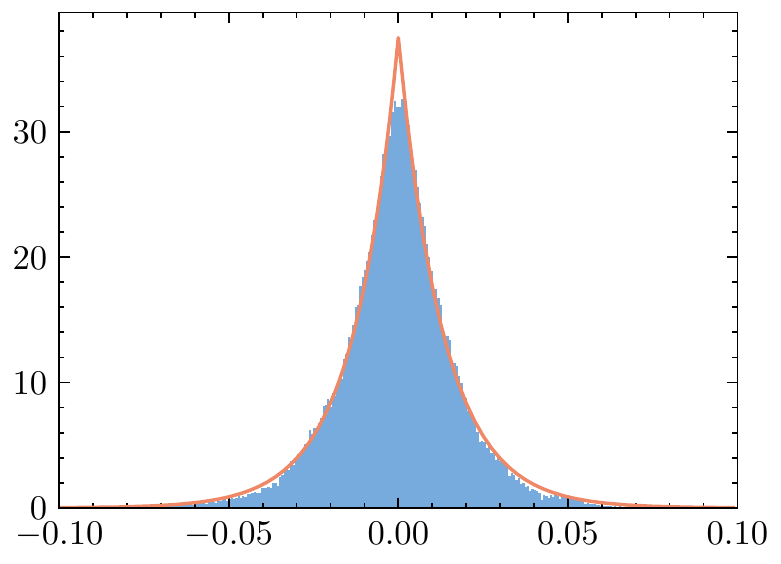}
		\caption{75\textsuperscript{th} dimension}
	\end{subfigure}
	\begin{subfigure}[b]{0.24\textwidth}
		\centering
		\includegraphics[width=\textwidth]{fig/an_laplace_myula_100.pdf}
		\caption{100\textsuperscript{th} dimension}
	\end{subfigure}
	\caption{Histograms of samples (blue) from MYULA and the true densities (orange). }
	\label{fig:an_laplace_3}
\end{figure}

\begin{figure}[h!]
	\centering
	\begin{subfigure}[b]{0.24\textwidth}
		\centering
		\includegraphics[width=\textwidth]{fig/an_laplace_bmumla_1.pdf}
		\caption{1\textsuperscript{st} dimension}
	\end{subfigure}
	\begin{subfigure}[b]{0.24\textwidth}
		\centering
		\includegraphics[width=\textwidth]{fig/an_laplace_bmumla_2.pdf}
		\caption{2\textsuperscript{nd} dimension}
	\end{subfigure}
	\begin{subfigure}[b]{0.24\textwidth}
		\centering
		\includegraphics[width=\textwidth]{fig/an_laplace_bmumla_5.pdf}
		\caption{5\textsuperscript{th} dimension}
	\end{subfigure}
	\begin{subfigure}[b]{0.24\textwidth}
		\centering
		\includegraphics[width=\textwidth]{fig/an_laplace_bmumla_10.pdf}
		\caption{10\textsuperscript{th} dimension}
	\end{subfigure}
	\newline
	\centering
	\begin{subfigure}[b]{0.24\textwidth}
		\centering
		\includegraphics[width=\textwidth]{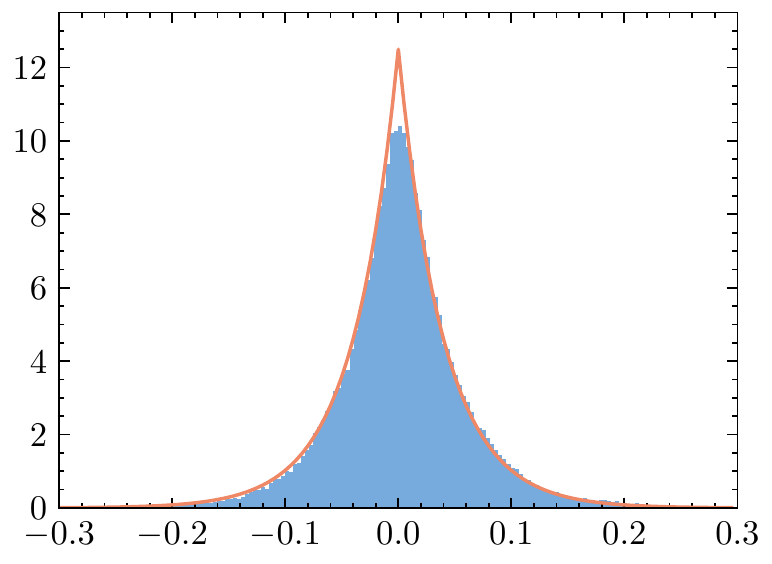}
		\caption{25\textsuperscript{th} dimension}
	\end{subfigure}
	\begin{subfigure}[b]{0.24\textwidth}
		\centering
		\includegraphics[width=\textwidth]{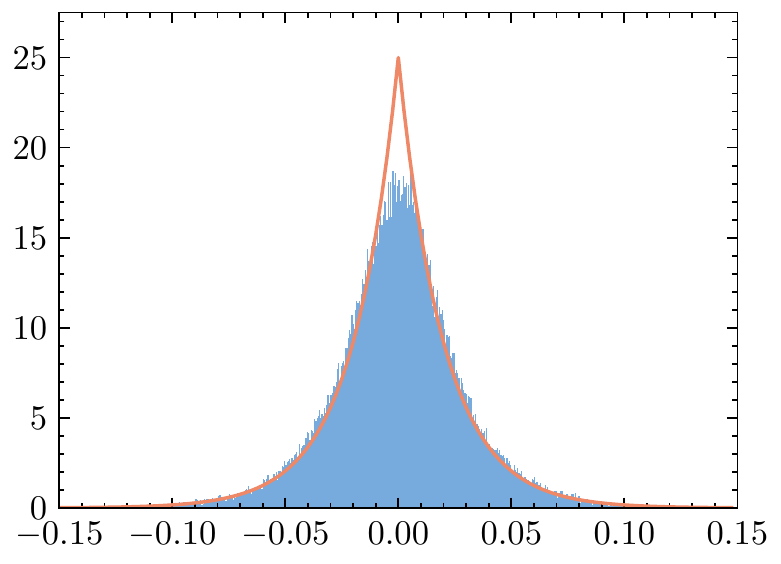}
		\caption{50\textsuperscript{th} dimension}
	\end{subfigure}
	\begin{subfigure}[b]{0.24\textwidth}
		\centering
		\includegraphics[width=\textwidth]{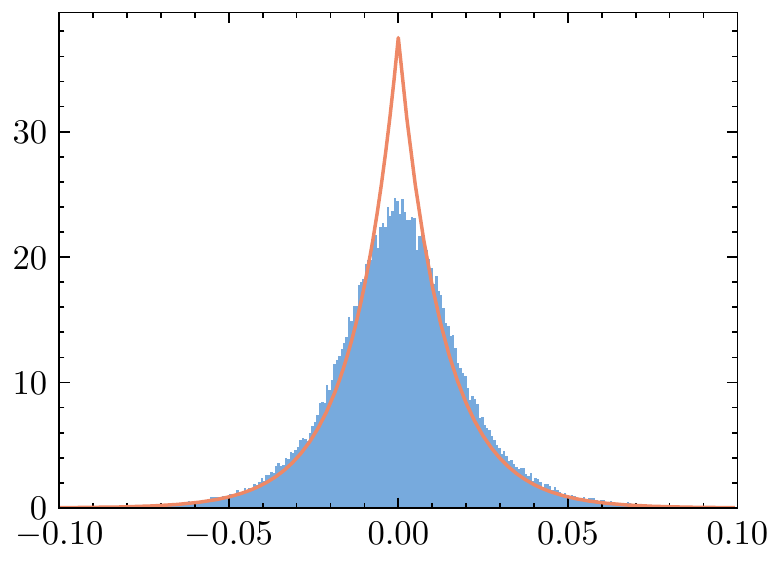}
		\caption{75\textsuperscript{th} dimension}
	\end{subfigure}
	\begin{subfigure}[b]{0.24\textwidth}
		\centering
		\includegraphics[width=\textwidth]{fig/an_laplace_bmumla_100.pdf}
		\caption{100\textsuperscript{th} dimension}
	\end{subfigure}
	\caption{Histograms of samples (blue) from BMUMLA and the true densities (orange). }
	\label{fig:an_laplace_4}
\end{figure}

We also give the experimental results using a different (left) Bregman--Moreau envelope introduced in \Cref{subsec:different}, using the same step size $\gamma=5\times10^{-6}$ in \Cref{fig:an_laplace_5}. 
\begin{figure}[h!]
	\centering
	\begin{subfigure}[b]{0.24\textwidth}
		\centering
		\includegraphics[width=\textwidth]{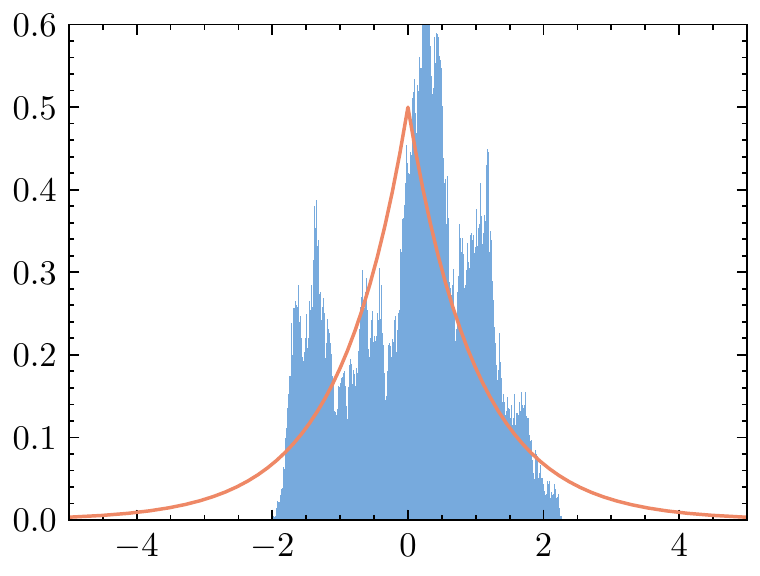}
		\caption{1\textsuperscript{st} dimension}
	\end{subfigure}
	\begin{subfigure}[b]{0.24\textwidth}
		\centering
		\includegraphics[width=\textwidth]{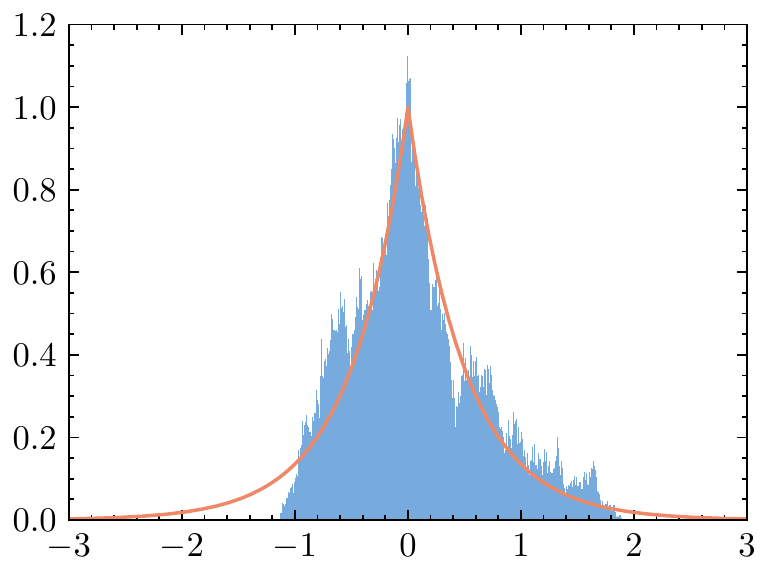}
		\caption{2\textsuperscript{nd} dimension}
	\end{subfigure}
	\begin{subfigure}[b]{0.24\textwidth}
		\centering
		\includegraphics[width=\textwidth]{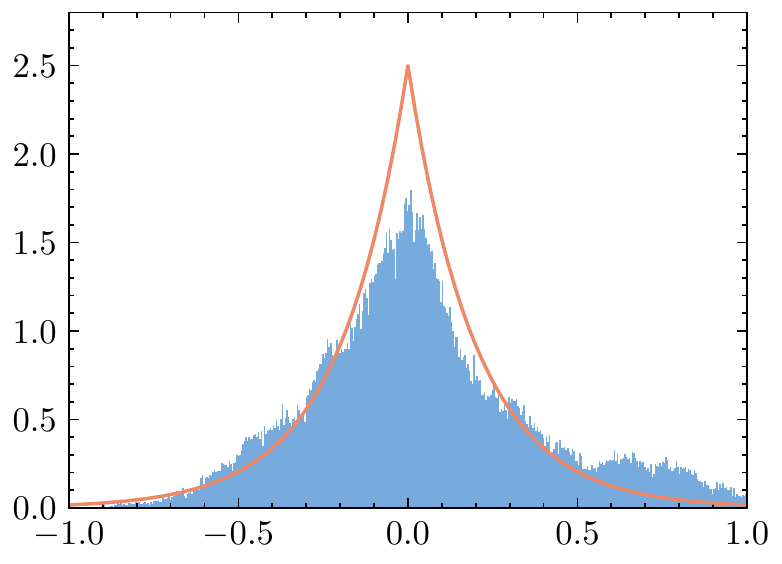}
		\caption{5\textsuperscript{th} dimension}
	\end{subfigure}
	\begin{subfigure}[b]{0.24\textwidth}
		\centering
		\includegraphics[width=\textwidth]{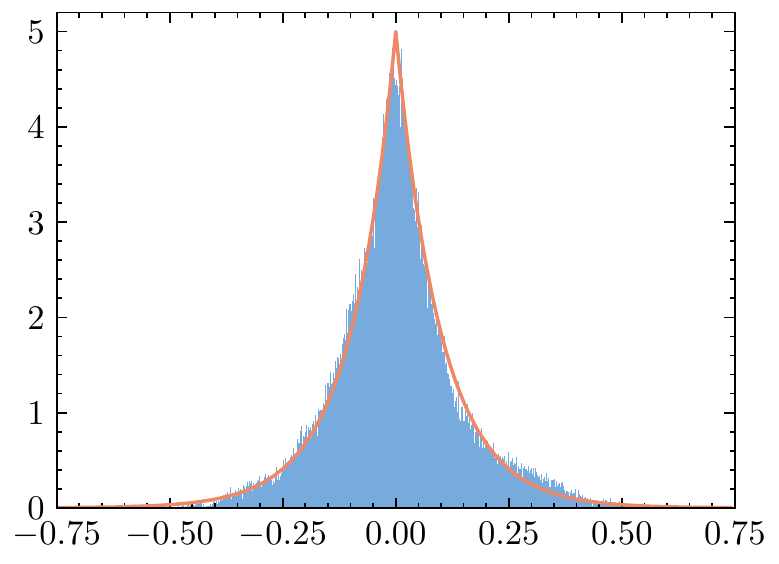}
		\caption{10\textsuperscript{th} dimension}
	\end{subfigure}
	\newline
	\centering
	\begin{subfigure}[b]{0.24\textwidth}
		\centering
		\includegraphics[width=\textwidth]{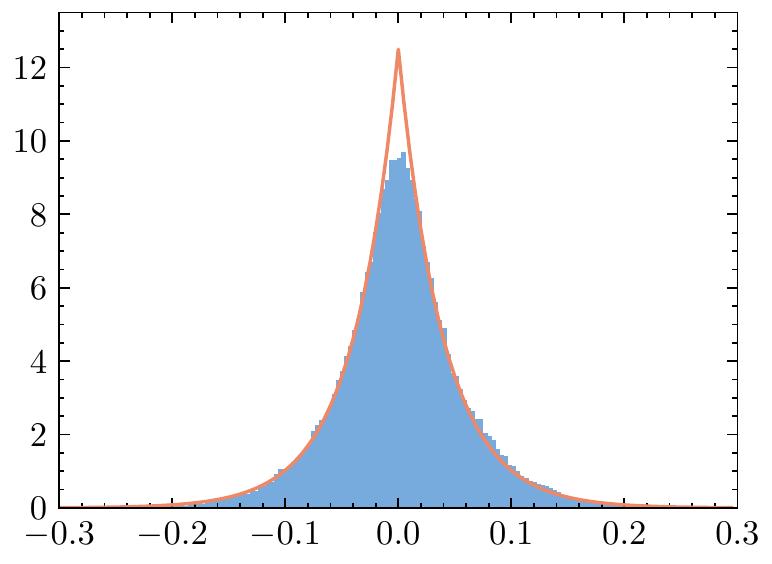}
		\caption{25\textsuperscript{th} dimension}
	\end{subfigure}
	\begin{subfigure}[b]{0.24\textwidth}
		\centering
		\includegraphics[width=\textwidth]{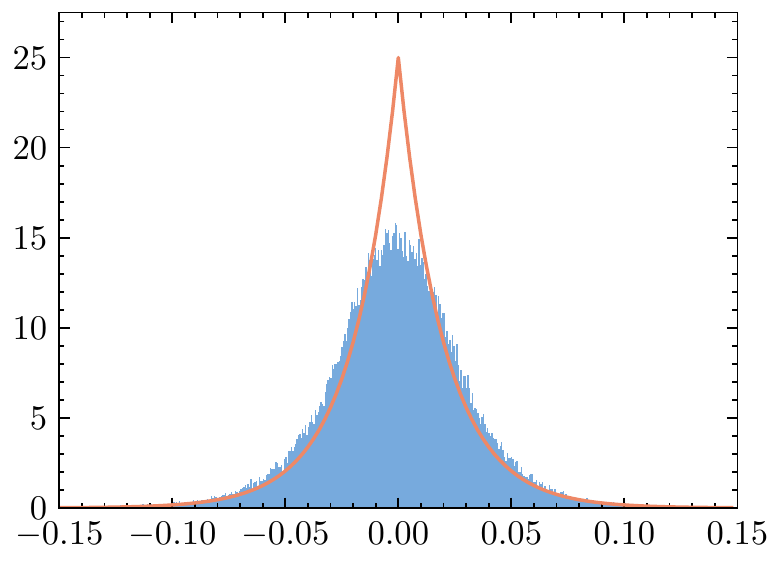}
		\caption{50\textsuperscript{th} dimension}
	\end{subfigure}
	\begin{subfigure}[b]{0.24\textwidth}
		\centering
		\includegraphics[width=\textwidth]{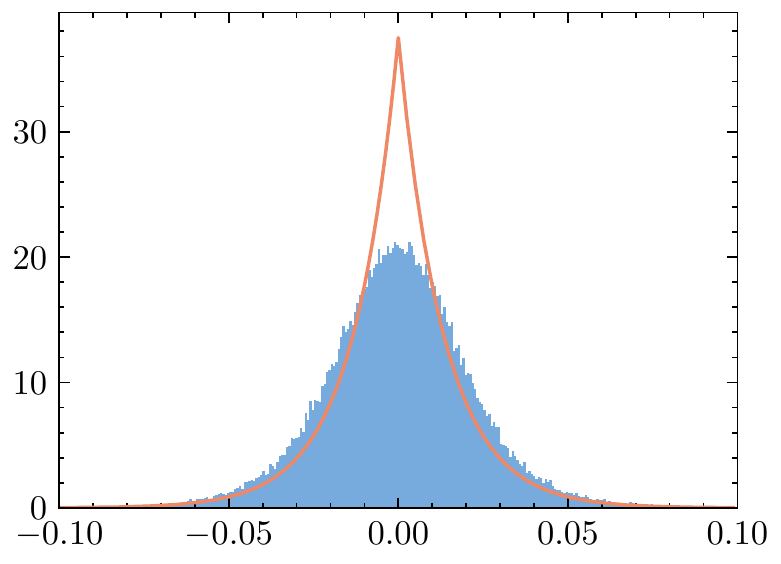}
		\caption{75\textsuperscript{th} dimension}
	\end{subfigure}
	\begin{subfigure}[b]{0.24\textwidth}
		\centering
		\includegraphics[width=\textwidth]{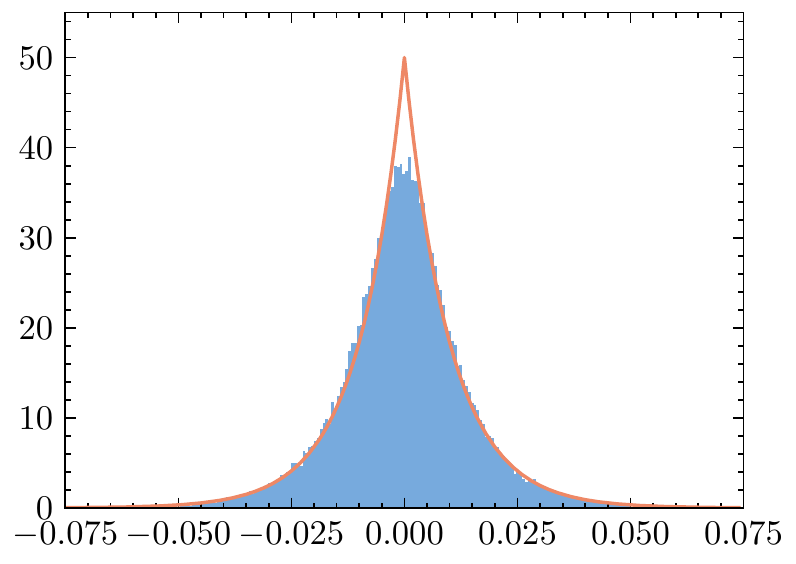}
		\caption{100\textsuperscript{th} dimension}
	\end{subfigure}
	\caption{Histograms of samples (blue) from LBMUMLA and the true densities (orange) with a different Bregman--Moreau envelope. }
	\label{fig:an_laplace_5}
\end{figure}

On the other hand, for practical purpose, we also perform the same set of experiments with another Bregman--Moreau envelope associated to the Legendre function $\psi(\btheta) = \sumd \e^{\theta_i}$. This is chosen particularly because we can compute the corresponding closed form expressions of both of its associated left and right Bregman proximity operators. To do so, we compute the following left and right Bregman proximity operators associated to the exponential function. 	
\newpage
\begin{proposition}\label{prop:bregman_prox}
	The left Bregman proximity operator of $\alpha|\cdot|$ associated to the Legendre function $\exp$ for $\alpha>0$ is 
	\[
	\lprox{\lambda, \alpha|\cdot|}{\exp}(\theta) = 
	\begin{cases*}
		\log(\e^\theta - \alpha\lambda) & if $\theta > \log(1+\alpha\lambda)$,\\
		\log(\e^\theta + \alpha\lambda) & if $\theta <\log(1-\alpha\lambda)$,\\
		1-\e^\theta(1+\theta) & otherwise.
	\end{cases*}
	\]
	The right Bregman proximity operator of $\alpha|\cdot|$ associated to the Legendre function $\exp$ for $\alpha>0$ is 
	\[
	\rprox{\lambda, \alpha|\cdot|}{\exp}(\theta) = 
	\begin{cases*}
		W(-\alpha\lambda\e^{-\theta}) + \theta & if $\theta > \alpha\lambda$,\\
		W(\alpha\lambda\e^{-\theta}) + \theta & if $\theta < -\alpha\lambda$,\\
		\e^\theta - (1+\theta) & otherwise, 
	\end{cases*}
	\]
	where $W$ is the Lambert $W$ function \citep{lambert1758observationes,corless1996lambertw}, i.e., the inverse of $\xi\mapsto\xi\e^\xi$ on $\RP$. 
	\begin{proof}[Proof of \Cref{prop:bregman_prox}]
		According to \Cref{def:bregman_prox}, 
		\[\lprox{\lambda, \alpha|\cdot|}{\exp}(\theta) = \argmin_{\vartheta\in\RR} \,\left\{\lambda\alpha|\vartheta| + \e^\vartheta - \e^\theta - \e^\theta(\vartheta - \theta) \right\}. \]
		First-order conditions give
		\[\begin{cases*}
			\alpha\lambda + \e^\vartheta - \e^\theta = 0 & if $\vartheta > 0$, \\
			-\alpha\lambda + \e^\vartheta - \e^\theta = 0 & if $\vartheta < 0$, 
		\end{cases*}\]
		which implies 
		\begin{equation}\label{eqn:left_bregman_prox_1}
			\vartheta^\star = 
			\begin{cases*}
				\log(\e^\theta - \alpha\lambda) & if $\vartheta^\star > 0$, \\
				\log(\e^\theta + \alpha\lambda) & if $\vartheta^\star < 0$ 
			\end{cases*} = 
			\begin{cases*}
				\log(\e^\theta - \alpha\lambda) & if $\theta > \log(1+\alpha\lambda)$, \\
				\log(\e^\theta + \alpha\lambda) & if $\theta < \log(1-\alpha\lambda)$. 
			\end{cases*}
		\end{equation}
		On the other hand, if $\vartheta = 0$, then 
		\begin{equation}\label{eqn:left_bregman_prox_2}
			\argmin_{\vartheta\in\RR} \,\left\{\lambda\alpha|\vartheta| + \e^\vartheta - \e^\theta - \e^\theta(\vartheta - \theta) \right\} = \argmin_{\vartheta\in\RR} \,\left\{1 - \e^\theta + \theta\e^\theta\right\} = 1-\e^\theta(1+\theta), 
		\end{equation}
		which corresponds to the range $[\log(1-\alpha\lambda), \log(1+\alpha\lambda)]$ for $\theta$. 
		Combining \eqref{eqn:left_bregman_prox_1} and \eqref{eqn:left_bregman_prox_2} yields the first desired result. 
		
		Again, according to \Cref{def:bregman_prox}, 
		\[\rprox{\lambda, \alpha|\cdot|}{\exp}(\theta) = \argmin_{\vartheta\in\RR} \,\left\{\lambda\alpha|\vartheta| + \e^\theta - \e^\vartheta - \e^\vartheta(\theta - \vartheta) \right\}. \]
		First-order conditions give
		\[\begin{cases*}
			\alpha\lambda - \e^\vartheta(\theta-\vartheta) = 0 & if $\vartheta > 0$, \\
			-\alpha\lambda - \e^\vartheta(\theta-\vartheta) = 0 & if $\vartheta < 0$ 
		\end{cases*} \Leftrightarrow 
		\begin{cases*}
			(\vartheta-\theta)\e^{\vartheta-\theta} = -\alpha\lambda\e^{-\theta} & if $\vartheta > 0$, \\
			(\vartheta-\theta)\e^{\vartheta-\theta} = \alpha\lambda\e^{-\theta} & if $\vartheta < 0$, 
		\end{cases*}
		\]
		which implies 
		\begin{align}\label{eqn:right_bregman_prox_1}
			\vartheta^\star &= 
			\begin{cases*}
				W(-\alpha\lambda\e^{-\theta})+\theta & if $\vartheta^\star > 0$ and $-\alpha\lambda\e^{-\theta} \ge -\e^{-1}$, \\
				W(\alpha\lambda\e^{-\theta})+\theta & if $\vartheta^\star < 0$ 
			\end{cases*} \nonumber\\
			&= 
			\begin{cases*}
				W(-\alpha\lambda\e^{-\theta})+\theta & if $\theta > \alpha\lambda$ and $\theta\ge \log(\alpha\lambda) + 1$, \\
				W(\alpha\lambda\e^{-\theta})+\theta & if $\theta < -\alpha\lambda$. 
			\end{cases*} \nonumber\\
			&=
			\begin{cases*}
				W(-\alpha\lambda\e^{-\theta})+\theta & if $\theta > \alpha\lambda$, \\
				W(\alpha\lambda\e^{-\theta})+\theta & if $\theta < -\alpha\lambda$. 
			\end{cases*} 
		\end{align}
		since $u \ge \log u + 1$ for any $u>0$. Notice that the condition $-\alpha\lambda\e^{-\theta} \ge -\e^{-1}$ is required for the Lambert $W$ function to be defined for a negative value. 
		
		On the other hand, if $\vartheta = 0$, then 
		\begin{equation}\label{eqn:right_bregman_prox_2}
			\argmin_{\vartheta\in\RR} \,\left\{\lambda\alpha|\vartheta| + \e^\theta - \e^\vartheta - \e^\vartheta(\theta - \vartheta) \right\} = \argmin_{\vartheta\in\RR} \,\left\{\e^\theta - 1 - \theta\right\} = \e^\theta - (1+\theta), 
		\end{equation}
		which corresponds to the range $[-\alpha\lambda, \alpha\lambda]$ for $\theta$. 
		Combining \eqref{eqn:right_bregman_prox_1} and \eqref{eqn:right_bregman_prox_2} yields the second desired result.
	\end{proof}
\end{proposition}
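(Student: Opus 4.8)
The plan is to recognize both formulas as the solution of a one‑dimensional minimization whose only nonsmoothness is the kink of $|\cdot|$ at the origin, and to read off the three branches from the first‑order optimality conditions, treating the regimes $\{\vartheta>0\}$, $\{\vartheta<0\}$ and $\vartheta=0$ separately. For the left operator, \Cref{def:bregman_prox} with $\psi=\exp$ and $D_\psi(\vartheta,\theta)=\e^\vartheta-\e^\theta-\e^\theta(\vartheta-\theta)$ gives the objective $\vartheta\mapsto\lambda\alpha|\vartheta|+\e^\vartheta-\e^\theta-\e^\theta(\vartheta-\theta)$, which is strictly convex and coercive, hence has a unique minimizer characterized by $0$ lying in its subdifferential. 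First I would handle the two smooth regimes: setting the derivative $\pm\lambda\alpha+\e^\vartheta-\e^\theta$ to zero yields $\e^\vartheta=\e^\theta\mp\alpha\lambda$, i.e.\ the two logarithmic expressions, and the sign constraint on the candidate converts directly into the ranges $\theta>\log(1+\alpha\lambda)$ and $\theta<\log(1-\alpha\lambda)$.

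The complementary regime is the kink $\vartheta=0$: the smooth part has derivative $1-\e^\theta$ there, so $0$ lies in the subdifferential $[-\lambda\alpha,\lambda\alpha]+(1-\e^\theta)$ exactly when $\log(1-\alpha\lambda)\le\theta\le\log(1+\alpha\lambda)$, which is precisely the leftover interval; evaluating the objective at $\vartheta=0$ then produces the stated third‑case expression. The right operator follows the same three‑regime template, now with $D_\psi(\theta,\vartheta)=\e^\theta-\e^\vartheta-\e^\vartheta(\theta-\vartheta)$, whose $\vartheta$‑derivative is $-\e^\vartheta(\theta-\vartheta)$, so the stationarity conditions on $\{\vartheta>0\}$ and $\{\vartheta<0\}$ read $\pm\lambda\alpha=\e^\vartheta(\theta-\vartheta)$, and the threshold interval $[-\alpha\lambda,\alpha\lambda]$ again comes from the subdifferential at $\vartheta=0$, where the smooth derivative equals $-\theta$.

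The genuinely new step, which I expect to be the main obstacle, is solving the transcendental stationarity equation for the right operator. Substituting $u=\vartheta-\theta$ rewrites $\pm\lambda\alpha=\e^\vartheta(\theta-\vartheta)$ as $u\,\e^{u}=\mp\alpha\lambda\,\e^{-\theta}$, the defining relation of the Lambert $W$ function, giving $\vartheta=\theta+W(\mp\alpha\lambda\,\e^{-\theta})$. Two points demand care. First, the positive branch uses $W$ at the negative argument $-\alpha\lambda\,\e^{-\theta}$, which lies in the admissible domain only when $-\alpha\lambda\,\e^{-\theta}\ge-\e^{-1}$; on the active range $\theta>\alpha\lambda$ this follows from the elementary bound $u\ge\log u+1$ applied at $u=\alpha\lambda$, so the auxiliary condition $\theta\ge\log(\alpha\lambda)+1$ is redundant. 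Second, unlike the left case, $D_\psi(\theta,\cdot)$ for $\psi=\exp$ is \emph{not} globally convex in its second argument (its second derivative $\e^\vartheta(1+\vartheta-\theta)$ changes sign at $\vartheta=\theta-1$), so rather than invoking convexity to certify global optimality I would rely on coercivity of the objective together with the fact that the listed critical points exhaust the stationary candidates in each sign regime, selecting the one of least objective value and using $\vartheta=0$ in the middle regime to obtain the remaining closed form.
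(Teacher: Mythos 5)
Your proof follows essentially the same route as the paper's: the same three-regime case split via first-order conditions, the same conversion of the sign constraints into ranges for $\theta$, and the same Lambert $W$ substitution together with the same elementary bound $u\ge\log u+1$ to discharge the domain condition on the negative argument. You additionally make explicit two points the paper leaves implicit --- the subdifferential characterization $0\in[-\lambda\alpha,\lambda\alpha]+(1-\e^\theta)$ (resp.\ $[-\lambda\alpha,\lambda\alpha]-\theta$) of the middle interval at the kink, and the fact that $D_{\exp}(\theta,\cdot)$ is not convex in its second argument so that global optimality of the stationary point for the right operator must be certified by coercivity and comparison of the candidate values rather than by convexity --- both of which are correct and genuinely strengthen the argument.
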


The corresponding experiments are illustrated in \Cref{fig:an_laplace_2}. BMMMLA (\Cref{fig:an_laplace_6}) are also used in this setting. We observe that the right variants perform comparably to the left ones, both outperforming MYULA at the wide marginals (i.e., the lower dimensions). 
\begin{figure}[h!]
	\begin{subfigure}[b]{0.24\textwidth}
		\centering
		\includegraphics[width=\textwidth]{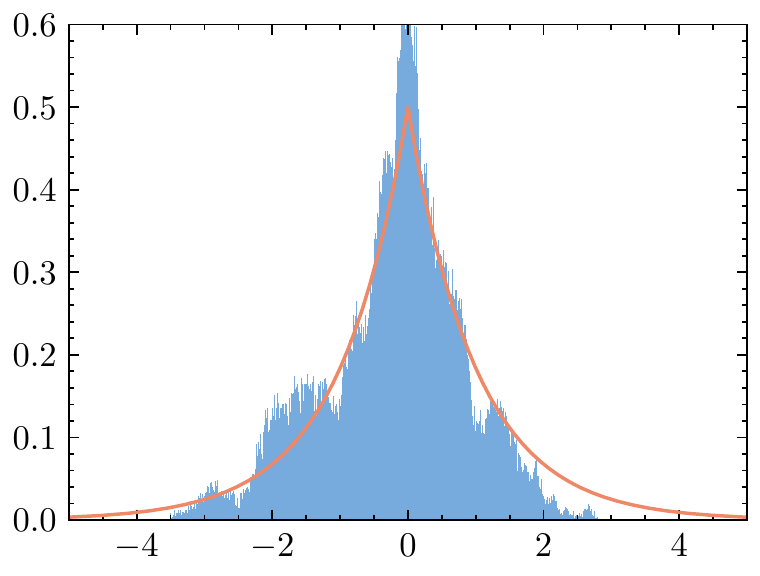}
	\end{subfigure}
	\hfill
	\begin{subfigure}[b]{0.24\textwidth}
		\centering
		\includegraphics[width=\textwidth]{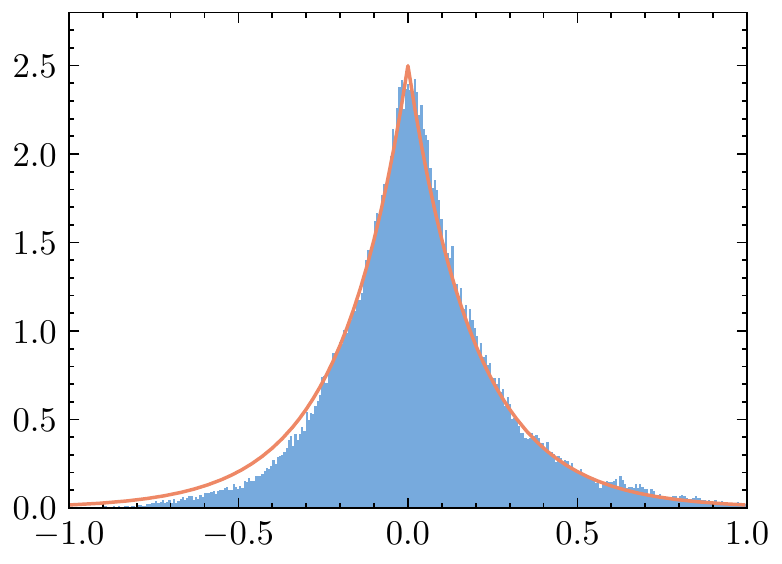}
	\end{subfigure}
	\begin{subfigure}[b]{0.24\textwidth}
		\centering
		\includegraphics[width=\textwidth]{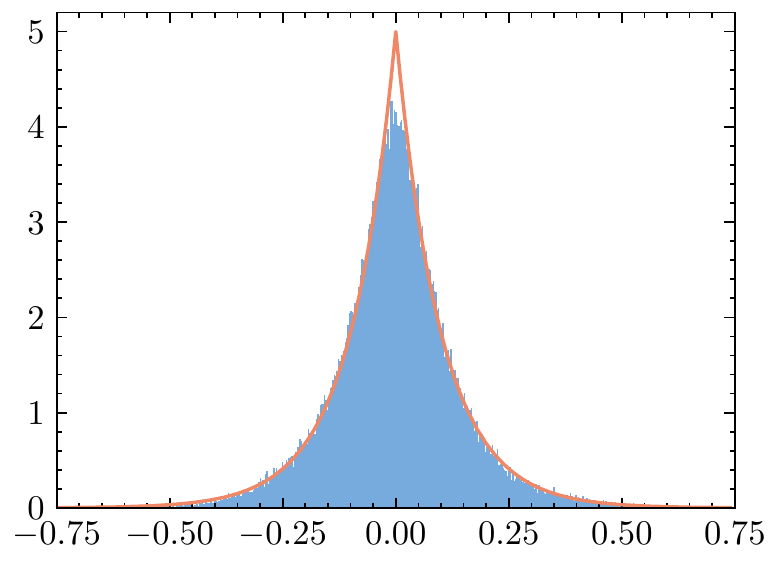}
	\end{subfigure}
	\begin{subfigure}[b]{0.24\textwidth}
		\centering
		\includegraphics[width=\textwidth]{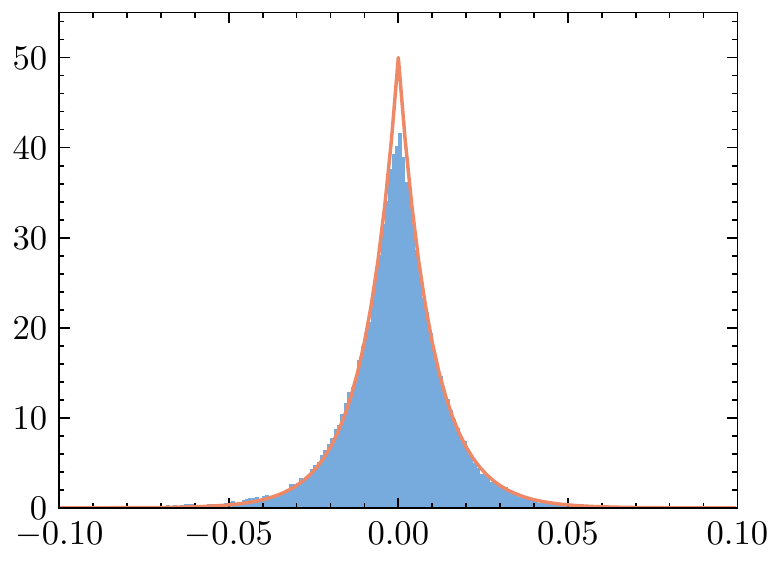}
	\end{subfigure}
	\newline
	\begin{subfigure}[b]{0.24\textwidth}
		\centering
		\includegraphics[width=\textwidth]{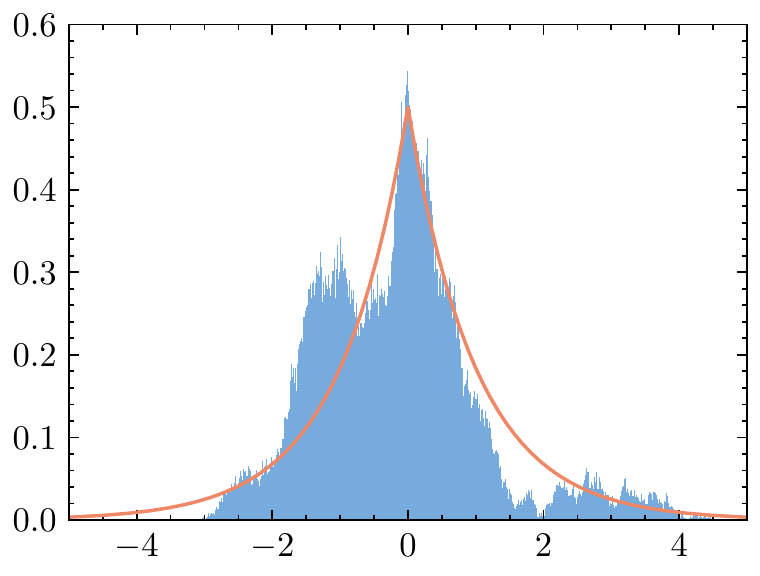}
		\caption{1\textsuperscript{st} dimension}
		\label{fig:rbmumla_1}
	\end{subfigure}
	\hfill
	\begin{subfigure}[b]{0.24\textwidth}
		\centering
		\includegraphics[width=\textwidth]{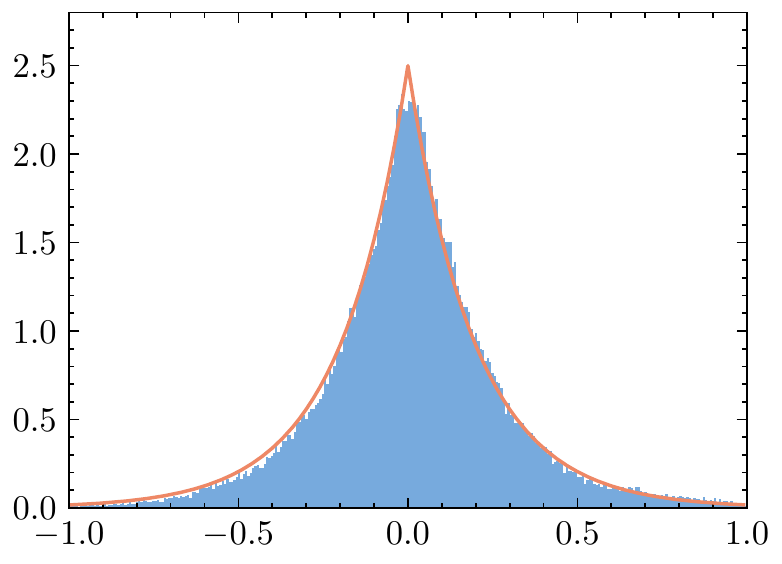}
		\caption{5\textsuperscript{th} dimension}
		\label{fig:rbmumla_5}
	\end{subfigure}
	\begin{subfigure}[b]{0.24\textwidth}
		\centering
		\includegraphics[width=\textwidth]{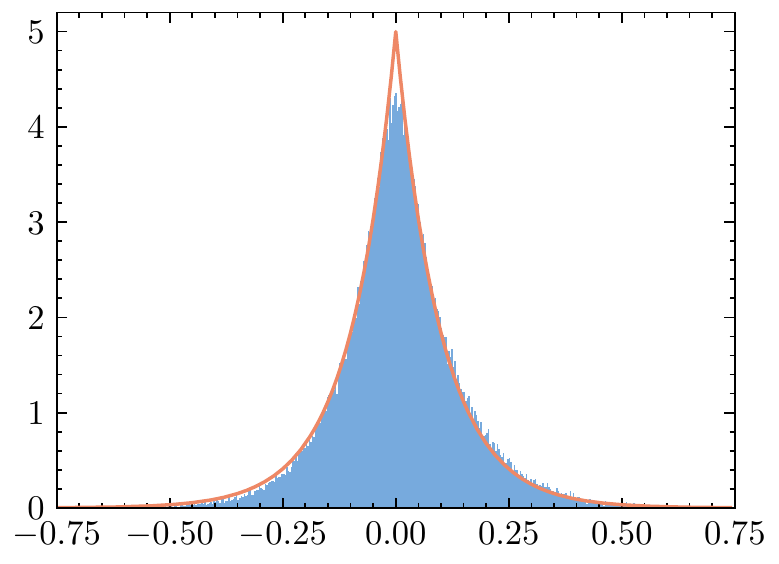}
		\caption{10\textsuperscript{th} dimension}
		\label{fig:rbmumla_10}
	\end{subfigure}
	\begin{subfigure}[b]{0.24\textwidth}
		\centering
		\includegraphics[width=\textwidth]{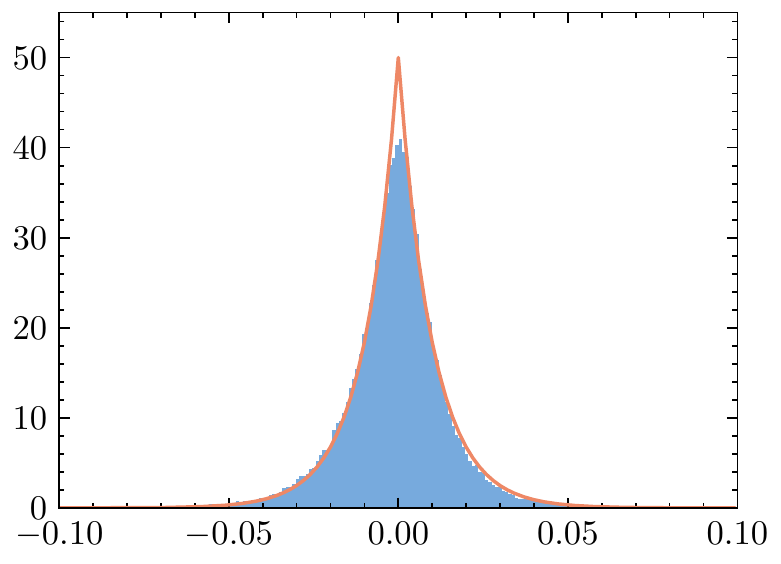}
		\caption{100\textsuperscript{th} dimension}
		\label{fig:rbmumla_100}
	\end{subfigure}
	
	\caption{Histograms of samples (in blue) from left BMUMLA (1\textsuperscript{st} row), right BMUMLA (2\textsuperscript{nd} row) and the true densities (in orange). 
	}
	\label{fig:an_laplace_2}
\end{figure}
\begin{figure}[h!]
	\begin{subfigure}[b]{0.24\textwidth}
		\centering
		\includegraphics[width=\textwidth]{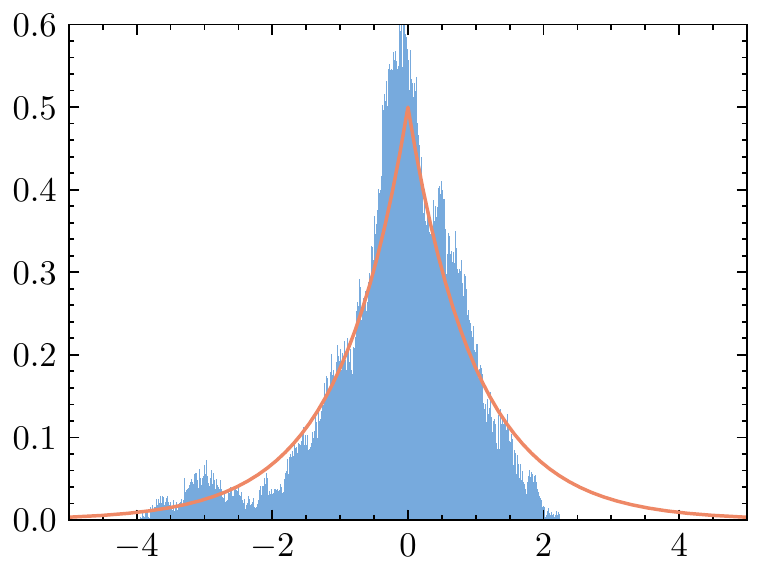}
	\end{subfigure}
	\hfill
	\begin{subfigure}[b]{0.24\textwidth}
		\centering
		\includegraphics[width=\textwidth]{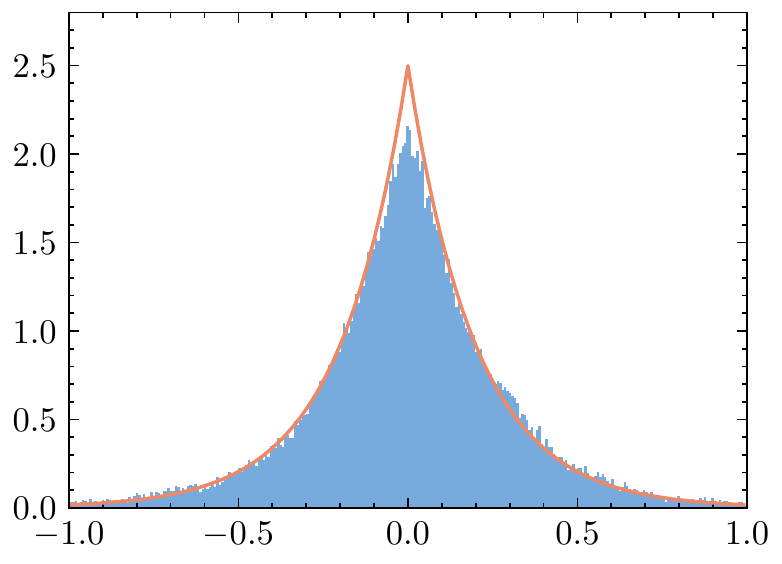}
	\end{subfigure}
	\begin{subfigure}[b]{0.24\textwidth}
		\centering
		\includegraphics[width=\textwidth]{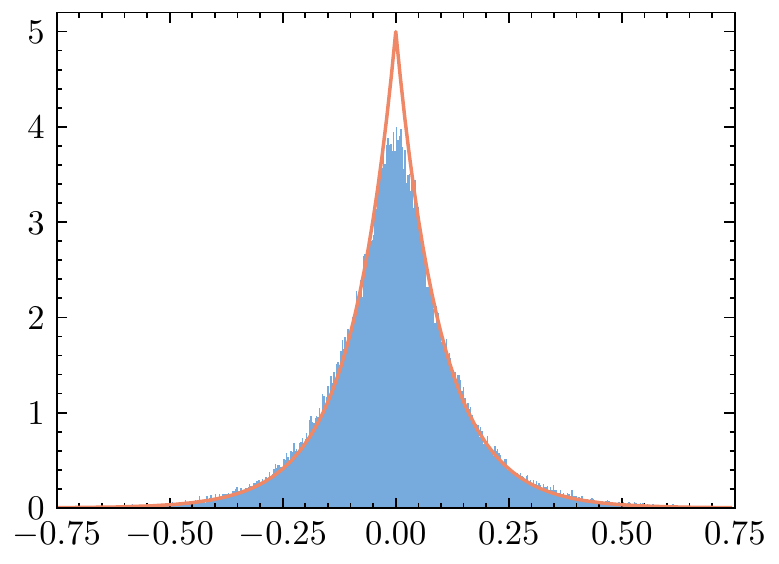}
	\end{subfigure}
	\begin{subfigure}[b]{0.24\textwidth}
		\centering
		\includegraphics[width=\textwidth]{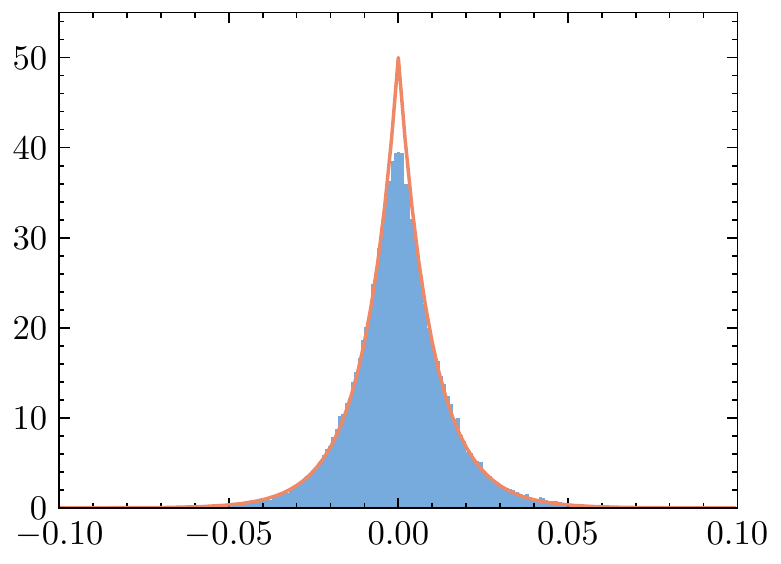}
	\end{subfigure}
	\newline
	\begin{subfigure}[b]{0.24\textwidth}
		\centering
		\includegraphics[width=\textwidth]{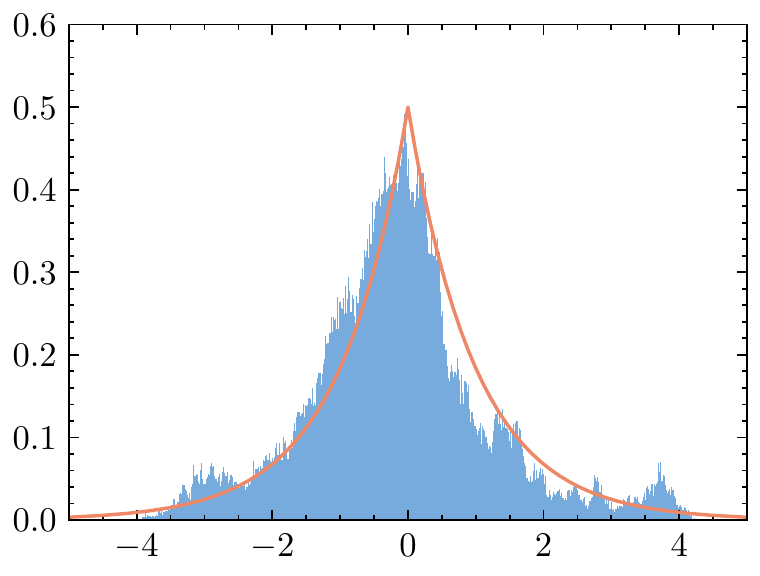}
		\caption{1\textsuperscript{st} dimension}
		\label{fig:rbmmmla_1}
	\end{subfigure}
	\hfill
	\begin{subfigure}[b]{0.24\textwidth}
		\centering
		\includegraphics[width=\textwidth]{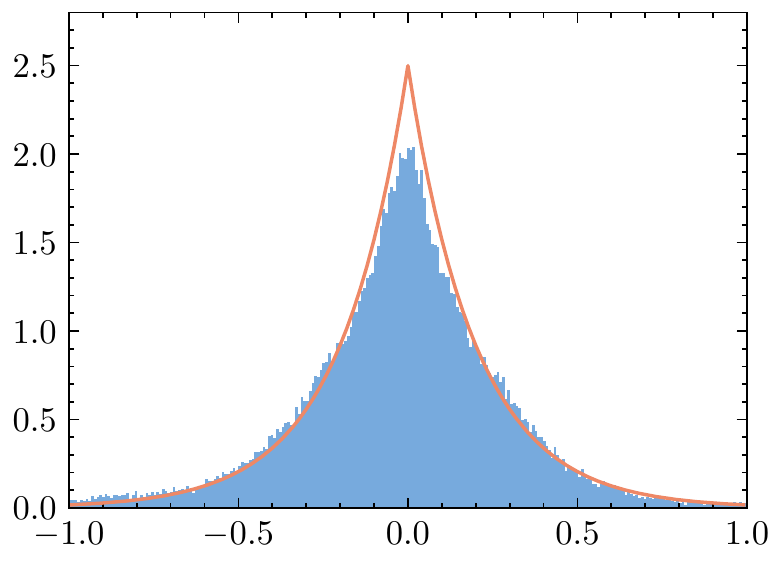}
		\caption{5\textsuperscript{th} dimension}
		\label{fig:rbmmmla_5}
	\end{subfigure}
	\begin{subfigure}[b]{0.24\textwidth}
		\centering
		\includegraphics[width=\textwidth]{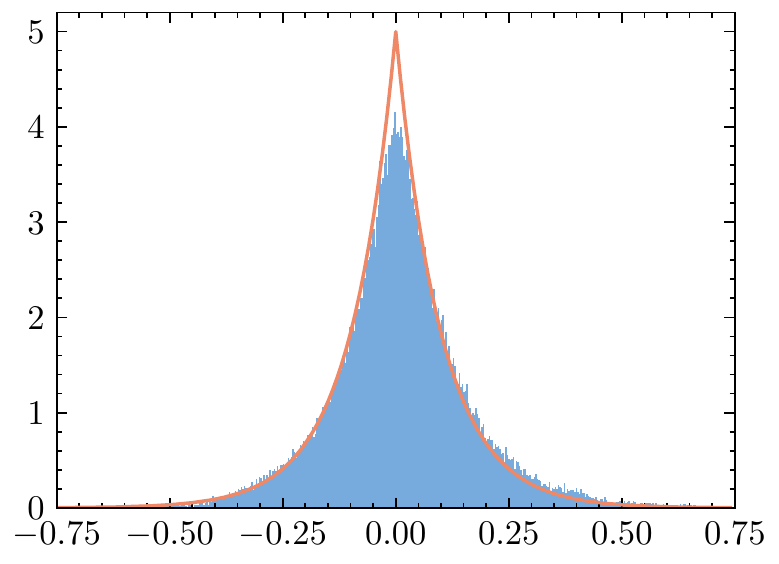}
		\caption{10\textsuperscript{th} dimension}
		\label{fig:rbmmmla_10}
	\end{subfigure}
	\begin{subfigure}[b]{0.24\textwidth}
		\centering
		\includegraphics[width=\textwidth]{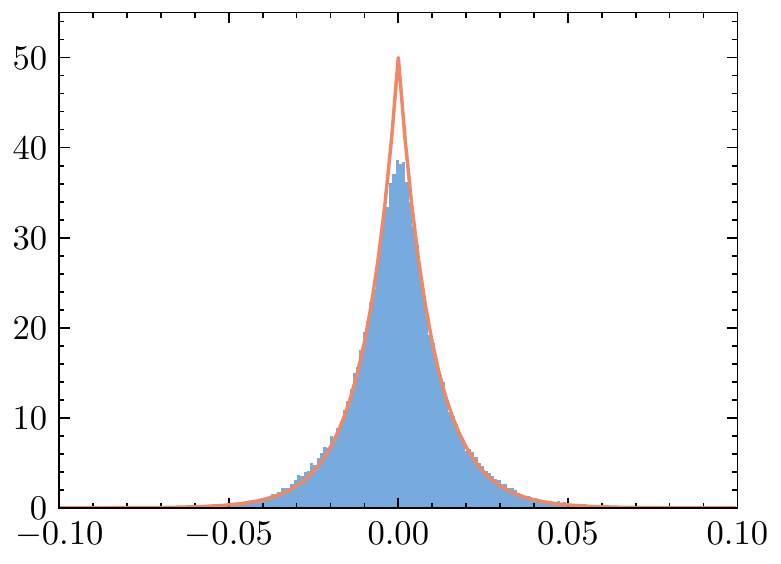}
		\caption{100\textsuperscript{th} dimension}
		\label{fig:rbmmmla_100}
	\end{subfigure}
	\caption{Histograms of samples (in blue) from left BMMMLA (1\textsuperscript{st} row), right BMMMLA (2\textsuperscript{nd} row) and the true densities (in orange). 
	}
	\label{fig:an_laplace_6}
\end{figure}

\subsection{Anisotropic Uniform Distribution}
We consider the task of sampling from an anisotropic uniform distribution over the set $\euC \coloneqq \prod_{i=1}^d [a_i, b_i]$, where $\ba=(a_i)_{1\le i\le d}^\top\in\RR^d$ and $\bb=(b_i)_{1\le i\le d}^\top\in\RR^d$. To perform this task using our proposed algorithm, we let $f=0$ and $g = \iota_\euC$. Note that the original mirror Langevin algorithm cannot apply to sampling uniform distributions, as mentioned in \citet{li2022mirror}, as $f=0$. However, by suitably choosing a Bregman--Moreau envelope, we can still perform approximate sampling (as opposed to exact sampling) using the BMMMLA. 

Note that when $g = \iota_\euC$ with $\euC\subseteq\RR^d$ being a closed convex set, the  Bregman proximity operators of $g$ are the Bregman projections (or projectors) onto $\euC$, as illustrated in the following definition \citep{bauschke2018regularizing}. 	
\begin{definition}[Bregman projections]
	Let $\euC\subseteq\RR^d$ be a closed convex set such that $\euX\cap\euC\ne\varnothing$, then $\lprox{\euC}{\varphi} \coloneqq \lprox{\iota_\euC}{\varphi}$ and $\rprox{\euC}{\varphi} \coloneqq \rprox{\iota_\euC}{\varphi}$ are the \emph{left} and \emph{right Bregman projections} onto $\euC$ respectively. 
\end{definition}

For simplicity, we choose $\psi = \frac12\euclidnorm{\cdot}^2$. Then the Bregman projection onto $\euC$ boils down to the Euclidean projection onto $\euC$, which is given by 
\[\lprox{\euC}{\varphi}(\btheta) = \rprox{\euC}{\varphi}(\btheta) = \proj_{\euC}(\btheta) = \left( \min\{b_i, \max\{a_i, \theta_i\}\} \right)_{1\le i\le d}^\top. \]  
In the experiment, we consider the case where $a_i = -i$ and $b_i = i$ for all $i\in\setd$, so that the target uniform distribution on $\euC=[-1, 1] \times [-2, 2] \times \cdots \times [-d, d]$ is anisotropic, varying significantly across different dimensions. 	
We use $\gamma=0.01$, $\lambda=1$ and $\bbeta=(2\sqrt{d-i+1})_{1\le i\le d}^\top$, and give the experimental results in \Cref{fig:an_uniform_myula,fig:an_uniform_bmumla}. We observe that BMUMLA outperforms MYULA at higher dimensions with wide marginals, where most samples lie in the desired ranges. 	
Also note that all of \Cref{assum:smooth,assum:nonsmooth,assum:mirror1,assum:mirror2,assum:Bregman,assum:functions} hold. See \Cref{fig:env_uniform} as a graphical verification of \Cref{assum:functions}, with $\alpha=2M_{\varphi_{\bbeta}}+0.1$ and $\beta_g = 250$.

\begin{figure}[h]		
	\centering
	\begin{subfigure}[b]{0.24\textwidth}
		\centering
		\includegraphics[width=\textwidth]{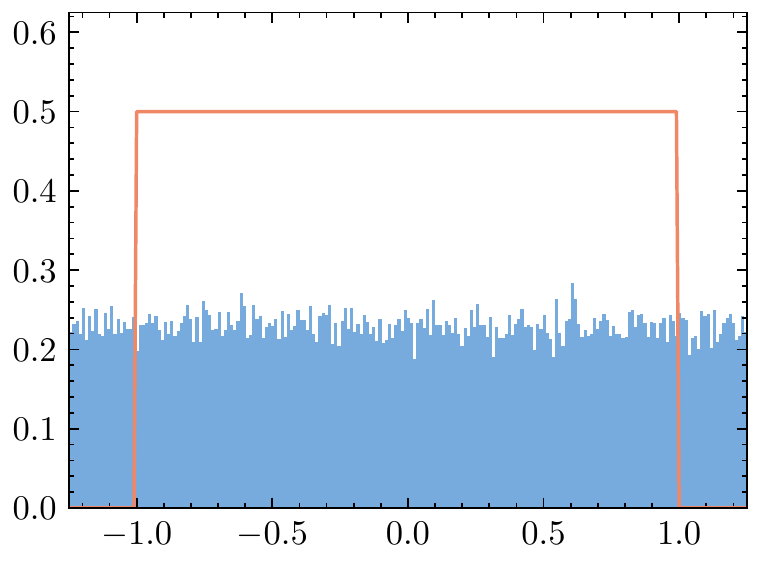}
		\caption{1\textsuperscript{st} dimension}
	\end{subfigure}
	\begin{subfigure}[b]{0.24\textwidth}
		\centering
		\includegraphics[width=\textwidth]{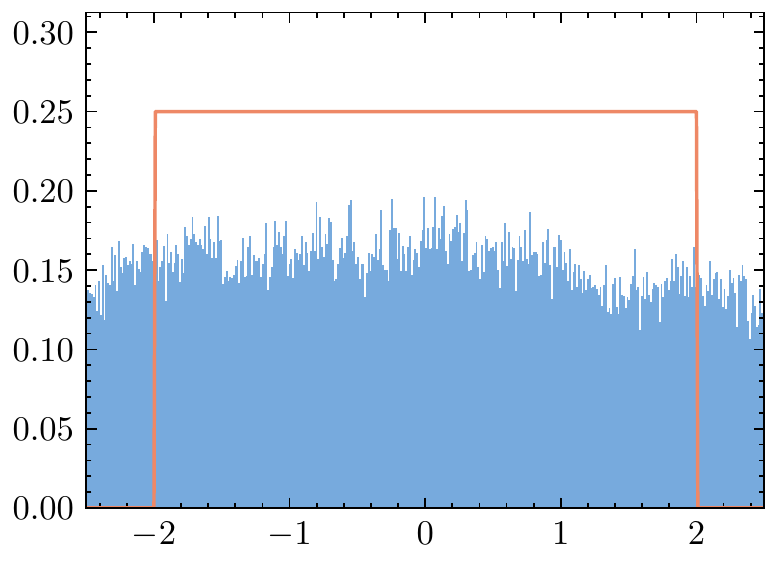}
		\caption{2\textsuperscript{nd} dimension}
	\end{subfigure}
	\begin{subfigure}[b]{0.24\textwidth}
		\centering
		\includegraphics[width=\textwidth]{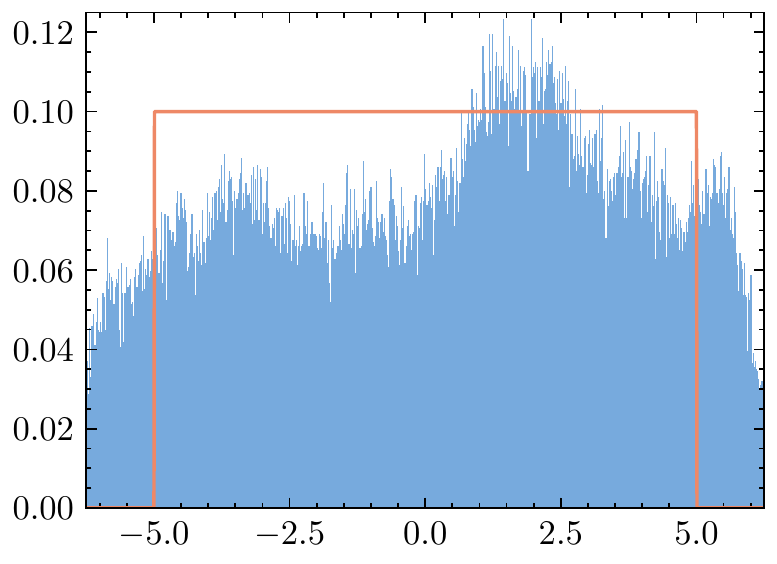}
		\caption{5\textsuperscript{th} dimension}
	\end{subfigure}
	\begin{subfigure}[b]{0.24\textwidth}
		\centering
		\includegraphics[width=\textwidth]{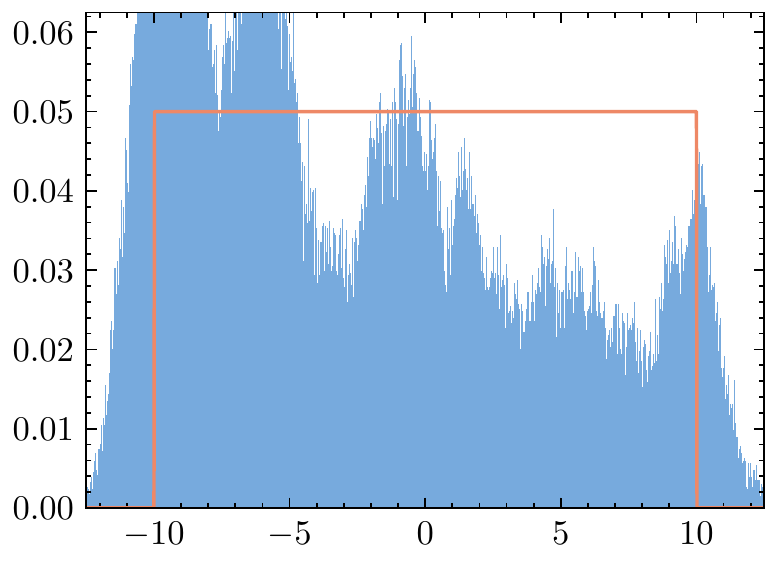}
		\caption{10\textsuperscript{th} dimension}
	\end{subfigure}
	\newline
	\centering
	\begin{subfigure}[b]{0.24\textwidth}
		\centering
		\includegraphics[width=\textwidth]{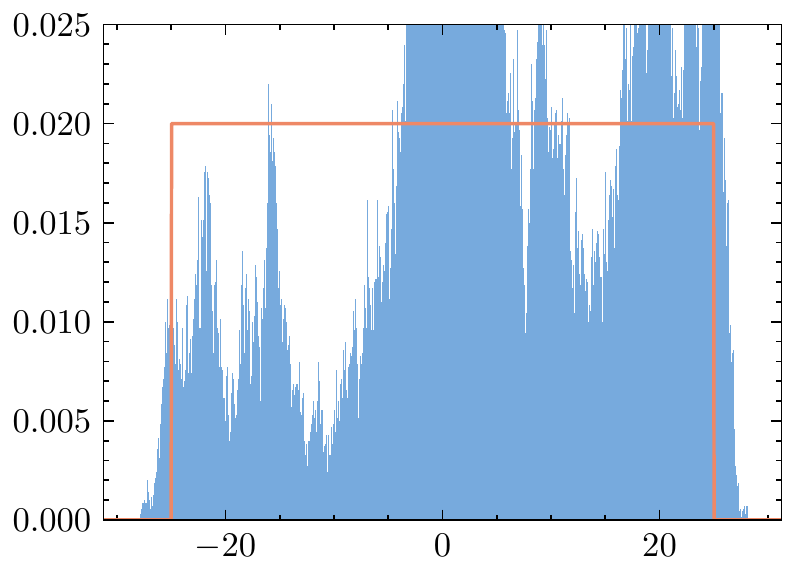}
		\caption{25\textsuperscript{th} dimension}
	\end{subfigure}
	\begin{subfigure}[b]{0.24\textwidth}
		\centering
		\includegraphics[width=\textwidth]{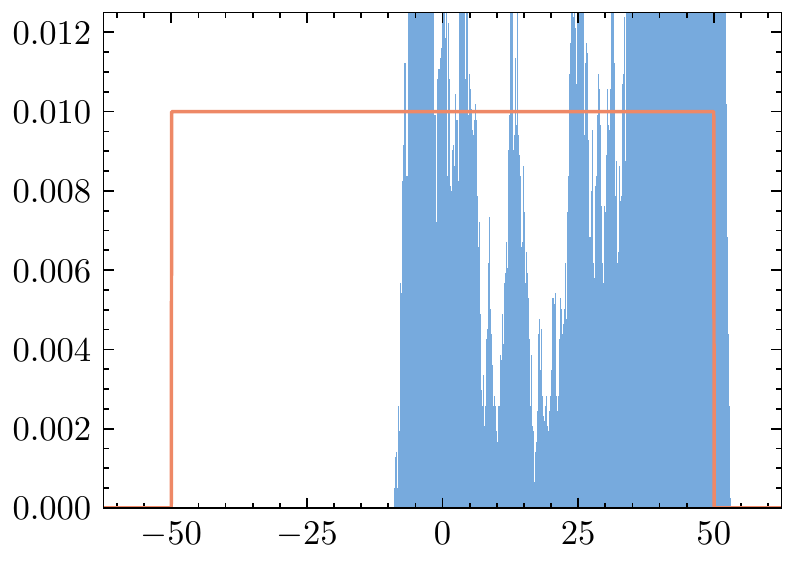}
		\caption{50\textsuperscript{th} dimension}
	\end{subfigure}
	\begin{subfigure}[b]{0.24\textwidth}
		\centering
		\includegraphics[width=\textwidth]{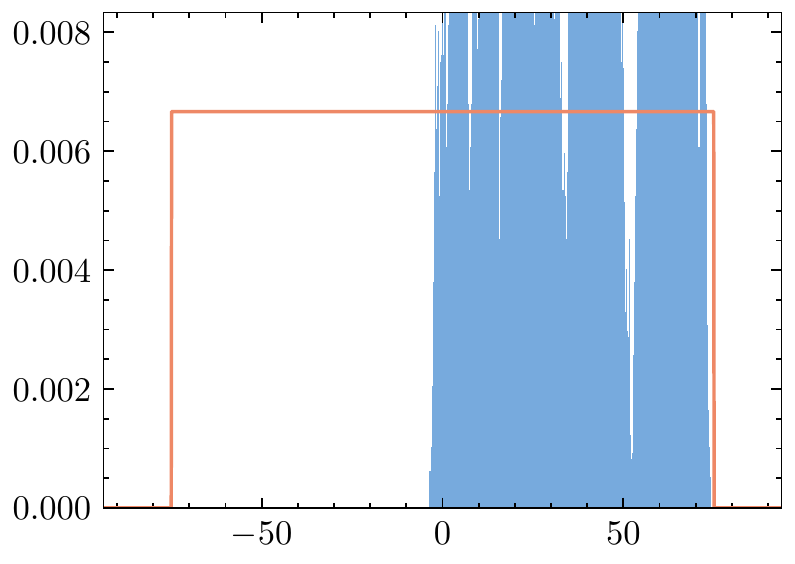}
		\caption{75\textsuperscript{th} dimension}
	\end{subfigure}
	\begin{subfigure}[b]{0.24\textwidth}
		\centering
		\includegraphics[width=\textwidth]{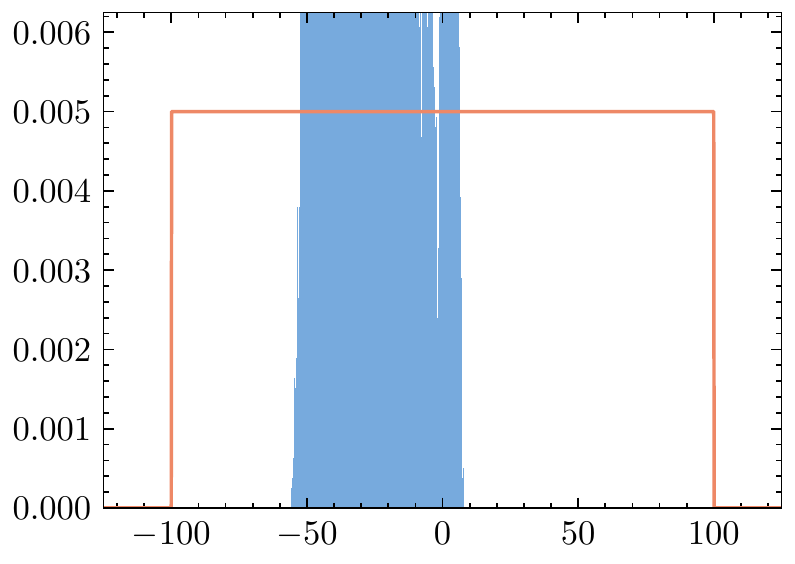}
		\caption{100\textsuperscript{th} dimension}
	\end{subfigure}
	\caption{Histograms of samples (blue) from MYULA and the true densities (orange) for uniform distribution on $\euC$. }
	\label{fig:an_uniform_myula}
\end{figure}

\begin{figure}[h!]
	\centering
	\begin{subfigure}[b]{0.24\textwidth}
		\centering
		\includegraphics[width=\textwidth]{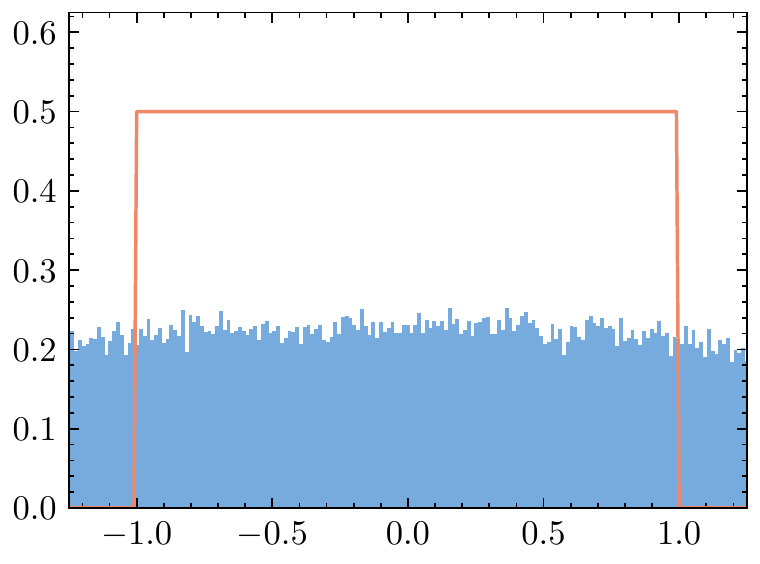}
		\caption{1\textsuperscript{st} dimension}
	\end{subfigure}
	\begin{subfigure}[b]{0.24\textwidth}
		\centering
		\includegraphics[width=\textwidth]{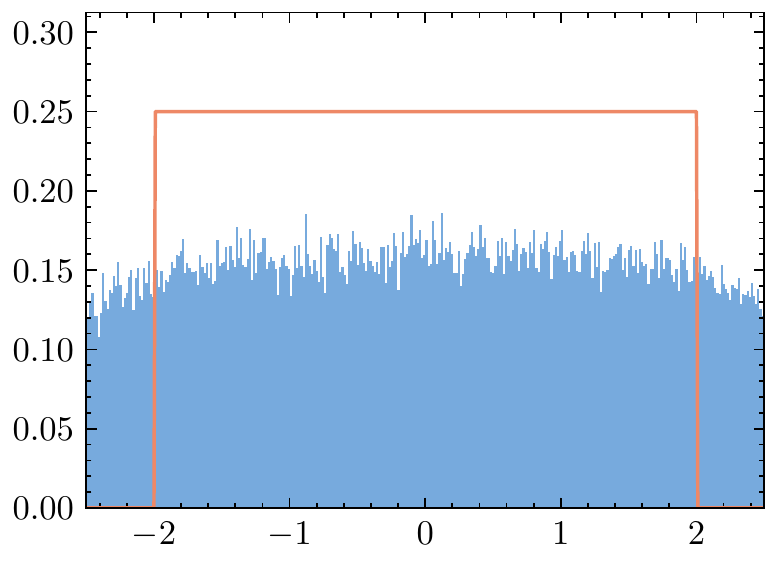}
		\caption{2\textsuperscript{nd} dimension}
	\end{subfigure}
	\begin{subfigure}[b]{0.24\textwidth}
		\centering
		\includegraphics[width=\textwidth]{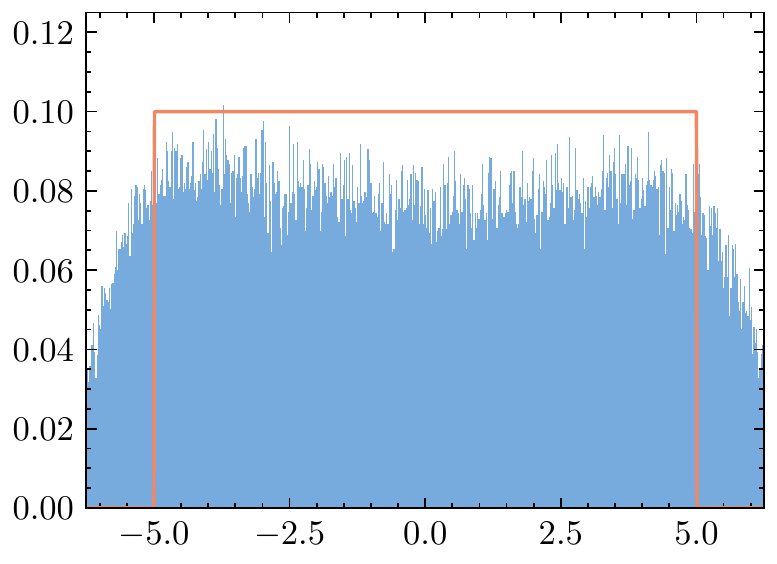}
		\caption{5\textsuperscript{th} dimension}
	\end{subfigure}
	\begin{subfigure}[b]{0.24\textwidth}
		\centering
		\includegraphics[width=\textwidth]{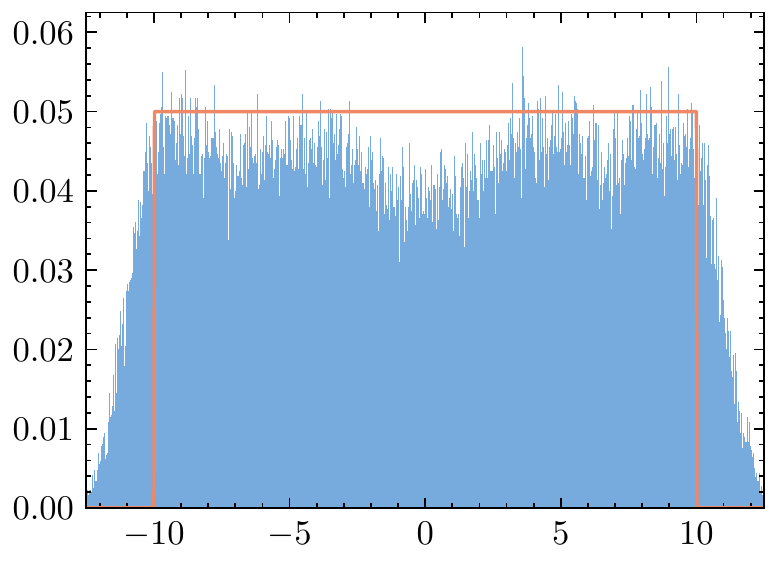}
		\caption{10\textsuperscript{th} dimension}
	\end{subfigure}
	\newline
	\centering
	\begin{subfigure}[b]{0.24\textwidth}
		\centering
		\includegraphics[width=\textwidth]{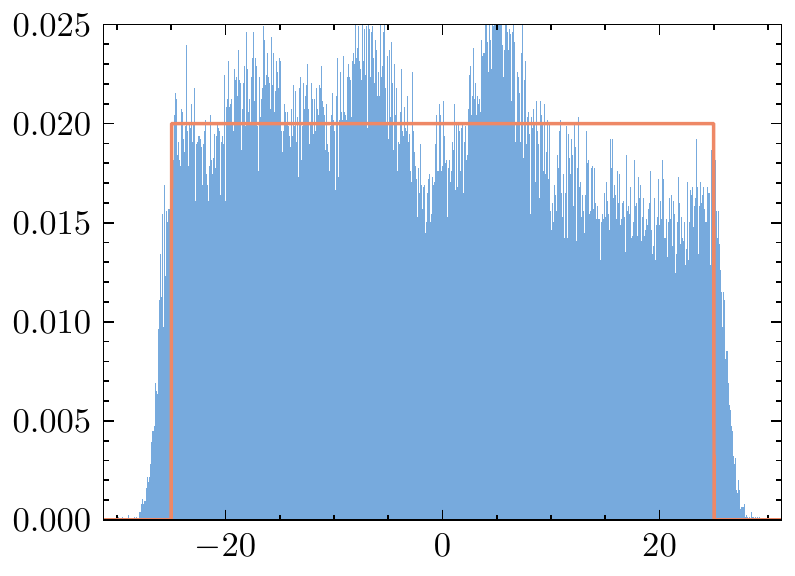}
		\caption{25\textsuperscript{th} dimension}
	\end{subfigure}
	\begin{subfigure}[b]{0.24\textwidth}
		\centering
		\includegraphics[width=\textwidth]{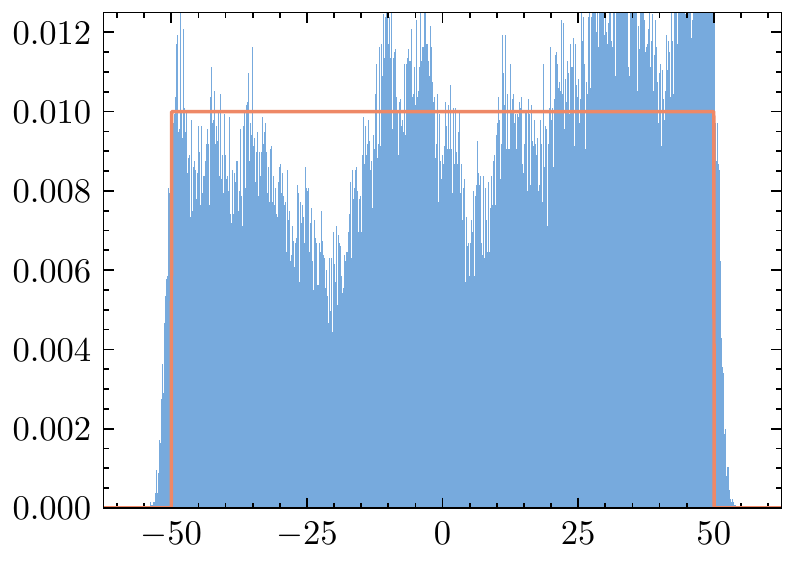}
		\caption{50\textsuperscript{th} dimension}
	\end{subfigure}
	\begin{subfigure}[b]{0.24\textwidth}
		\centering
		\includegraphics[width=\textwidth]{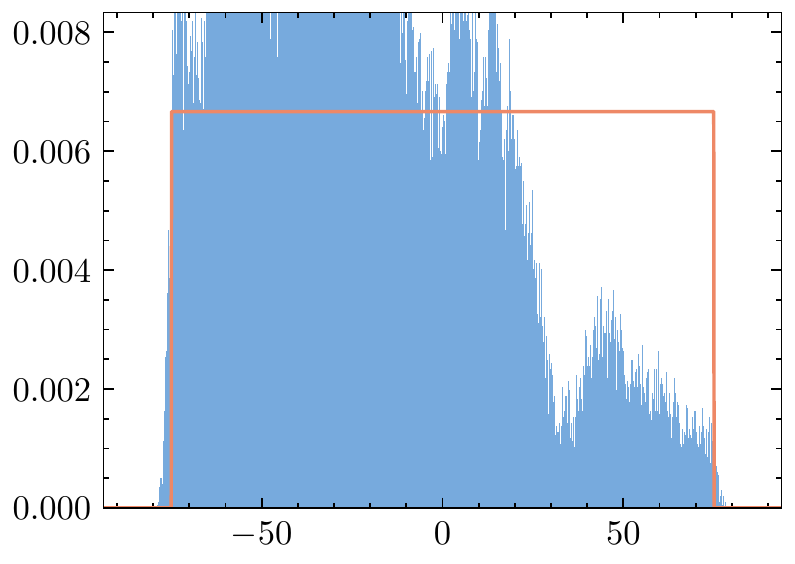}
		\caption{75\textsuperscript{th} dimension}
	\end{subfigure}
	\begin{subfigure}[b]{0.24\textwidth}
		\centering
		\includegraphics[width=\textwidth]{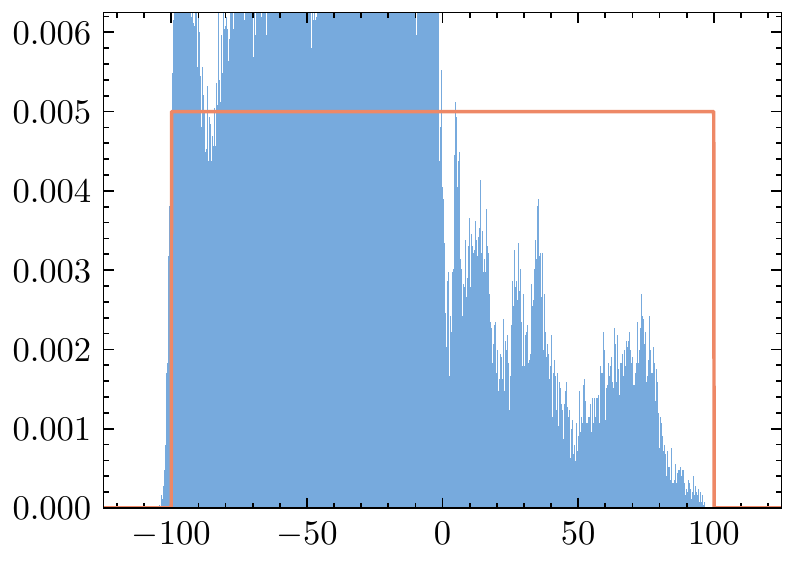}
		\caption{100\textsuperscript{th} dimension}
	\end{subfigure}
	\caption{Histograms of samples (blue) from BMUMLA and the true densities (orange) for uniform distribution on $\euC$. }
	\label{fig:an_uniform_bmumla}
\end{figure}

\begin{figure}[h!]
	\centering
	\begin{subfigure}[b]{.49\textwidth}
		\centering
		\includegraphics[width=\textwidth]{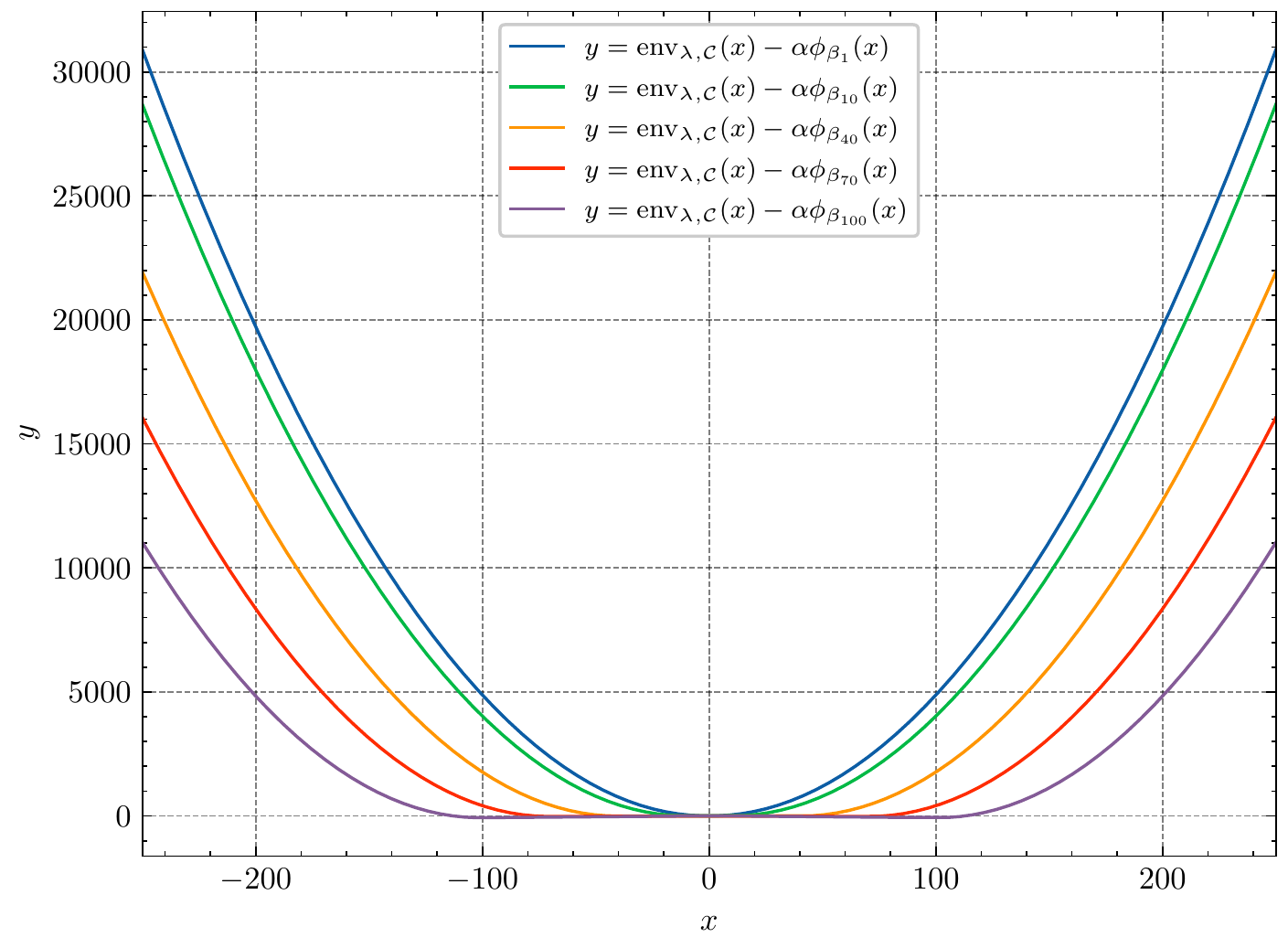}		
	\end{subfigure}
	\begin{subfigure}[b]{.49\textwidth}
		\centering
		\includegraphics[width=\textwidth]{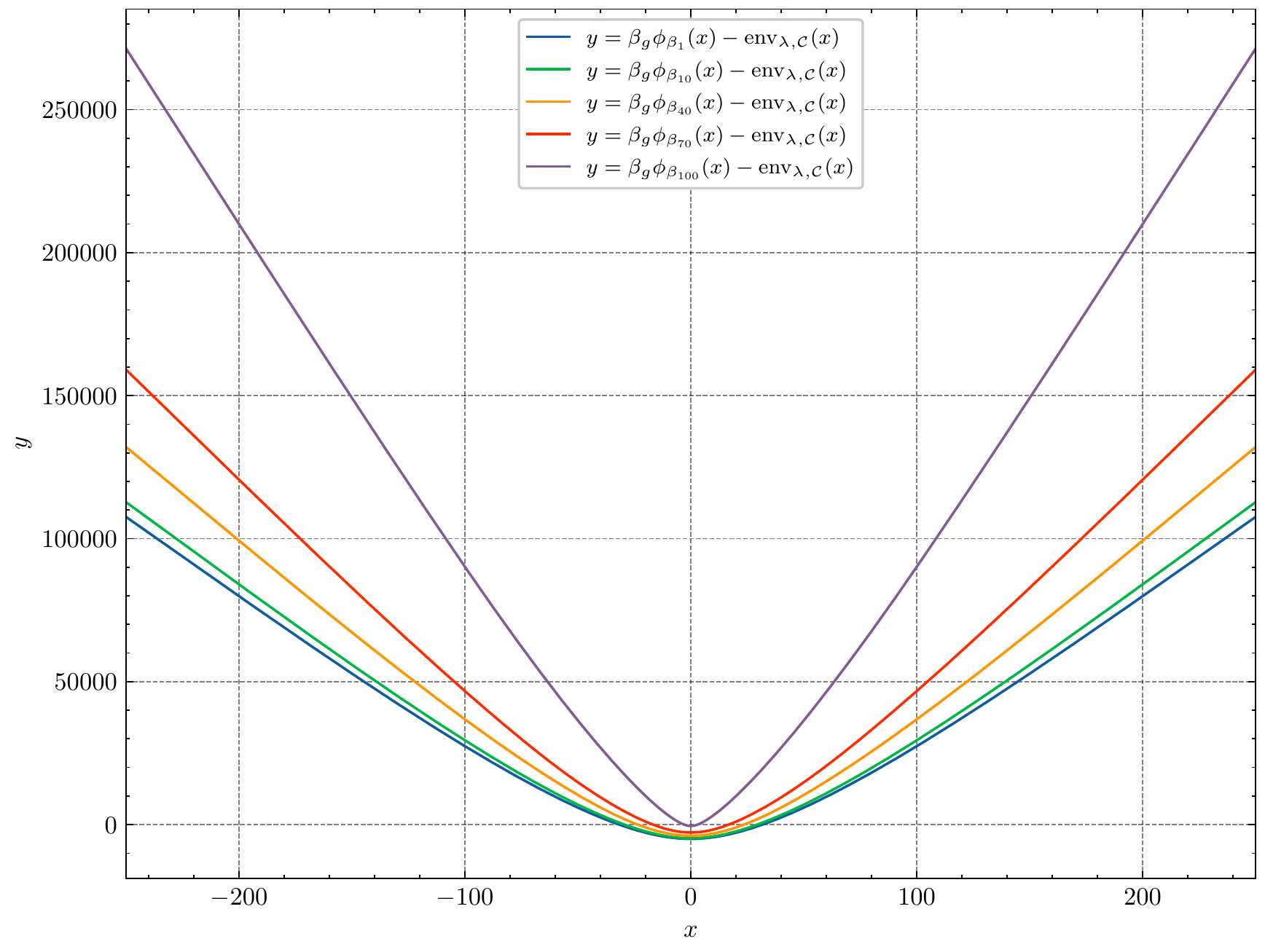}		
	\end{subfigure}
	\caption{Plots of $y = \env_{\lambda, \euC}(x) - \alpha\phi_{\beta_i}(x)$ (left) and $y = \beta_g\phi_{\beta_i}(x) - \env_{\lambda, \euC}(x)$ (right), for $i\in\{1, 10, 40, 70, 100\}$. }
	\label{fig:env_uniform}
\end{figure}

\subsection{Bayesian Sparse Logistic Regression}	
We compare the performance of MYULA and BMUMLA in Bayesian sparse logistic regression. Suppose that we observe the samples  $\{(\bx_n, y_n) \}_{n=1}^N$, where $\bx_n\in\RR^d$ and $y_n\in\{0,1\}$. In Bayesian logistic regression, the data are assumed to follow the model 
\begin{equation}\label{eqn:labels}
	y_n \iiddist \mathrm{Bernoulli}\left( \frac{\exp(\dotp{\btheta}{\bx_n})}{1+\exp(\dotp{\btheta}{\bx_n})}\right), 
\end{equation}
for each $n\in\set{N}$. The parameter $\btheta = (\theta_i)_{1 \le i \le d}^\top\in\RR^d$ is a random variable with a prior density $p$ with respect to Lebesgue measure. Then, the posterior distribution of $\btheta$ takes the form 
\[p(\btheta\mid\{(\bx_n, y_n) \}_{n=1}^N) \propto p(\btheta) \exp\left\lbrace\sumN \left(y_n \dotp{\btheta}{\bx_n} - \log (1 + \exp(\dotp{\btheta}{\bx_n})) \right)\right\rbrace. \]

We are particularly concerned with the case with a prior in the form of a combination of an anisotropic Laplace distribution (which is sparsity-inducing) and a Gaussian distribution, where the unadjusted Langevin algorithm is no longer viable due to the nonsmoothness induced by the anisotropic Laplace distribution. In general, such a prior takes the form: 
\[
p(\btheta) \coloneqq p(\btheta\mid\balpha_1, \alpha_2) \propto \exp\left\{- \sum_{i=1}^d \alpha_{1,i}|\theta_i| - \frac{\alpha_2}{2}\sumd \theta_i^2 \right\},
\]
where $\balpha_1 = (\alpha_{1,i})_{1\le i\le d}^\top\in\RP^d$ and $\alpha_2\in\RP$. 

Then, the resulting posterior distribution has a potential of the following form:
\[U(\btheta) = \underbrace{\sumN [\log (1+\exp(\dotp{\btheta}{\bx_n})) - y_n\dotp{\btheta}{\bx_n}] + \alpha_2\euclidnorm{\btheta}^2}_{\eqqcolon f(\btheta)} + \underbrace{\onenorm{\balpha_1\odot\btheta}}_{\eqqcolon g(\btheta)}. \]

We take $d=100$, $N=1000$ and $\btheta^\star = (\zero_{10}^\top, 0.1\cdot\One_{10}^\top, 0.2\cdot\One_{10}^\top, \ldots, 0.9\cdot\One_{10}^\top)^\top\in\RR^{100}$ as the ground truth. Then, each $x_{n,i}$ is generated from a standard Gaussian distribution and each $y_n$ is sampled following \eqref{eqn:labels} with $\btheta = \btheta^\star$. In addition, we choose $\balpha_1 = (10\cdot\One_{10}^\top, 9\cdot\One_{10}^\top, \ldots, 1\cdot\One_{10}^\top)^\top$ and $\alpha_2=0.1$. Again, we use the hypentropy functions $\varphi_{\bbeta}$  (for the mirror map) and $\psi_{\bsigma}$ (for the Bregman--Moreau envelope), with $\bbeta = (2i^{\sfrac14}\cdot\One_{10}^\top)_{1\le i\le 10}^\top$ and $\bsigma = (\alpha_{1,i}^2)_{1\le i\le d}^\top$. We also use a step size $\gamma=5\times10^{-4}$ and a smoothing parameter $\lambda=0.01$. Note that all of \Cref{assum:smooth,assum:nonsmooth,assum:mirror1,assum:mirror2,assum:Bregman,assum:functions} hold in this case. In particular, for \Cref{assum:functions}, notice that $f$ is indeed strongly convex.

We compare the performance of MYULA and BMUMLA by estimating the posterior means of $\btheta$ (as a whole or componentwise) and $\euclidnorm{\btheta}^2/d$. We generate 30 samples (indexed by $s$) using each algorithm for 4000 iterations and average the samples to obtain estimates $\btheta_k = \frac{1}{30}\sum_{s=1}^{30} \btheta_{k, s}$ and $\euclidnorm{\btheta_k}^2/d$ for the posterior means. From \Cref{fig:logistic_error}, we observe that the proposed left BMUMLA outperforms MYULA in the estimation of both posterior means. 

\begin{figure}[h!]
	\centering
	\begin{subfigure}[b]{.49\textwidth}
		\centering
		\includegraphics[width=\textwidth]{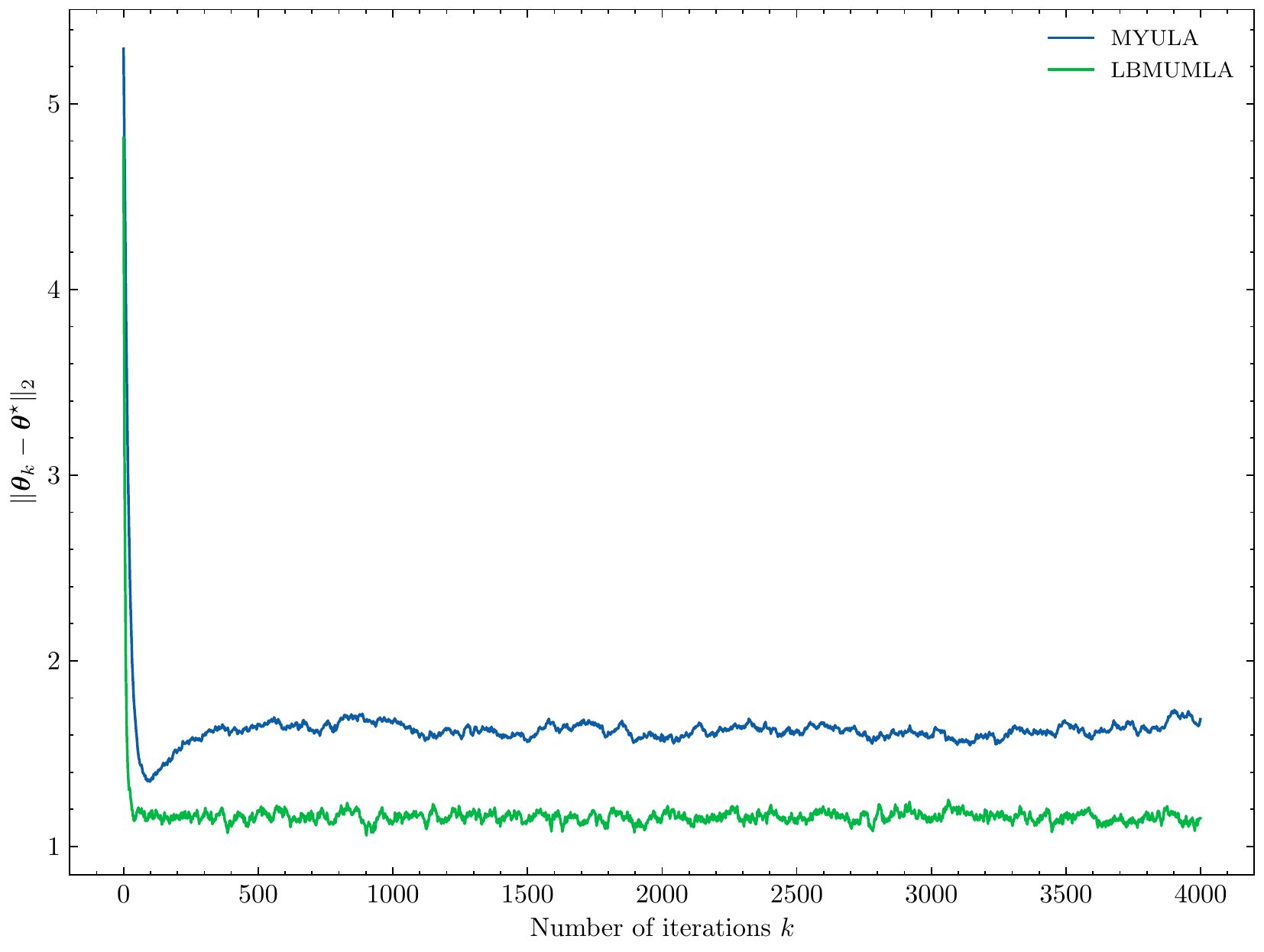}		
	\end{subfigure}
	\begin{subfigure}[b]{.49\textwidth}
		\centering
		\includegraphics[width=\textwidth]{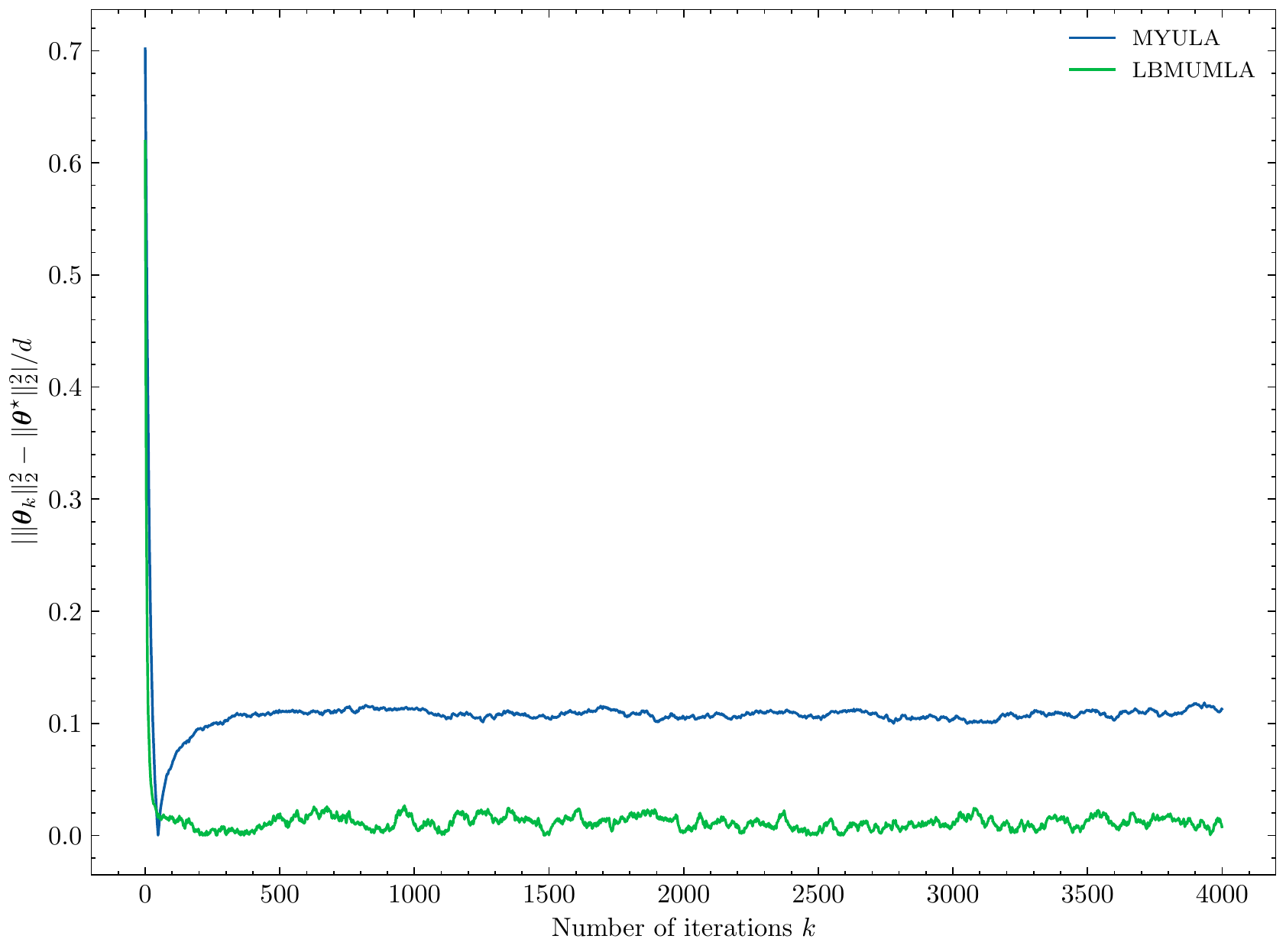}		
		
	\end{subfigure}
	\caption{Plots of estimation errors of the posterior means $\euclidnorm{\btheta_k - \btheta^\star}$ (left) and $\left|\euclidnorm{\btheta_k}^2 - \euclidnorm{\btheta^\star}^2\right|/d$ (right). }
	\label{fig:logistic_error}
\end{figure}

We also plot the estimation errors of the posterior means of some components of $\btheta$. \Cref{fig:logistic_single} reveals that MYULA gives smaller estimation errors than LBMUMLA at lower dimensions but high estimation errors at higher dimensions. However, we expect that the performance of BMUMLA would be further improved if $\bbeta$ and $\bsigma$ are more carefully picked or tuned, in order to fully adapt to the geometry of the posterior potential. 

\begin{figure}[h!]
	\centering
	\begin{subfigure}[b]{0.32\textwidth}
		\centering
		\includegraphics[width=\textwidth]{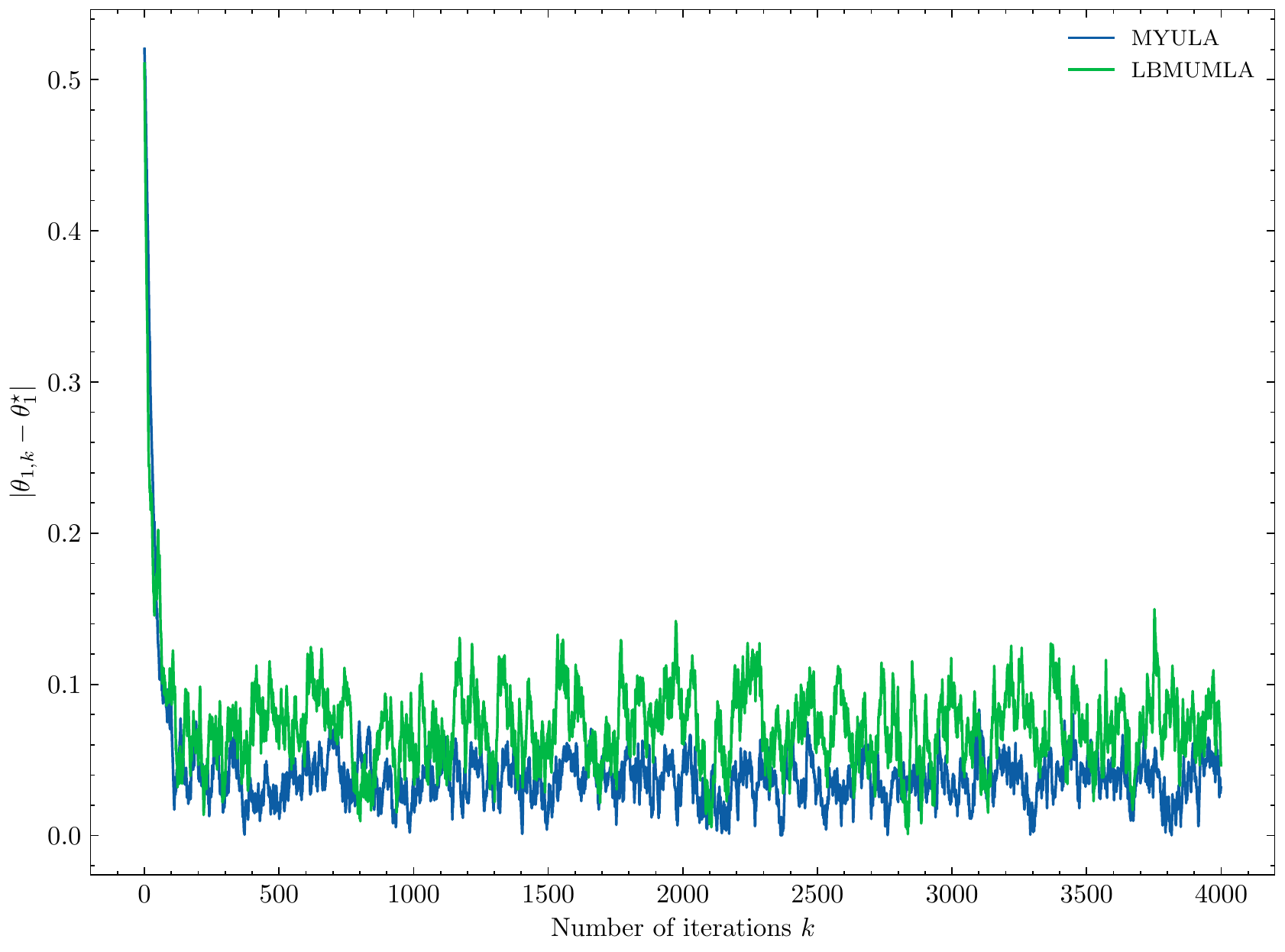}
		\caption{1\textsuperscript{st} dimension}
	\end{subfigure}
	\begin{subfigure}[b]{0.32\textwidth}
		\centering
		\includegraphics[width=\textwidth]{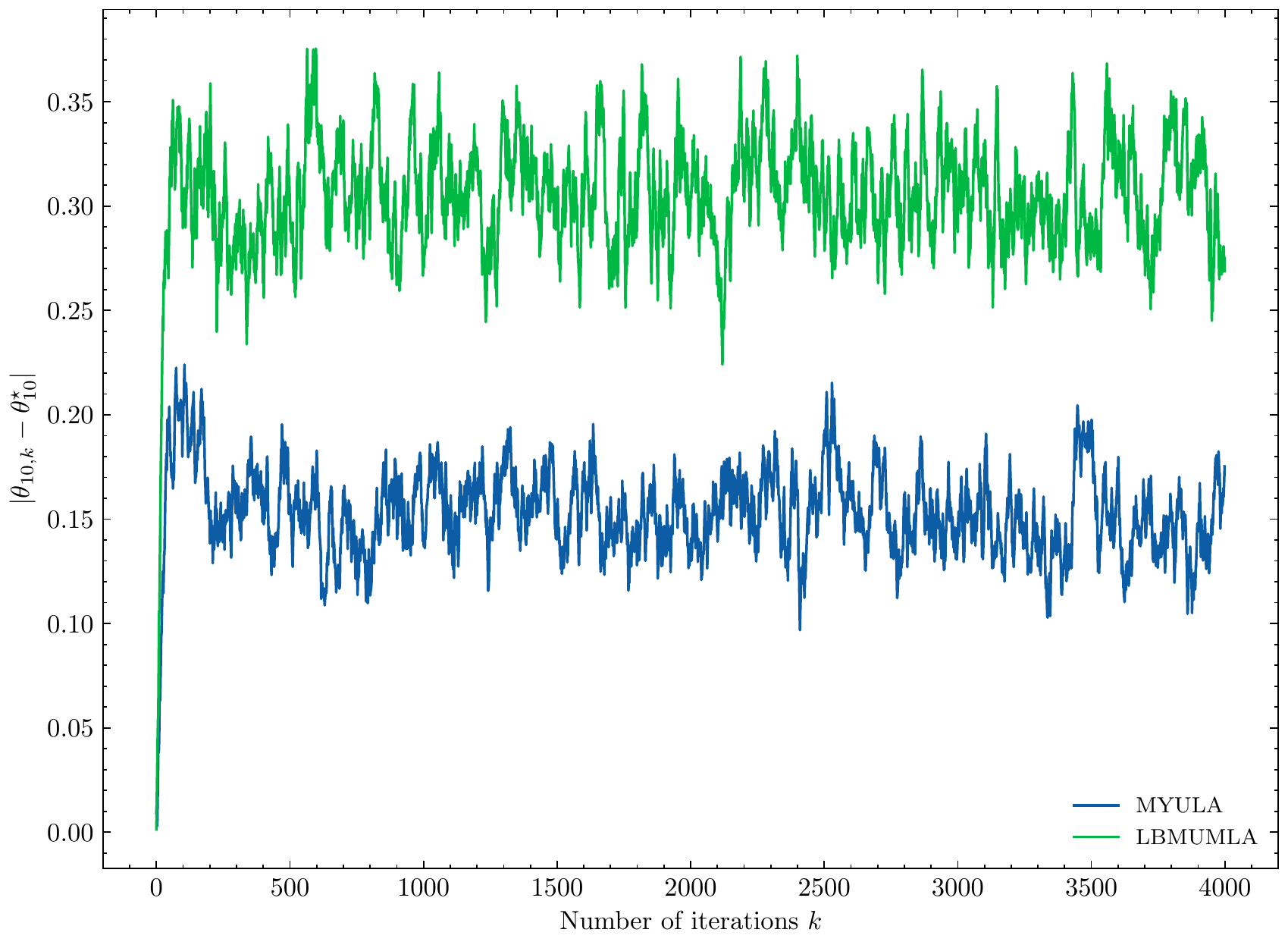}
		\caption{10\textsuperscript{th} dimension}
	\end{subfigure}
	\begin{subfigure}[b]{0.32\textwidth}
		\centering
		\includegraphics[width=\textwidth]{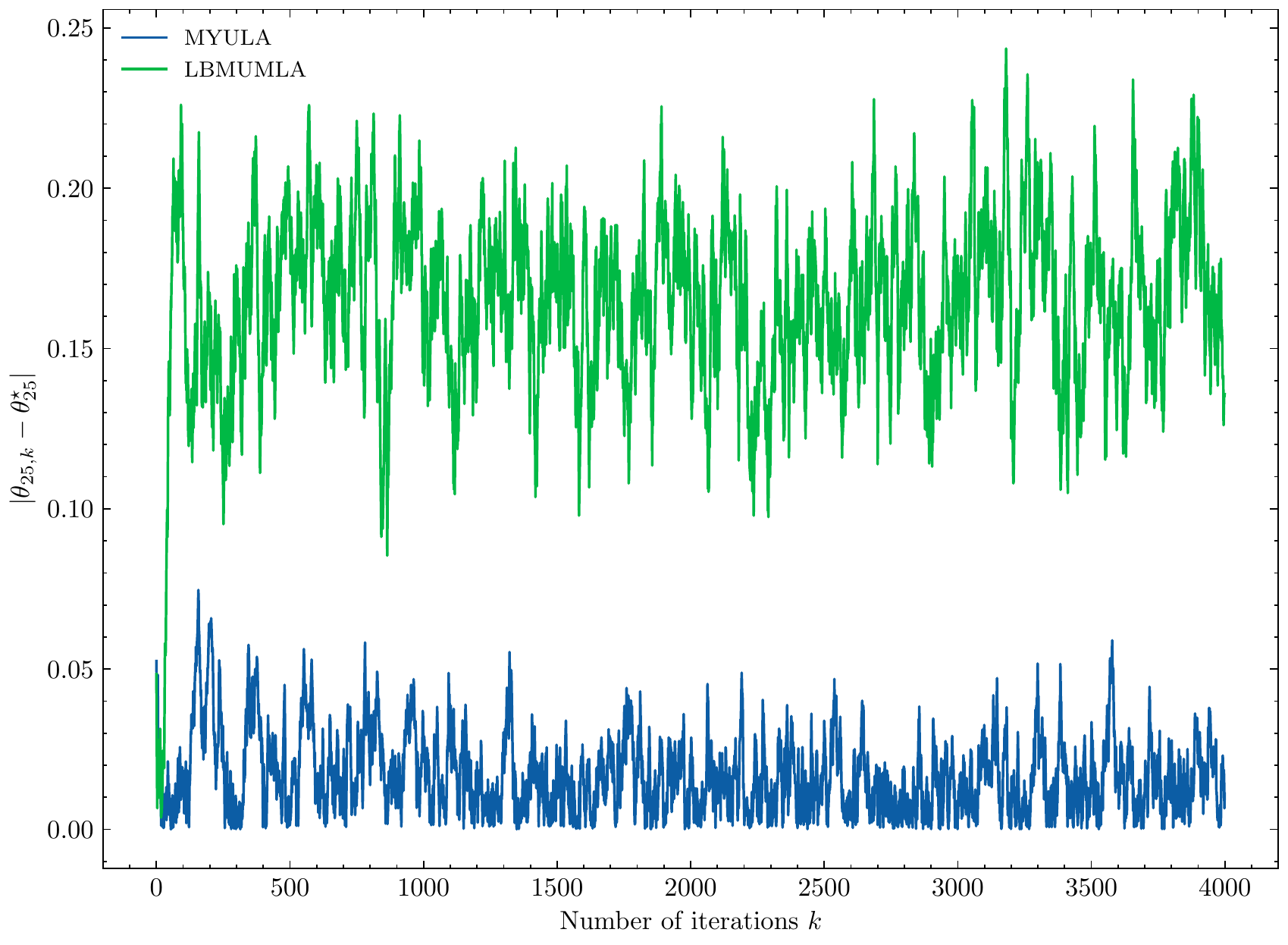}
		\caption{25\textsuperscript{th} dimension}
	\end{subfigure}		
	\newline
	\centering
	\begin{subfigure}[b]{0.32\textwidth}
		\centering
		\includegraphics[width=\textwidth]{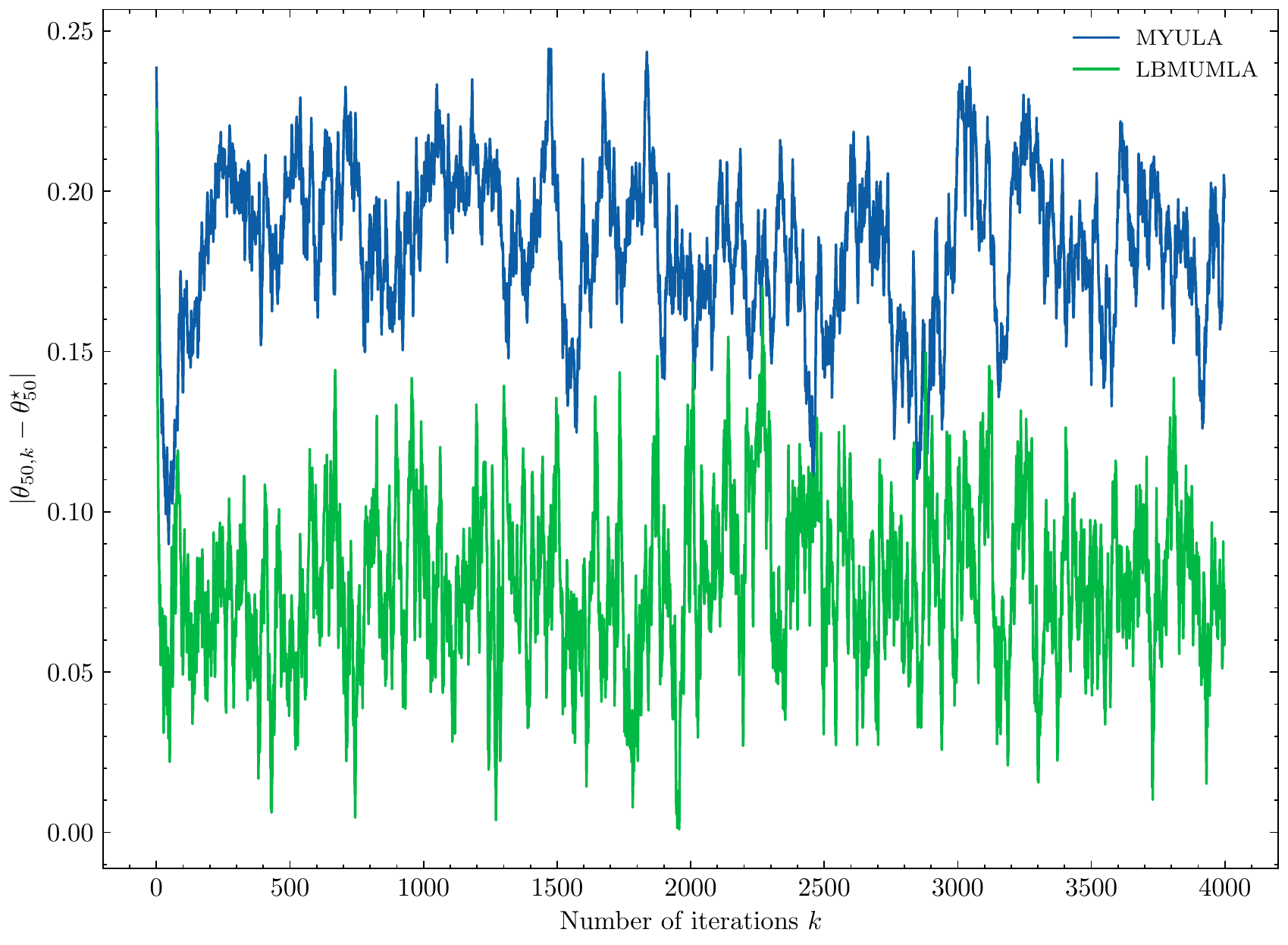}
		\caption{50\textsuperscript{th} dimension}
	\end{subfigure}
	\begin{subfigure}[b]{0.32\textwidth}			
		\centering
		\includegraphics[width=\textwidth]{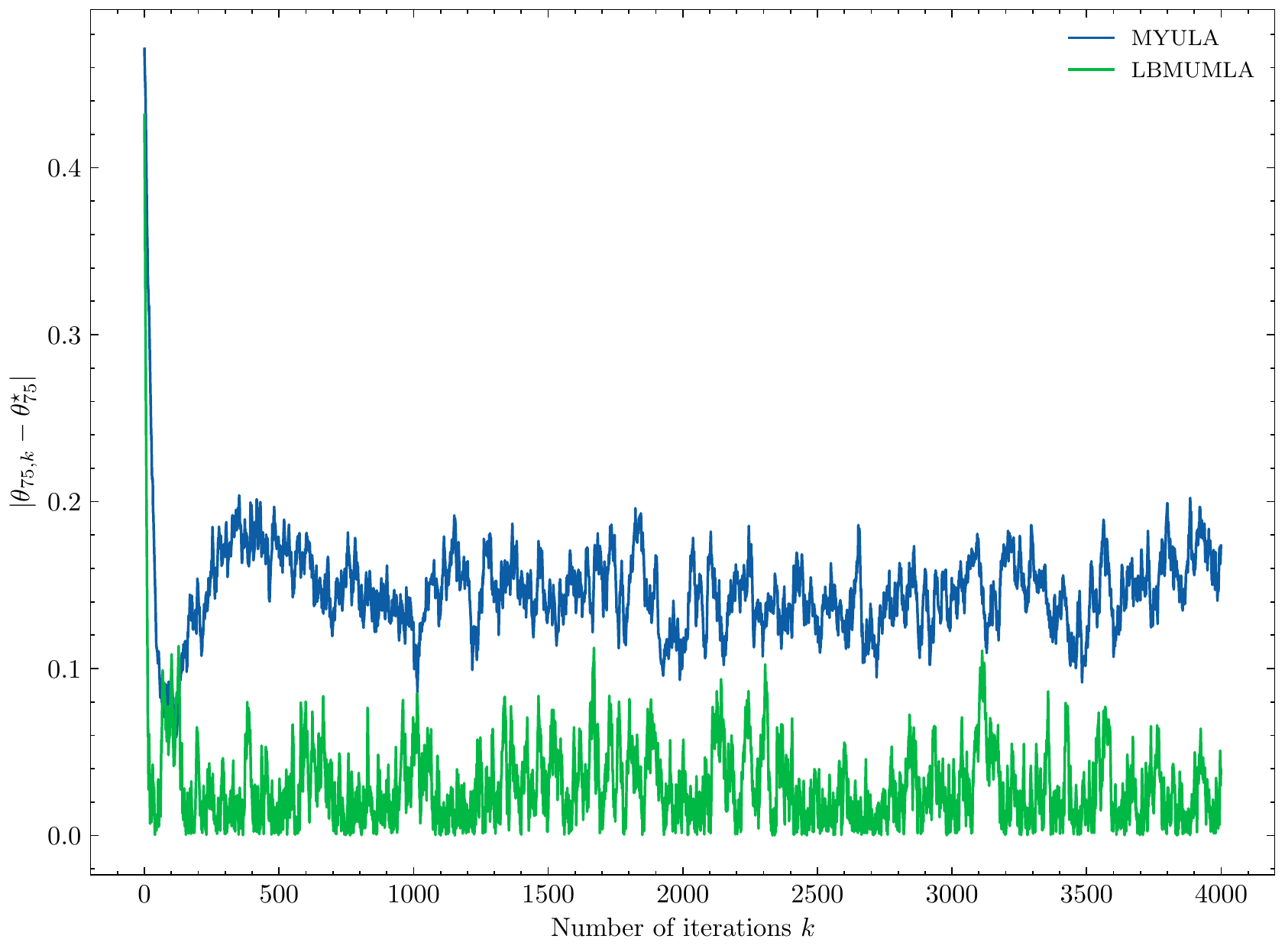}
		\caption{75\textsuperscript{th} dimension}
	\end{subfigure}
	\begin{subfigure}[b]{0.32\textwidth}
		\centering
		\includegraphics[width=\textwidth]{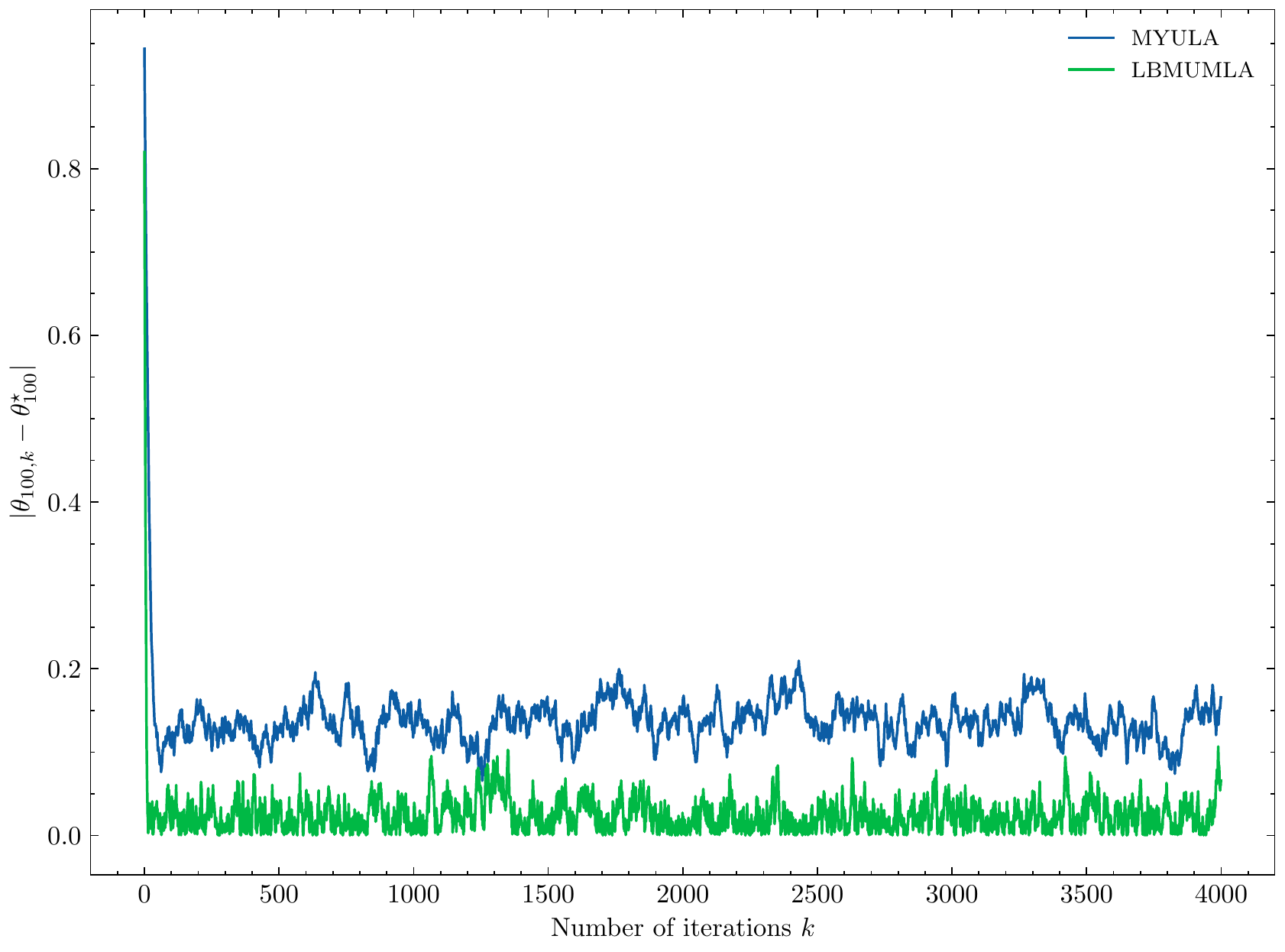}
		\caption{100\textsuperscript{th} dimension}
	\end{subfigure}
	\caption{Plots of estimation errors of the posterior means $|\theta_{k,i} - \theta_i^\star|$ for $i\in\{1,25,50,75,100\}$. }
	\label{fig:logistic_single}
\end{figure}

\end{document}